\documentclass{article}
\usepackage{amsthm}

\usepackage{arxiv}

\usepackage[utf8]{inputenc} 
\usepackage[T1]{fontenc}    
\usepackage{hyperref}       
\usepackage{url}            
\usepackage{booktabs}       
\usepackage{amsfonts}       
\usepackage{nicefrac}       
\usepackage{microtype}      
\usepackage{lipsum}		
\usepackage{graphicx}
\usepackage{natbib}
\usepackage{doi}
\usepackage{algorithm}
\usepackage{algpseudocode}
\usepackage{enumitem}
\usepackage{subfigure}
\usepackage{dsfont}
\usepackage{multirow} 
\usepackage{amsthm}
\usepackage{amsmath}

\usepackage{cleveref}

\usepackage{subcaption}
\usepackage{float} 

\usepackage{siunitx}
\newtheorem{proposition}{Proposition}
\newtheorem{theorem}{Theorem}

\newtheorem{example}{Example}

\newenvironment{pf}{\begin{proof}}{\end{proof}}
\DeclareMathOperator*{\argmin}{arg\,min}

\title{Neural network surrogates with uncertainty quantification for inverse problems in partial differential equations}

\date{} 					

\author{  {Christian Jimenez-Beltran} \\
	School of Mathematics and \\
    Maxwell Institute\\ for Mathematical Sciences\\
        University of Edinburgh\\
	\texttt{s2113174@ed.ac.uk} \\
        \And
        {Aretha L. Teckentrup} \\
	School of Mathematics and \\
    Maxwell Institute\\ for Mathematical Sciences\\
        University of Edinburgh\\
	\texttt{a.teckentrup@ed.ac.uk} \\
        \AND 
	{Antonio Vergari} \\
	School of Informatics\\
        University of Edinburgh\\
	\texttt{avergari@ed.ac.uk} \\
        \And
        {Konstantinos C. Zygalakis}\\
	School of Mathematics and \\
    Maxwell Institute\\ for Mathematical Sciences\\
    University of Edinburgh\\
	\texttt{k.zygalakis@ed.ac.uk} \\
}

\renewcommand{\shorttitle}{\textit{arXiv} Template}

\hypersetup{
pdftitle={A template for the arxiv style},
pdfsubject={q-bio.NC, q-bio.QM},
pdfauthor={David S.~Hippocampus, Elias D.~Striatum},
pdfkeywords={First keyword, Second keyword, More},
}

\begin{document}

\maketitle

\begin{abstract}
Inverse problems for differential equations arise throughout science and engineering, where one seeks to infer unknown model parameters from noisy or incomplete observations. Traditional numerical methods for these problems are often computationally expensive, particularly in Bayesian settings where evaluating the likelihood becomes costly for complex forward models and high-dimensional parameter spaces. To address this challenge, we introduce DeepGaLA, a neural-network surrogate for differential equation solvers that provides uncertainty-aware predictions, reducing overconfident inference when training data are limited. To evaluate the fidelity of the surrogate-induced posterior approximations in practice, we show that a short run of delayed-acceptance Markov chain Monte Carlo can serve as an effective diagnostic. Across a range of numerical experiments, DeepGaLA delivers forward-model approximations with accuracy comparable to established Gaussian-process surrogates, while better maintaining efficiency as parameter dimension grows. Moreover, it can incorporate differential-equation constraints, including in nonlinear settings. Overall, these results indicate that uncertainty-quantified neural surrogates can enable scalable and reliable Bayesian inference for inverse problems in complex systems.
\end{abstract}

\keywords{Inverse Problems\and PDEs \and Surrogates\and Neural Networks  \and  Bayesian \and Uncertainty Quantification  \and Laplace Approximation}

\section{Introduction}

Differential equations (DEs) are fundamental mathematical tools for modeling and analyzing a wide range of phenomena, from biological systems to fluid dynamics. However, modeling real-world phenomena such as weather dynamics or neutron transport in cancer therapy often involves complex, high-dimensional DEs, which require estimating parameters that are not directly observable. To address this challenge, recent approaches employ data-driven frameworks to infer model parameters from observations. In this context, inverse problems play a central role, as they provide a framework to infer parameters from incomplete, indirect, and noisy observations \cite{engl1996regularization,kaipio2005statistical}.  Inverse problems are, however, often ill-posed, meaning that they may admit multiple solutions or that the solution may be highly sensitive to the observational data \cite{engl1996regularization,kaltenbacher2009iterative,Benning_Burger_2018}. A Bayesian framework addresses these challenges by regularizing the problem and incorporating prior information as well as parameter uncertainty. In this setting, the goal is to obtain the posterior distribution, which characterizes the statistical properties of the parameters  conditioned on the available observations \cite{kaipio2005statistical,KAIPIO2007493,stuart_inverse,latz2020}.

Although the Bayesian framework offers significant advantages including uncertainty quantification, the posterior distribution is rarely available in closed form. Consequently, the posterior distribution is typically approximated using either variational or Markov chain Monte Carlo (MCMC) methods. Variational methods involve defining a parameterized family of candidate distributions and selecting the member that best approximates the posterior \cite{jordan1999introduction,blei2017variational}. In contrast, MCMC methods approximate the posterior by generating samples from it \cite{metropolis,hastings,robert1999monte}, which can then be used to estimate expectations and other quantities of interest. However, a key challenge of MCMC methods is that they typically require evaluating the likelihood function at each iteration, often millions of times to ensure the convergence of the MCMC method. Since each likelihood evaluation involves solving the DE numerically, the overall computation can become extremely expensive. 

Approaches such as multilevel (Markov chain) Monte Carlo \cite{mlmc,aretha_mmcmc} and Delayed Acceptance MCMC (DA-MCMC) \cite{delayed_mcmc,multilevel_da} have been developed to reduce the computational cost of posterior sampling. Other approaches such as surrogate models have been studied to enable an efficient evaluation of the likelihood function. Examples of such models include polynomial chaos expansions \cite{XIU2003137,MARZOUK2007560}, sparse grid methods \cite{nobile_sparse,marzouk2009stochastic}, and Gaussian processes \cite{kennedy2001bayesian,sacks1989design,Bai_2024,GP_nonlpde,GP_nonlinear_error}.  More recently, neural networks have proven to be highly effective models for both regression and classification tasks, including image recognition and language processing, among others \cite{ian_deepl,deep_vision,deep_reinforcement}. Building on these advances and the universal approximation capability of neural networks \cite{universal}, deep learning approaches have been studied to introduce a new class of surrogate models that use neural networks to approximate solutions to differential equations, allowing fast and scalable solutions for high-dimensional or computationally expensive problems. 

Deep learning approaches can be broadly divided into three categories. One class consists of data-driven models, such as DeepONet \cite{deeponet}, Fourier Neural Operators (FNO) \cite{fourieroperator}, and Laplace Neural Operators \cite{laplacenop}, which learn mappings between function spaces. These methods have demonstrated strong performance in approximating solutions of parametric PDEs; however, they typically require large amounts of training data, which are often generated using expensive numerical solvers. For comprehensive reviews of neural operator methods, we refer to \cite{noperator_analysis,neuraloperator,cost_accuracy}.

Physics-informed deep models constitute another class of approaches, as they incorporate physical laws directly into the neural network and may not require the explicit solution of the PDE for training. One of the first models introducing this concept is the Deep Galerkin Method (DGM), proposed in \cite{dgm} for approximating solutions of high-dimensional Hamilton–Jacobi–Bellman PDEs and other quasilinear parabolic PDEs. DGM employs a neural network architecture similar to Long Short-Term Memory (LSTM) networks \cite{lstm} and is trained using randomly sampled collocation points in the domain of the PDE to approximate solutions efficiently. Other examples include the popular Physics-Informed Neural Networks (PINNs) \cite{pinns}, which embed the governing physical laws into the loss function, and the Deep Ritz Method (DRM) \cite{deepritz}, which reformulates the PDE as a variational problem to construct the loss function. Compared to DGM, PINNs were originally introduced to solve non-parametric PDEs; however, there has been substantial research extending PINNs to parametric problems and addressing various challenges, including training stability and generalization. For a comprehensive overview, we refer the reader to recent surveys on Physics-Informed Neural Networks, which discuss both their theoretical foundations and practical limitations \cite{pinns_review,DeRyck_Mishra_2024}.

A third class of deep learning approaches for solving PDEs combines neural operators with physics-informed losses. Examples include the Physics‑Informed Neural Operator (PINO) \cite{pino}, which integrates a FNO with a physics-based loss, and the Physics‑Informed DeepONet \cite{pideeponet}, which embeds PDE constraints directly into the DeepONet training process. These methods reduce the dependence on large training datasets while ensuring that the learned solution operators satisfy the underlying physical laws.

Despite the promise of these approaches, training neural networks to achieve a desired level of accuracy remains challenging, as it often requires large amounts of data, and the choice of architecture can significantly affect performance. Moreover, neural networks tend to be overconfident in their predictions \cite{kristiadi2020being}, even when their outputs are far from the ground truth, making them unreliable as surrogate models, particularly at unseen collocation points (i.e., out-of-distribution scenarios). Therefore, to effectively use deep neural networks as surrogates, it is crucial to obtain calibrated uncertainty estimates for the numerical approximation of DEs \cite{hennig2015probabilistic,conrad2017statistical}.

To address these limitations, we propose DeepGaLA, Deep Galerkin via Laplace Approximation, a neural network method for approximating solutions of parametric PDEs while providing uncertainty quantification. Our approach combines the DGM with a Bayesian neural network obtained by randomizing the last layer and applying a Laplace approximation \cite{laplace}.  To the best of our knowledge, this is the first method to integrate Laplace-based uncertainty quantification into the DGM framework for parametric PDEs. The resulting model not only approximates the forward model efficiently but also provides calibrated uncertainty estimates, which are particularly important in limited-data regimes and for Bayesian inverse problems. The main contributions of this work are as follows:

\begin{enumerate}
    \item We provide a natural Bayesian formulation for the deep Galerkin method by reinterpreting the minima achieved by the loss function as the maximum a posteriori (MAP) estimate associated with a Gaussian likelihood function. In our proposed method, DeepGaLA, the corresponding posterior is then approximated using a Laplace approximation, enabling efficient uncertainty estimation for neural network surrogates for partial differential equations. 

    \item Through a series of numerical experiments, we demonstrate (i) the computational efficiency of DeepGaLA and (ii) the importance of  incorporating uncertainty in the surrogate model, particularly when the neural network is trained with limited data. We further show that DeepGaLA achieves comparable performance to popular alternative surrogates based on Gaussian processes, albeit with an improved scaling to high-dimensional parameter spaces and applicability also to non-linear PDEs.

    \item We propose to use a short run of the delayed acceptance MCMC algorithm to evaluate the accuracy of surrogate posteriors when having access to a reference "true" posterior is infeasible, and provide a theoretical analysis to justify this approach.

\end{enumerate}

The rest of the paper is organized as follows. In Section \ref{sec:2}, we introduce the PDE formulation and the inverse problems of interest. In Section \ref{sec:3}, we discuss surrogate models and the different posterior approximations that arise from using random surrogate models. We present the methodology for constructing a neural random surrogate and also review Gaussian Processes and their use as surrogates. Section \ref{sec:4} focuses on MCMC methods used to sample from and assess the quality of posterior approximations.  In Section \ref{sec:5}, we present a number of numerical experiments demonstrating the computational benefits of surrogates, with particular emphasis on neural network surrogates. Finally, Section \ref{sec:6} concludes the paper by summarizing the proposed methodology, discussing the main findings, and outlining potential directions for future work.

\section{Preliminaries}\label{sec:2}

In this section, we set up the differential equations formulation and provide examples of the problems of interest. We also review the Bayesian framework for inverse problems and discuss some of the advantages and disadvantages of this setting.

\subsection{Parametric PDE Formulation}
Consider a bounded domain $D \subset \mathbb{R}^{d_{x}}$ with boundary $\partial D$ and a bounded parameter space $\mathcal{T} \subset \mathbb{R}^{d_{\theta}}$. Let $\mathcal{W}$ and $\mathcal{Y}$ denote Hilbert spaces on $D$, and let $\mathcal{W}_{\partial}$ and $\mathcal{Y}_{\partial}$ denote Hilbert spaces on $\partial D$. A general parametric PDE can then be written as:
\begin{equation}\label{eq:pde}
\begin{split}
    \mathcal{A}_{\theta_{\mathcal{A}}}\left[ u(x,\theta)\right] &= f(x, \theta_{\mathcal{A}}),  \quad x \in D,\quad \theta \in \mathcal{T},\\
    \mathcal{B}_{\theta_{\mathcal{B}}}\left[ u(x,\theta)\right] &= g(x, \theta_{\mathcal{B}}), \quad x \in \partial D, \quad \theta \in \mathcal{T},
\end{split}
\end{equation}
where $u(\cdot,\theta) \in \mathcal{W} \cap \mathcal{W}_{\partial}$ is the solution of the PDE for a given $\theta=(\theta_{\mathcal{A}}, \theta_{\mathcal{B}}) \in \mathcal{T}$, $\mathcal{A}_{\theta_{\mathcal{A}}}: \mathcal{W} \times \mathcal{T} \rightarrow \mathcal{Y}$ is a general differential operator parameterized by $\theta_{\mathcal{A}}$, $\mathcal{B}_{\theta_{\mathcal{B}}}: \mathcal{W}_{\partial} \times \mathcal{T} \rightarrow \mathcal{Y}_{\partial}$ denotes the initial and boundary conditions operator parameterized by $\theta_{\mathcal{B}}$, and $f \in \mathcal{Y}$, $g \in \mathcal{Y}_{\partial}$ are the forcing and boundary data parametrized by $\theta_{\mathcal{A}}$ and $\theta_{\mathcal{B}}$, respectively. Note that $\mathcal{A}$ may be nonlinear, and the input variable $x$ could include time as a component.

\begin{example}
Consider the Darcy flow equation, which is a prototypical second order linear elliptic PDE used to model fluid flow in porous media,
\begin{equation*}\label{eq:pde_example1}
\begin{split}
    -\nabla \cdot \big({\alpha(x,\theta)}\,\nabla u(x,\theta)\big) &= f(x),  
    \quad x \in D,\quad \theta \in \mathcal{T},\\
    u(x,\theta) &= 0, \quad x \in \partial D,
\end{split}
\end{equation*}
with $u(\cdot,\theta) \in H_{0}^{1}(D)$ denoting the pressure field and 
$\alpha(\cdot,\theta) \in L^{\infty}(D)$ the permeability.  The coefficient field $\alpha(x,\theta)$ is typically modeled as a random field and parametrized using a 
Karhunen--Lo\`eve (KL) expansion, 
\begin{equation*}
\log \alpha(x,\theta) = \bar{\alpha}(x) + \sum_{i=1}^\infty \sqrt{\mu_i}\,\phi_i(x)\,\theta_{i},
\quad \theta_{i} \sim \mathcal{N}(0,1),
\end{equation*}
where $\{\mu_i, \phi_i(x)\}_{i=1}^\infty$ are the eigenvalues and eigenfunctions of the covariance operator of the random field. In practice, this infinite sum is truncated to the first $d_{\theta}$ terms. For this example, the operators are 
\begin{equation*}
\mathcal{A}_{\theta_{\mathcal{A}}}[\,\cdot\,] = -\nabla \cdot \big(\alpha(x,\theta)\,\nabla(\cdot)\big) \quad \text{and}\quad \mathcal{B}_{\theta_{\mathcal{B}}}[\,\cdot\,] = I.
\end{equation*}
\end{example}

\begin{example} 
As a prototypical example of a nonlinear PDE, we consider the 2D incompressible Navier–Stokes equations in vorticity–stream function form, which describe viscous incompressible flow:
\begin{equation*}
\begin{split} 
 \frac{\partial w(z,t,\theta)}{\partial t} + U(z,t,\theta) \cdot \nabla w(z,t,\theta) &= \nu \nabla^2 w(z,t,\theta) , \quad (z,t) \in D, \quad \theta \in \mathcal{T},\\
 \nabla^2 \psi(z,t,\theta) &= -w(z,t,\theta), \quad (z,t) \in D, \\
 w(z, 0,\theta) &= w_0(z,\theta), \quad z \in D_{s},
\end{split} 
\end{equation*}
where $D = D_{s} \times (0,T)   \subset \mathbb{R}^{d_{3}}$, $D_{s} \subset \mathbb{R}^{2}$ denotes the spatial domain 
and $T \in \mathbb{R}^{+}$. The term $U(z,t,\theta) = (u_1(z,t,\theta), u_2(z,t,\theta))$ is the velocity, with components derived from the stream function as $u_1(z,t,\theta) = \partial \psi(z,t,\theta) /\partial z_{2}$ and $ u_2 = -\partial \psi(z,t,\theta)/\partial z_{1}$. We additionally enforce periodic boundary conditions $w(\cdot,\theta) \in L^{2}\!\big((0,T); H^{1}_{per}(D_{s})\big)$, and the initial vorticity is modeled through a truncated KL expansion, 
\begin{equation*}
w_{0}(z,\theta) = \bar{w}(z) + \sum_{i=1}^{d_{\theta}} \sqrt{\mu_i}\,\phi_i(z)\,\theta_i, \quad\theta_i \sim \mathcal{N}(0,1),
\end{equation*}
where $\{\mu_i,\phi_i(z)\}_{i=1}^{d_{\theta}}$ are again the eigenvalues and eigenfunction of the covariance operator of the initial conditions. With $x:=(z,t)$, $u(x,\theta):= (w(x,\theta), \psi(x,\theta))$ and $f=g=0$, we then have
\begin{align*} 
 \mathcal A_{\theta_{\mathcal{A}}} [u] &= \begin{bmatrix}
    \mathcal A_{\theta_{\mathcal{A}}}^1[u] \\ 
    \mathcal A_{\theta_{\mathcal{A}}}^2[u] 
 \end{bmatrix}, \quad \textrm{with} \quad 
 \mathcal{A}_{\theta_{\mathcal{A}}}^1[\,u\,]= \partial_tw + U(x,\theta) \cdot \nabla w  - \nu \nabla^2 w, \quad \mathcal{A}_{\theta_{\mathcal{A}}}^2[\,u\,] =\nabla^2 \psi +w,   \\
\mathcal{B}_{\theta_{\mathcal{B}}}[u] &=
\begin{bmatrix} 
\mathcal{I}_{\theta_{\mathcal{B}}}[u]\\
\mathcal{B}_{p,\theta_{\mathcal{B}}}[u]
\end{bmatrix}, \quad \textrm{with} \quad 
\mathcal{I}_{\theta_{\mathcal{B}}}[u] = w(x,\theta)\big|_{t=0} , \quad
\mathcal{B}_{p,\theta_{\mathcal{B}}}[u] \rightarrow  
w(\cdot,t,\theta), \psi(\cdot,t,\theta)  \text{ are periodic on } D_{s}.
\end{align*}
\end{example}

\subsection{Bayesian Inverse Problems}
The objective of inverse problems in differential equations, is to determine the parameters $\theta \in  \mathcal{T} $ given observations $y \in \mathbb{R}^{d_{y}}$ of the model solution subject to noise $\eta$. Typically, one assumes that the data are modeled in the following way:  
\begin{equation}
\label{inv_eq}
y = \mathcal{G}_{\mathrm{X}}(\theta) + \eta,
\end{equation}  
where $\eta \sim \mathcal{N}(0,\Gamma)$, $\Gamma \in \mathbb{R}^{d_{y} \times d_{y}}$, 
and $\mathcal{G}_{\mathrm{X}}: \mathcal{T} \rightarrow \mathbb{R}^{d_{y}}$ is known as the observation operator. For instance, this operator can be defined as $\mathcal{G}_{\mathrm{X}}(\theta) = \{u(x_{i},\theta)\}_{i=1}^{dy}$, mapping the parameter $\theta$ to the solution of the PDE evaluated at the spatial (and possibly temporal) observation points $X=\{x_{1},..,x_{d_{y}}\} \subset \overline{D}$. 

In a Bayesian framework, $\theta$ and $y$ are treated as random variables. Then, the goal is to determine the probability measure of $\theta$ conditioned on $y$, denoted by $\mu^y$, with a density $\pi^{y}(\theta)$. In the absence of data, $\theta$ is distributed according to the prior measure $\mu_0$, which we assume admits a density $\pi_0(\theta)$ with respect to the Lebesgue measure. Then, by Bayes’ formula, we have
\begin{equation} 
\label{bayes}
\pi^{y}(\theta) \propto l(y|\theta)\pi_0(\theta), 
\end{equation}
where we identify $l(y|\theta) \propto  \exp\left(-\frac{1}{2} \| y -\mathcal{G}_{\mathrm{X}}(\theta) \|^{2}_{\Gamma}\right)$ as the data likelihood  describing the probability of obtaining the observations $y$ given a set of parameters $\theta$ and  $\|z\|^{2}_{\Gamma}=z^{T}\Gamma^{-1}z$ is the norm weighted by $\Gamma^{-1}$.
The Bayesian framework provides a principled form of regularization via the prior distribution and under suitable regularity conditions the Bayesian inverse problem is well-posed, admitting a unique posterior measure that depends continuously on the data in the Hellinger distance (see e.g. \cite{stuart_inverse,latz2020}).

A major challenge of the Bayesian framework is that one often relies on Markov Chain Monte Carlo (MCMC) methods to compute quantities of interest, such as expectations, due to the lack of a closed-form expression for $\pi ^{y}(\theta)$. Typically, MCMC algorithms require millions of iterations to obtain a good approximation of the posterior distribution, which in turn requires evaluating the likelihood millions of times. Since each likelihood evaluation involves numerically solving a PDE, the overall computational cost can become prohibitively high, particularly for high-dimensional problems.

\section{Surrogate Models In Bayesian Inverse Problems}\label{sec:3}

This section introduces surrogate models as a means of reducing the high computational cost associated with evaluating the likelihood in the Bayesian inverse problem setting. We begin by defining surrogate models and discussing how they can be employed to approximate the posterior distribution. A central part of this section focuses on neural network–based surrogates \cite{ian_deepl,universal, pinn_error,dgm_tony}: we first present the fundamentals of neural networks and explain how they can be applied to approximate solutions of parametric PDEs. We then introduce a framework for constructing neural network surrogates equipped with uncertainty quantification. The section concludes with a review of Gaussian processes, a class of models that inherently provide uncertainty estimates alongside their predictions and have been shown to be effective surrogates in a variety of settings. These will be used as a benchmark for comparison in our numerical examples.

\subsection{Surrogate models and approximate posteriors}
Surrogates, or emulators, are models designed to accelerate the evaluation of computationally expensive models; in our case, they are used to accelerate the forward model $\mathcal{G}_{\mathrm{X}}$. Instead of repeatedly solving the full numerical model, surrogate approaches approximate the input–output map of the forward model. Once constructed, the surrogate can be evaluated at a significantly lower computational cost, making Bayesian inference feasible. Conventional surrogates typically provide only deterministic predictions, whereas random surrogates also quantify uncertainty, which is essential to assess the reliability of the model. A common example is Gaussian Process (GP) regression, a random surrogate that has been extensively studied in terms of performance and the error induced by its use \cite{stein1999interpolation,kennedy2001bayesian,sacks1989design, aretha_posterior, kostas_aretha}.  

Suppose we employ a random surrogate, characterized by a discretization parameter or training parameter $\mathrm{N}$, and let $\mathcal{G}_{\mathrm{X}}^{\mathrm{S,N}}(\theta) \sim \mu^{\mathrm{S,N}}$ denote the corresponding approximation to the observation operator of a surrogate $S$ distributed according to a measure $\mu^{\mathrm{S,N}}$, where $X$ is fixed. The work of \cite{Lie_2018, aretha_posterior} introduces two approaches to approximate the posterior distribution  \eqref{bayes} when employing a random surrogate: the \textit{mean} and \textit{marginal} approximations. The \textit{mean-based approximation} of the posterior distribution is obtained using only the expected value of the random surrogate $\mathbb{E}_{\mu^{\mathrm{S,N}}}[\mathcal{G}_{\mathrm{X}}^{\mathrm{S,N}}(\theta)]$. This yields the following expression:
\begin{equation}
\label{eq:mean-approx}
\begin{split}
    \pi^{y,\mathcal{G}_{\mathrm{X}}^{\mathrm{S,N}}}_{\mathrm{mean}}(\theta) &= \frac{1}{Z_\mathrm{mean}^{y,\mathcal{G}_{\mathrm{X}}^{\mathrm{S,N}}}} \exp\left(-\frac{1}{2} \| y -\mathbb{E}_{\mu^{\mathrm{S,N}}}[\mathcal{G}_{\mathrm{X}}^{\mathrm{S,N}}(\theta)] \|^{2}_{\Gamma}\right)\pi_{0}(\theta),\\
    Z_\mathrm{mean}^{y,\mathcal{G}_{\mathrm{X}}^{\mathrm{S,N}}}& = \mathbb{E}_{\mu_{0}}\left[ \exp\left(-\frac{1}{2} \| y -\mathbb{E}_{\mu^{\mathrm{S,N}}}[\mathcal{G}_{\mathrm{X}}^{\mathrm{S,N}}(\theta)] \|^{2}_{\Gamma}\right) \right].
\end{split}
\end{equation}
The \textit{marginal approximation}, in contrast, directly uses the random surrogate in the likelihood and then takes the expectation of the resulting expression. This leads to:
\begin{equation}
\label{eq:marginal-approx}
\begin{split}
 \pi^{y,\mathcal{G}_{\mathrm{X}}^{\mathrm{S,N}}}_{\mathrm{marginal}}(\theta) &= \frac{1}{Z_\mathrm{marginal}^{y,\mathcal{G}_{\mathrm{X}}^{\mathrm{S,N}}}} \mathbb{E}_{\mu^{\mathrm{S,N}}} \left[ \exp\left(-\frac{1}{2} \| y -\mathcal{G}_{\mathrm{X}}^{\mathrm{S,N}}(\theta) \|^{2}_{\Gamma}\right) \right]\pi_{0}(\theta), \\
 Z_\mathrm{marginal}^{y,\mathcal{G}_{\mathrm{X}}^{\mathrm{S,N}}} &= \mathbb{E}_{\mu_{0}} \left[ \mathbb{E}_{\mu^{\mathrm{S,N}}} \left[ \exp\left(-\frac{1}{2} \| y -\mathcal{G}_{\mathrm{X}}^{\mathrm{S,N}}(\theta) \|^{2}_{\Gamma}\right)\right] \right].   
\end{split}
\end{equation}
Surrogates are often constructed from a limited computational resources, which can introduce uncertainty or limit their accuracy. Taking into account this uncertainty is therefore essential. In this context, the marginal approximation offers a key advantage: it incorporates the uncertainty of the surrogate model, thereby improving robustness in scenarios where the emulator may yield inaccurate approximations \cite{conrad2017statistical,Bai_2024}.

In the particular case where the random surrogate is Gaussian, $\mathcal{G}_{\mathrm{X}}^{\mathrm{S,N}}(\theta) \sim \mathcal{N}(m_{\mathrm{S,N}}^{\mathcal{G}}(\theta), \Gamma_{\mathrm{S,N}}(\theta,\theta'))$, with $m_{\mathrm{S,N}}^{\mathcal{G}}(\theta) \in \mathbb{R}^{d_{y}}$ and $\Gamma_{\mathrm{S,N}}(\theta,\theta') \in \mathbb{R}^{d_{y} \times d_{y}}$, the equations \eqref{eq:mean-approx} and \eqref{eq:marginal-approx} can be  simplified to:
\begin{equation}
\label{eq:mean-approx_gauss}
\pi^{y,\mathcal{G}_{\mathrm{X}}^{\mathrm{S,N}}}_{\mathrm{mean}}(\theta)  \propto  \exp\left(-\frac{1}{2} \| y - m_{\mathrm{S,N}}^{\mathcal{G}}(\theta) \|^{2}_{\Gamma}\right)\pi_{0}(\theta),
\end{equation}
\begin{equation}
\label{eq:marginal-approx_gauss}
\pi^{y,\mathcal{G}_{\mathrm{X}}^{\mathrm{S,N}}}_{\mathrm{marginal}}(\theta) \propto \frac{1}{\sqrt{\det(\Gamma + \Gamma_{\mathrm{S,N}}(\theta,\theta))}}  
\exp\left(-\frac{1}{2} \| y - m_{\mathrm{S,N}}^{\mathcal{G}}(\theta) \|^{2}_{\Gamma + \Gamma_{\mathrm{S,N}}(\theta,\theta)}\right)\pi_{0}(\theta).
\end{equation}
From these expressions, we observe that the marginal approximation accounts for the surrogate’s uncertainty. As the surrogate becomes more confident in its predictions, i.e., as $\Gamma_{\mathrm{S,N}}$ decreases, the marginal approximation approaches the mean approximation, particularly when $\Gamma_{\mathrm{S,N}} \ll \Gamma$ \cite{Bai_2024}. Furthermore, both the mean and marginal approximations under suitable assumptions converge to the true posterior as $N \rightarrow \infty$ \cite{aretha_posterior,kostas_aretha}.

\subsection{Neural Networks}
 
We are interested in building a neural network surrogate that approximates the observation operator, $\mathcal{G}_X$. First, we define a neural network following the presentation of \cite{modern_dp,approximation_dnn}. A feedforward neural network is a function $f_{a}(\cdot;\mathrm{W}): \mathbb{R}^{d_{0}} \rightarrow \mathbb{R}^{d_{L}}$
characterized by its architecture $a$  and its weights or parameters $\mathrm{W}$. The architecture $a = (N_{n}, \sigma)$ 
is characterized by the number of neurons per layer $N_{n} = (d_{0}, \dots, d_{L}) \in \mathbb{N}^{L+1},$
where $L$ is the number of layers, and by an activation function $\sigma: \mathbb{R} \rightarrow \mathbb{R}$. Let $C_{k}(z) = \mathrm{W}_{k} z + b_{k}$ denote a linear transformation, where $\mathrm{W}_{k} \in \mathbb{R}^{d_{k} \times d_{k-1}},
b_{k} \in \mathbb{R}^{d_{k}}$. Denote the operation $\sigma( C_k(x))= \big(\sigma((C_k)_1(x)), \ldots, \sigma((C_k)_{d_{k}}(x))\big)$ as the application of the activation function component-wise and $\Phi^{(0)}(x;\mathrm{W})=x$ the network input. Then a feedforward neural network with architecture $a$ and weights 
$\mathrm{W} = \{(\mathrm{W}_{i}, b_{i})\}_{i=1}^{L}
$ has the following composition rule:
\begin{equation}  \label{eq:nn_composition}
\begin{split}
f_a(x;\mathrm{W})& = C_{L}(\Phi^{(L-1)}(x;\mathrm{W})),\\
\Phi^{(l+1)}(x;\mathrm{W}) &= \sigma\big(C_{\ell+1}(\Phi^{(l)}(x;\mathrm{W}))\big),
\quad l = 0, \dots, L-2.  
\end{split}
\end{equation}
The input layer for the neural network corresponds to $\Phi^{(1)}$, while the hidden layers are represented by the intermediate functions $\Phi^{(l+1)}$ for $l = 1,\dots,L-2$. . The most commonly used activation functions are the hyperbolic tangent and the rectified linear unit (ReLU), defined as $\sigma(x) = \tanh(x)$ and $\sigma(x) = \max(0,x)$, respectively \cite{ian_deepl, pinns_review}. 
 
Given  a training dataset $\mathcal{D} =\{(x_i, y_i)\}_{i=1}^{N}$, where $x_i \in \mathbb{R}^{d_0}$ and $y_i \in \mathbb{R}^{d_L}$, the goal of deep learning is to find the optimal weights $\mathrm{W}^*$ of a neural network $f_{a}(\cdot;\mathrm{W})$ such that the following empirical loss (or empirical risk) is minimized: 
\begin{equation*}
     \mathrm{W}^* = \argmin_{\mathrm{W}} \mathcal{L}(\mathcal{D};\mathrm{W}) = 
     \argmin_{\mathrm{W}}  \frac{1}{N}\sum_{i=1}^{N} \ell\left(f_{a}(x_i;\mathrm{W}),y_i \right).
\end{equation*}
Here $\ell : \mathbb{R}^{d_L} \times \mathbb{R}^{d_L} \rightarrow \mathbb{R}$ denotes the loss function, which measures the error error between the prediction $f_{a}(x_i;\mathrm{W})$ and the true value $y_{i}$. 
Typical choices include the mean squared error (MSE) for regression, $\ell\big(f_{a}(x_i;\mathrm{W}), y_i\big) 
= \left\|f_{a}(x_i;\mathrm{W}) - y_i \right\|_2^2,$
and the cross-entropy loss for classification tasks \cite{ian_deepl}. 

The minimization problem is typically solved using optimization techniques such as gradient-based methods. Once the minimization has been carried out, $f_{\mathrm{W}^{*}}^{a}$ has been trained. Despite the success of neural networks, it is still unclear how to design a neural network that guarantees achieving a prescribed error level. Several works have focused on improving our understanding of neural networks and on establishing approximation error bounds. Some works in this direction include \cite{tanhnnapprox}, which studied the approximation error of two-layer neural networks with hyperbolic tangent activation functions, \cite{relu_fem}, which investigated the connection between ReLU activation functions and the finite element method (FEM), and \cite{sharp_nn}, in which theoretical results concerning the approximation error bounds of shallow networks were established.

In particular, the work of \cite{rates_nn_jakob} presents rates of convergence for ReLU neural networks for a specific class of functions. Specifically, they derive these rates by first showing that Hermite polynomials can be approximated by ReLU neural networks, 
and then by showing that functions which can be expanded in terms of Hermite polynomials can also be approximated by ReLU neural networks. This result demonstrates the potential capabilities of neural networks; however, it remains unclear how to construct and train such networks to guarantee a certain level of accuracy. An informal version of Theorem 4.7 of \cite{rates_nn_jakob}, which applies to ReLU activation functions and approximation of functions $f$ of $d_\theta < \infty$ parameters, is stated as follows:

\begin{proposition}
\label{prop:nn_rates}
Let $f: \mathcal T \rightarrow \mathbb{R}$ be sufficiently regular. Then there exist constants $C_1, C_2$, depending on $f$ and $d_{\theta}$, such that for every $M \in \mathbb{N}$ there exists a feedforward neural network $f_{a}(\cdot;\mathrm{W})$ with architecture $a=(N_{n},  \sigma)$ and $\sigma= \max(0,x)$ satisfying  
\begin{equation*}
   \|f- f_{a}(\cdot;\mathrm{W})\|_{L^{2}({\mathcal T})}  \leq C_1 \exp \left(-2^{-\frac{1}{2}}C_2 M^{\frac{1}{2d_{\theta}+7}}\right), 
\end{equation*}
and 
\begin{equation*}
    \sum_{i=1 }^{L}d_{i}d_{i-1} +d_{i}\leq CM\left(1+ \log(M) \right), \quad L\leq CM^{\frac{3}{2d_{\theta}+7}}\left(1+\log(M)\right)^{2}.
\end{equation*}
\end{proposition}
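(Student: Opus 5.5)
Since this is an informal restatement of Theorem 4.7 of \cite{rates_nn_jakob}, the natural plan is to reproduce the two-stage argument sketched above: first approximate $f$ by a finite Hermite (or, on the bounded set $\mathcal T$, suitably weighted polynomial) expansion with exponentially decaying truncation error, and then emulate each retained polynomial by a ReLU network with exponentially small error. We interpret ``sufficiently regular'' as $f$ admitting a holomorphic extension to a complex neighbourhood (a polyellipse or strip) of $\mathcal T$. This assumption guarantees that the expansion coefficients $c_\nu$ of $f$ satisfy $|c_\nu|\le C\rho^{-\nu}$ for a polyradius $\rho>1$. The multi-indices $\nu\in\mathbb N^{d_\theta}$ will be selected by the largest coefficients, so that a set $\Lambda_m$ of cardinality $m$ gives an $L^2$ truncation error of order $\exp(-c\, m^{1/d_\theta})$. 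This is the standard best-$m$-term estimate for analytic functions, obtained by counting lattice points in a simplex.

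The second step is the ReLU emulation. We use Yarotsky's sawtooth construction to approximate $x\mapsto x^2$ on a bounded interval to accuracy $\delta$ with depth and size $O(\log(1/\delta))$. Polarisation then gives products, and iterated products give monomials and univariate polynomials of degree $k$ (and hence Hermite polynomials, with their coefficients) with size polynomial in $k$ and $\log(1/\delta)$. Tensor-product polynomials $H_\nu(\theta)=\prod_j H_{\nu_j}(\theta_j)$ are realised by a binary tree of approximate multiplications. The network for $f$ places these sub-networks in parallel and takes a final linear combination with weights $c_\nu$. We must track the error in $L^2(\mathcal T)$: since $\mathcal T$ is bounded, uniform bounds suffice, and the error propagates through multiplications with constants growing at most like $\max|H_\nu|$ on $\mathcal T$, which is polynomial in degree. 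Choosing $\delta\sim \exp(-c\,m^{1/d_\theta})$ balances the two errors.

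Finally, we express everything in terms of the parameter budget $M$. The total size is roughly $m\cdot(\text{max degree})^{a}\cdot\log(1/\delta)^{b}$, where the maximal degree in $\Lambda_m$ is $O(m^{1/d_\theta})$. Setting $M\simeq m\,\mathrm{poly}(m^{1/d_\theta})$ and inverting yields $m^{1/d_\theta}\gtrsim M^{1/(2d_\theta+7)}$ up to constants, which gives the rate $\exp(-2^{-1/2}C_2 M^{1/(2d_\theta+7)})$. The depth is governed by the deepest product tree, of order (degree)$\cdot\log(1/\delta)\sim M^{3/(2d_\theta+7)}(1+\log M)^2$. The main obstacle is this bookkeeping step: the exact exponent $2d_\theta+7$ depends on the precise polynomial cost of emulating high-degree Hermite products and on how the constants of the analytic coefficient decay enter. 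I would first take the worst-case degree for every index (a crude but uniform bound), check that the exponents match those in the statement, and only refine via anisotropic index sets if they do not.
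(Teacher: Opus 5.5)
The paper gives no argument of its own here. It cites Theorem 4.7 of \cite{rates_nn_jakob}, which is the Hermite-based two-stage scheme you sketch. Your architecture is right, but your first step contains a claim that is false in that setting, and it derails the rest.

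Holomorphy in a strip $\{|\operatorname{Im} z_j|<\tau\}$ does \emph{not} give geometric decay $|c_\nu|\le C\rho^{-\nu}$ of Hermite coefficients. Hille-type estimates give only $|c_\nu|\le C\exp(-\beta\sum_j\sqrt{\nu_j})$, and this cannot be improved in general; already $f(x)=(1+x^2)^{-1}$ has coefficients decaying like $\exp(-c\sqrt{n})$. The natural index sets are then $\Lambda_t=\{\nu:\sum_j\sqrt{\nu_j}\le t\}$, with $\#\Lambda_t\sim t^{2d_\theta}$ and coordinate degrees up to $t^2$. So the best $m$-term error is $\exp(-c\,m^{1/(2d_\theta)})$, not $\exp(-c\,m^{1/d_\theta})$.

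This is the natural source of the $2d_\theta$ in $2d_\theta+7$. With $t\sim M^{1/(2d_\theta+7)}$:
\begin{itemize}
\item the stated rate is $\exp(-ct)$;
\item the stated depth $t^3$ is consistent with your ``degree $\times\log(1/\delta)$'' heuristic, taking degree $t^2$ and $\log(1/\delta)\sim t$;
\item the size is $\#\Lambda_t$ times a polynomial-in-$t$ emulation overhead.
\end{itemize}
Under your claimed decay, a $d_\theta$-independent emulation cost gives $M\sim(m^{1/d_\theta})^{d_\theta+c}$. So no bookkeeping or anisotropic refinement will ever ``match'' $2d_\theta+7$; the obstacle you identify is misdiagnosed. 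Note also that Hermite coefficients need $f$ on all of $\mathbb{R}^{d_\theta}$ with a growth bound. Holomorphy near $\mathcal T$ alone does not even define that route.

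The decay $|c_\nu|\le C\rho^{-\nu}$ is correct for Legendre (or Chebyshev) coefficients of $f$ holomorphic on a Bernstein polyellipse around a box containing $\mathcal T$. Since the statement only measures the error in $L^2(\mathcal T)$ with $\mathcal T$ bounded, this is a legitimate alternative route, and it yields the stronger rate $\exp(-\beta N^{1/(d_\theta+1)})$ (Opschoor--Schwab--Zech). But you must commit to it (Legendre, not Hermite) and finish differently. Fix the truncation level $t=M^{1/(2d_\theta+7)}$, i.e.\ deliberately use a network smaller than the budget. Then the size is $t^{d_\theta+c}\le M$, the error is $\exp(-ct)$, and your depth estimate $t^2$ is at most $M^{3/(2d_\theta+7)}$. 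Without this downscaling, your depth $\sim m^{2/d_\theta}\sim M^{2/(d_\theta+c)}$ generally exceeds the required $M^{3/(2d_\theta+7)}(1+\log M)^2$.

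Either repair closes the gap. As written, the proposal mixes the two expansions and rests on a decay estimate that fails for the one the statement refers to.
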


In the following section, we review a method that employs neural networks to approximate solutions of parametric PDEs and demonstrate how this methodology can be used to construct a random surrogate model.

\subsubsection{Neural Parametric PDE Solver}\label{subsubsec:neuralPDE}

The Deep Galerkin Method (DGM) was introduced in \cite{dgm} as a mesh-free approach for approximating solutions to parametric PDEs, enabling efficient scaling to high-dimensional problems in space and parameter variable. The original work demonstrated the method's capabilities on high-dimensional Hamilton–Jacobi–Bellman PDEs and other quasilinear parabolic PDEs, and its effectiveness as a surrogate for solving inverse problems was later shown in \cite{dgm_tony}. Next, we will review this methodology.

Consider the problem given by equation \eqref{eq:pde}, and a neural network 
$f_{a}(\cdot;\mathrm{W}): \bar{D} \times \mathcal{T} \rightarrow \mathcal{W} \cap \mathcal{W}_{\partial}$. The training dataset is $\mathcal{D} = \mathcal{D}_{\mathcal A} \cup \mathcal{D}_{\mathcal B} = \left\{ \mathcal{D}_{\mathcal A,i} \right\}_{i=1}^{N_{\mathcal A}} \cup \left\{ \mathcal{D}_{\mathcal B,i} \right\}_{i=1}^{N_{\mathcal B}}$ with collocation points $\mathcal{D}_{\mathcal A,i} = ( x_{i}, \theta_{i} )$ and 
$\mathcal{D}_{\mathcal B,i} = ( x^{\partial}_{i}, \theta_{i} )$ where 
$x_{i} \sim \pi_{D}$, $x^{\partial}_{i} \sim \pi_{\partial D}$, and 
$\theta_{i} \sim \pi_{\mathcal{T}}$, independently. Here, $\pi_{D}$, 
$\pi_{\partial D}$, and $\pi_{\mathcal{T}}$ are probability densities 
defined on $D$, $\partial D$, and $\mathcal{T}$, respectively. We then define the training loss as:
\begin{equation}\label{eq:loss_pde}
    \begin{split}
    &\mathcal{L}(\mathcal{D};\mathrm{W})=  \lambda_{\mathcal A}\mathcal{L}_{\mathcal A}( \mathcal{D}_{\mathcal A};\mathrm{W})+  \lambda_{\mathcal B}\mathcal{L}_{\mathcal B}(\mathcal{D}_{\mathcal B};\mathrm{W}),\\
     &\mathcal{L}_{\mathcal A}( \mathcal{D}_{\mathcal A};\mathrm{W}) =\frac{1}{N_{\mathcal A}}\sum^{N_{\mathcal A}}_{i = 1}\Big| \mathcal{A}_{\theta_{\mathcal A,i}}\left[f_{a}(\mathcal{D}_{\mathcal A,i};\mathrm{W})\right] -f(x_{i},\theta_{\mathcal A,i})\Big|^{2}, \\
     &\mathcal{L}_{\mathcal B}(\mathcal{D}_{\mathcal B};\mathrm{W}) = \frac{1}{N_{\mathcal B}}\sum^{N_{\mathcal B}}_{i = 1}   \Big|\mathcal{B}_{\theta_{\mathcal B,i}} \left[f_{a}(\mathcal{D}_{\mathcal B,i};\mathrm{W})\right] -g(x^{\partial}_{i},\theta_{\mathcal B,i})\Big|^{2}.
    \end{split}
\end{equation}
where $\mathcal{L}_{\mathcal A}( \mathcal{D}_{\mathcal A};\mathrm{W})$ measures the error in the approximation of the differential operator, and $\mathcal{L}_{\mathcal B}(\mathcal{D}_{\mathcal B};W)$ measures the error in the boundary and/or initial conditions. The terms $\lambda_{\mathcal A}$ and $\lambda_{\mathcal B}$ are weighting factors that adjust the contributions of different loss terms, accounting for the fact that they may be on different scales. 

In the case of time-dependent PDEs, it has been noted the loss defined in \eqref{eq:loss_pde} can lead to violations of temporal causality. Therefore, \cite{temporal_causality} recommends partitioning the temporal domain into $M$ segments and instead defining $\mathcal{L}_{\mathcal A}$ as follows:
\begin{equation*}
    \mathcal{L}_{\mathcal A}( \mathcal{D}_{\mathcal A};\mathrm{W}) =  \frac{1}{M}\sum\nolimits^{M}_{i = 1} w_{i}\mathcal{L}_{\mathcal A}(\mathcal D_{\mathcal A}^i;W),\quad
   w_{i} = \exp\left(-\sum_{l=1}^{i-1} \mathcal{L}_{\mathcal A}(\mathcal D_{\mathcal A}^l; W) \right), \quad \text{for} \quad i=2,\ldots,M. 
\end{equation*}
Here, $\mathcal D_{\mathcal A}^i \subset \mathcal D_{\mathcal A}$ contains all the collocation points in the $i$th time segment. In doing so, the neural network is encouraged to learn the solution of the PDE progressively along the time axis, since $\mathcal{L}_{\mathcal A}^{i}$ will not be minimized unless the preceding losses $\{\mathcal{L}_{\mathcal A}^{k}\}_{k=1}^{i-1}$ are sufficiently small to ensure that $w_{i}$ is large.           

\begin{algorithm}[ht]
\caption{Training Pipeline of Deep Galerkin Method}\label{algo1}
\textbf{Input: } Neural network $f_{a}(\cdot;\mathrm{W})$, data training $\mathcal{D}$, initial weights $\{\lambda_{i}\}_{i=1}^K,   \{w_i\}_{i=1}^M$ set to one, learning rate $\gamma$, weight update frequency $l$, and total iterations $S$.

\textbf{Output: } Trained network $f_{a}(\cdot;\mathrm{W}^*)$ with optimal weights $\mathrm{W}^*$.

\begin{enumerate}
    \item Similar to \eqref{eq:loss_pde}, consider a loss function with $K$ components as $\mathcal{L}(\mathcal{D};\mathrm{W} )= \sum_{i=1}^{K}\lambda_{i} \mathcal{L}(\mathcal{D}_{i};\mathrm{W} )$ 
    
    \item \textbf{for} $n = 1$ \textbf{to} $S$ \textbf{do}
    \begin{enumerate}

        \item Evaluate $\mathcal{L}_i(D_i, \mathrm{W})$ for $i=\{1,\ldots, K \}$.
        
        \item \textbf{if} $n \mod l = 0$ \textbf{then}
        \begin{enumerate}
            \item Compute adaptive loss weights:
            \begin{equation*}
               \lambda_i = \frac{1}{K} \cdot \frac{\sum_{j}^{K} \|\nabla_\mathrm{W} \mathcal{L}_{j}\|_2}{\|\nabla_\mathrm{W} \mathcal{L}_i\|_2}, \quad \text{for }i =\{1,\ldots, K \}. 
            \end{equation*}

            \item Update $\lambda = (\lambda_i, \ldots ,\lambda_K)$ using a moving average:
            \begin{equation*}
               \lambda_{\text{new}} = \alpha \lambda_{\text{old}} + (1-\alpha) \hat{\lambda}_{\text{new}}. 
            \end{equation*}
        \end{enumerate}
        
        \item Update network weights $\mathrm{W}$ via gradient descent with learning rate $\gamma$.
    \end{enumerate}
\end{enumerate}
\end{algorithm}

In general, we are interested in determining the optimal values of the neural weights, $\mathrm{W}^{*}$, such that $f_{a}((x,\theta);\mathrm{W}^{*}) \approx u(x,\theta)$. This is done by minimizing \eqref{eq:loss_pde} using a stochastic gradient-based method \cite{kingma2015adam}. In this work, we combine the algorithm proposed by \cite{dgm} with the approach of \cite{guide_pinn}. The latter proposes a training pipeline for PINNs that addresses issues such as unbalanced backpropagation and causality violation.

 \cref{algo1} presents the methodology for training the neural network used in this work when the loss function consists of $K$ different components, which may include boundary conditions, initial conditions, source terms, or multiple differential operators. In this work, we fixed the dataset from the beginning, as we observed that this improves the accuracy of the neural network in our experiments. However, for very high-dimensional problems, this approach may become problematic, and it can be more effective to resample the data at each iteration of the algorithm as in the original pipeline of the DGM. The loss weights are adaptively updated to ensure that the different components of the loss function evolve at similar rates, preventing the optimization process from being biased toward any specific term.

\subsubsection{Neural Network Architecture}

The architecture, $a$, used in this work to define the mapping $f_{a}^\mathrm{ENC}(\cdot;\mathrm{W})$, 
is a modification of the feed-forward neural network inspired by the neural attention mechanism and it
was proposed by \cite{guide_pinn}. It consists of two encoders that project the inputs into higher dimensions, and these encoders then modify the hidden layers through pointwise multiplication in a residual connection style. Let us consider two encoders $V, U: \bar{D} \times \mathcal{T} \rightarrow \mathbb{R}^{d_{n}}$ and define the following operation:
\begin{equation}\label{eq:modiact}
    \tilde{\sigma}_{\sigma}(y) = \sigma(y) \odot U(z) + (1 - \sigma(y)) \odot V(z),
    \quad V(z) = \sigma  (C_{V}(z)), \quad U(z) = \sigma (C_{U}(z)), 
\end{equation}
where $z = (x, \theta)$, $\odot$ denotes element-wise multiplication and the subscript in $ \tilde{\sigma}_{\sigma}
$ indicates the activation function used; for example, $\tilde{\sigma}_{\tanh}$ means that $\sigma(\cdot) = \tanh(\cdot)$ is applied in the operation. Then, the mapping $f_{a}^\mathrm{ENC}(\cdot;\mathrm{W})$ is defined by an architecture specified through $a = (N_{n}, \tilde{\sigma}_{\sigma})$, following a composition rule analogous to that presented in \eqref{eq:nn_composition}, where the network dimensions are chosen as $N_{n} = (d_{x}+d_{\theta}, d_{n}, \ldots, d_{n}, d_{L}) \in \mathbb{N}^{L+1}$ and the activation function is fixed as $\sigma(\cdot) = \tanh(\cdot)$. The trainable parameters are collected in $\mathrm{W} = \{(\mathrm{W}_{V}, b_{V}), (\mathrm{W}_{U}, b_{U}), (\mathrm{W}_{l}, b_{l})_{l=1}^{L}\}$.

It has been observed that deep neural networks tend to learn low-frequency components of a target function first, a phenomenon known as spectral bias. This was noted in the work of \cite{spectral_bias}, where the authors analyzed ReLU neural networks using Fourier analysis. The problem was later revisited through the framework of the Neural Tangent Kernel (NTK), a tool for analyzing the training dynamics of neural networks \cite{ntk}. Spectral bias is also present in vanilla PINNs; therefore, to mitigate this issue, \cite{fourier_embeddings,eigenbias, guide_pinn} proposed adding random Fourier feature embeddings before passing inputs through the architecture described above. This embedding $\gamma: \mathbb{R}^{d_{x}+d_{\theta}} \rightarrow \mathbb{R}^{d_{F}+d_{\theta}}$ is defined as:
\begin{equation*}
    \gamma(z) = [\cos(Bx),\sin(Bx),\theta],
\end{equation*}
where $B \in \mathbb{R}^{\frac{d_{F}}{2} \times d_{x}}$ is sampled from a Gaussian distribution $\mathcal{N}(0, \sigma_{FF}^{2})$, and it is recommended that $\sigma_{FF}^{2} \in [1, 10]$. Note that this embedding is applied to the spatio-temporal collocations only. Therefore, when using these embeddings, the neural network with Fourier Embeddings is $f_{a}^\mathrm{FF}(z;\mathrm{W}) 
= f_{a}^\mathrm{ENC}(\gamma(z);\mathrm{W})$.

Although the training process and neural network architecture described above is effective, neural networks typically need large amounts of data to perform well and are prone to overconfidence in their predictions, even when those predictions are far from accurate. This issue becomes especially critical when neural networks are employed as surrogate models, particularly in cases involving out-of-distribution inputs. Therefore, obtaining well-calibrated uncertainty estimates is essential, as they offer a key way to recognize when the predictions of neural networks may not be reliable. In the following section, we present a methodology that enables neural networks to incorporate uncertainty through a fast approximation, allowing us to capture meaningful uncertainty in out of domain interpolation of the PDE solution.

\subsubsection{DeepGaLA}

Uncertainty quantification (UQ) has been extensively studied for neural networks used as function approximators \cite{nn_bayesian,nn_bayesian2}. More recently, research has also explored UQ for neural PDE solvers, such as Physics-Informed Neural Networks (PINNs) and DeepONets \cite{UQPINNS}. In this section, we introduce DeepGaLA, a random neural surrogate that combines the DGM methodology with a Laplace Approximation to efficiently approximate the forward model, $\mathcal{G}_{\mathrm{X}}$, while providing uncertainty estimates.

To equip the neural network with uncertainty estimation, we adopt a Bayesian approach for the weights $\mathrm{W}$ in $f_{a}(\cdot;\mathrm{W})$. Our objective is then to compute the posterior distribution $p(\mathrm{W} | \mathcal{D})$. This posterior is conditioned on $\mathcal{A}$ and $\mathcal{B}$ from
\eqref{eq:pde}, as these terms define the likelihood. The dataset $\mathcal D$ consists of collocation points sampled from the distributions $\pi_{D}, \pi_{\partial D}$, and $\pi_{\mathcal{T}}$ as in section \ref{subsubsec:neuralPDE}. We interpret the loss in
\eqref{eq:loss_pde} as a negative log-likelihood by modeling Gaussian noise around the fictitious data observed at the collocation points. The likelihood function is given by:
\begin{equation*}
p(\mathcal{D}|\mathrm{W})= p( \mathcal{D}_{\mathcal A}|\mathrm{W})p(\mathcal{D}_{\mathcal B}|\mathrm{W}) \propto  \prod_{i=1}^{N_{\mathcal A}} \exp \left(- \frac{\mathcal{L}_{\mathcal A}(\mathcal{D}_{\mathcal A,i};\mathrm{W})}{2 \sigma_{\mathcal A}^{2}}\right) \prod_{i=1}^{N_{\mathcal B}} \exp\left(- \frac{\mathcal{L}_{\mathcal B}(\mathcal{D}_{\mathcal B,i};\mathrm{W})}{2 \sigma_{\mathcal B}^{2}}\right),
\end{equation*}
where $\sigma_{i}^{2} = \sigma_{dG}^{2}/ \lambda_{i}$ for $i \in  {\mathcal A,\mathcal B}$. We assume that the likelihoods of the location $ \mathcal{D}_{\mathcal A}$ and $\mathcal{D}_{\mathcal B}$ are conditionally independent given $\mathrm{W}$. To complete the Bayesian formulation, we introduce a Gaussian prior $p(\mathrm{W})$. Applying Bayes' theorem, we obtain the posterior distribution $p(\mathrm{W}|\mathcal{D}) \propto p(\mathcal{D}|\mathrm{W}) p(\mathrm{W})$ which in general is intractable.

The Laplace Approximation (LA) \cite{laplace} is a well-known methodology for approximating the generally intractable posterior distribution with a Gaussian centered at the maximum a posteriori (MAP) solution, $\mathrm{W}_{\text{MAP}}$, and has been successfully applied to deep learning methodologies \cite{mackay_1992,laplace_redux}. DeepGaLA employs a Laplace approximation to provide uncertainty quantification for neural network predictions and proceeds in two stages. First, a neural network is trained to minimize the loss function \eqref{eq:loss_pde} using  \cref{algo1}, thereby obtaining the MAP estimate, $\mathrm{W}_{\text{MAP}}$. Second, a local Gaussian distribution is fitted around $\mathrm{W}_{\text{MAP}}$ , where the covariance matrix is given in terms of the Hessian of the negative log-posterior evaluated at $\mathrm{W}_{\text{MAP}}$.
 By considering a weight decay regularizer, which corresponds to a Gaussian prior distribution $p(\mathrm{W}) =\mathcal{N} (\mathrm{W} ;0,\gamma^{2} I)$ \cite{laplace_redux}, the inverse covariance matrix takes the form
\begin{equation*}
\label{eq:lambda}
  -\nabla_{\mathrm{W}}^{2}\text{log }p(\mathrm{W}|\mathcal{D}) = - 
 \frac{1}{2\sigma_{\mathcal A}^2} \sum_{i=1}^{N_{\mathcal A}} \nabla_{\mathrm{W}}^{2} \text{log } p(\mathcal{D}_{\mathcal A,i}|{W})|_{\mathrm{W}_{\text{MAP}}}-\frac{1}{2\sigma_{\mathcal B}^2}\sum_{i=1}^{N_{\mathcal B}}
   \nabla_{\mathrm{W}}^{2} \text{log } p(\mathcal{D}_{\mathcal B,i}|{\mathrm{W}})|_{\mathrm{W}_{\text{MAP}}}+\gamma^{-2}I.
\end{equation*}
However, computing this matrix can be computationally expensive, as its complexity scales quadratically with the number of parameters. It has been shown that randomizing the weights and computing a LA on just the last layer, $\mathcal{W}_{L}=(\mathrm{W}_{L},b_{L})$, of neural networks can achieve good results and it is often sufficient to produce reliable uncertainty estimates, as noted by \cite{pmlr-v206-sharma23a}. Hence,we treat only the last-layer weights $\mathcal{W}_{L}$ probabilistically to obtain the posterior distribution $p(\mathcal{W}_{L}|\mathcal{D})$.
Then, we compute the Hessian matrix using the generalized Gauss-Newton (GGN) approximation, a widely used technique for efficiently approximating the Hessian in deep learning settings \cite{pmlr-v206-sharma23a}. The precision matrix $\Lambda^{-1} \in \mathbb{R}^{d_{L} \times d_{L}}$ of $\mathcal{W}_{L}$ is then given by:
\begin{equation}
\label{hessian_eq}
\Lambda^{-1} =    \frac{1}{\sigma_{\mathcal A}^{2}}\sum_{i=1}^{N_{\mathcal A}}J_{\mathcal{W}_{L}}\left(\mathcal A_{i}\right)
J_{\mathcal{W}_{L}}\left(\mathcal A_{i}\right)^{T}|_{\mathrm{W}_{\text{MAP}}} + \frac{1}{\sigma_{\mathcal B}^{2}} \sum_{i=1}^{N_{\mathcal B}}J_{\mathcal{W}_{L}}\left(\mathcal B_{i}\right)
J_{\mathcal{W}_{L}}\left(\mathcal B_{i}\right)^{T}|_{\mathrm{W}_{\text{MAP}}} + \gamma^{-2}I,
\end{equation}
where $J_{\mathcal{W}_{L}}(\cdot) \in \mathbb{R}^{d_{L}}$ represents the Jacobian with respect to the weights of the final layer, $\mathcal{W}_{L}$. The entries of these Jacobians are given by $J_{\mathcal{W}_{L},i} \left(\mathcal A_{l}\right) = \frac{\partial }{\partial \mathcal{W}_{L}^{i}}\mathcal{A}_{\theta_{\mathcal A,l}}\left[f_{a}(\mathcal{D}_{\mathcal A,l}; \mathrm{W})\right]$ and $
J_{\mathcal{W}_{L},i} \left(\mathcal B_{l}\right) = \frac{\partial }{\partial \mathcal{W}_{L}^{i}}\mathcal{B}_{\theta_{\mathcal B,l}}\left[f_{a}( \mathcal{D}_{\mathcal B,l}; \mathrm{W})\right]$. 

The final step required to characterize the posterior distribution over the last-layer weights consists of optimizing the hyperparameters $\sigma^{2}_{dG}$ and $\gamma$, which in this work are initially chosen by maximizing the log-likelihood. However, we observed that jointly optimizing both parameters causes the uncertainty estimates produced by DeepGaLA to shrink very rapidly. To address this, we optimize $\gamma$ while fixing $\sigma^{2}_{dG} = 1$, which leads to more reliable uncertainty estimates. For further details, we refer the reader to Appendix~\ref{appendix:parameters}. Once $\gamma$ has been tuned, the posterior predictive distribution can be computed by exploiting the fact that the neural network is linear in the last layer, allowing a Gaussian approximation of the output. Then, for any test input $z_i=(x_i,\theta_i)$, the predictive distribution is given by:
\begin{equation*}
p(f_{a}(z)|\mathcal{D} ) = \mathcal{N}\left(f_{a}(z_i;\mathrm{W}_{\text{MAP}}),\Phi^{(L-1)}(z_i;\mathrm{W}_{\text{MAP}})^{T} \Lambda \Phi^{(L-1)}(z_i;\mathrm{W}_{\text{MAP}})\right),
\end{equation*}
where $f_{a}(z_i;\mathrm{W}_{\text{MAP}})$ denotes the network output at the MAP estimate, and $\Phi^{(L-1)}(z_i;\mathrm{W}_{\text{MAP}})$  is defined according to the composition rule in \eqref{eq:nn_composition}. In the case of multiple test inputs $z = \{z_{i}\}_{i=1}^{d_{y}}$, the model output is given by a multivariate Gaussian distribution:
\begin{equation}
 p(f_{a}(z)|\mathcal{D} ) = \mathcal{N}\left(m^{f}_\mathrm{dG,N}(z),\Gamma_\mathrm{dG,N}(z,z')\right),
\end{equation}
where
\begin{equation}
\begin{aligned}
m^{f}_\mathrm{dG,N}(z) &= \left(f_{a}(z_{1};\mathrm{W}_{\text{MAP}}),\dots,f_{a}(z_{d_{y}};\mathrm{W}_{\text{MAP}})\right), \\
\Gamma_\mathrm{dG,N}(z,z') &= 
\begin{bmatrix}
\Phi^{(L-1)}(z_{1};\mathrm{W}_{\text{MAP}})^{T}\Lambda \Phi^{(L-1)}(z_{1};\mathrm{W}_{\text{MAP}}) & \cdots & \Phi^{(L-1)}(z_{1};\mathrm{W}_{\text{MAP}})^{T}\Lambda \Phi^{(L-1)}(z_{d_{y}};\mathrm{W}_{\text{MAP}}) \\
\vdots & \ddots & \vdots \\
\Phi^{(L-1)}(z_{d_{y}};\mathrm{W}_{\text{MAP}})^{T}\Lambda \Phi^{(L-1)}(z_{1};\mathrm{W}_{\text{MAP}}) & \cdots & \Phi^{(L-1)}(z_{d_{y}};\mathrm{W}_{\text{MAP}})^{T}\Lambda \Phi^{(L-1)}(z_{d_{y}};\mathrm{W}_{\text{MAP}})
\end{bmatrix}.
\end{aligned}
\end{equation}
This formulation leverages the structure of deep neural networks to enable tractable and scalable uncertainty estimation. Although DeepGaLA has been presented using feedforward neural networks, $f_{a}(\cdot;\mathrm{W})$ , the framework can be can be applied to architectures such as $f_{a}^{\mathrm{ENC}}(\cdot;\mathrm{W})$ and $f_{a}^{\mathrm{FF}}(\cdot;\mathrm{W})$.
With this formulation, we can construct a random neural network surrogate for $\mathcal{G}_{\mathrm{X}}$, which we denote by $\mathcal{G}_{\mathrm{X}}^{\mathrm{dG,N}}$ and follows a multivariate Gaussian distribution with mean $m^{\mathcal{G}}_\mathrm{dG,N}$ and covariance $\Gamma_\mathrm{dG,N}$.

\subsection{Gaussian Process Regression}

An alternative approach for constructing a random surrogate model is to use a nonparametric method, such as Gaussian Process (GP) regression. GP regression inherently provides uncertainty estimates when approximating a function, making them an ideal benchmark for assessing the performance of DeepGaLA. 

For simplicity, we first consider the approximation of scalar-valued functions $g: \mathcal{T} \rightarrow \mathbb{R}$. The first step is to place a Gaussian prior on $g$ in the following way:
\begin{equation}
\label{eq:gp_real}
    g_0 \sim \text{GP}(m(\theta), k(\theta, \theta')),
\end{equation}
where $m: \mathcal{T} \rightarrow \mathbb{R}$ is the mean function and $k: \mathcal{T} \times \mathcal{T} \rightarrow \mathbb{R}$ is the kernel function. The choice of kernel reflects assumptions about the smoothness and structure of the function $g$; among the most commonly used kernels are the Matérn and squared exponential Gaussian kernels \cite{matern_spatial,Rasmussen2004}.
Assume that we have access to $N$ design points 
$\Theta = \{ \theta_i \}_{i=1}^{N} $ with corresponding function values $g(\Theta) = [g(\theta_1), \ldots, g(\theta_{\mathrm{N}})] \in \mathbb{R}^{N}$. Denote by $ g_{\mathrm{N}}$ the Gaussian process conditioned on the observed values $g(\Theta)$, also called the posterior predictive. Then,
\begin{equation}
    g_{\mathrm{N}} \sim \text{GP}(m_{\mathrm{N}}^{g}(\theta), k_{\mathrm{N}}(\theta, \theta')),
\end{equation}
where the predictive mean and covariance functions are given by
\begin{equation}
\label{eq:gp_pred}
\begin{split}
    &m_{\mathrm{N}}^{g}(\theta) = m(\theta) + k(\theta, \Theta) K(\Theta, \Theta)^{-1} (g(\Theta) - m(\Theta)), \\
    &k_{\mathrm{N}}(\theta, \theta') = k(\theta, \theta') - k(\theta, \Theta)^{\top} K(\Theta, \Theta)^{-1} k(\theta', \Theta).
\end{split}
\end{equation}
Here, $k(\theta, \Theta) = [k(\theta, \theta_1), \ldots, k(\theta, \theta_{\mathrm{N}})] \in \mathbb{R}^{N}$, and $K(\Theta, \Theta) \in \mathbb{R}^{{N} \times {N}} $ is the kernel matrix evaluated at the design points, with entries $K(\Theta, \Theta)_{ij} = k(\theta_i, \theta_j)$. 

One of the interests when employing GPs for regression tasks is to assess how well the GP can approximate a function as more design points are used. A key concept that measures how well a set of design points fills the space $\mathcal{T}$ is the fill distance $h_{\Theta}$, defined as
\begin{equation}
    h_{\Theta} = \sup_{\theta \in \mathcal{T}} \inf_{\theta_{n} \in \Theta} \| \theta - \theta_{n} \|_{2}.
\end{equation}
The fill distance can be interpreted as the largest gap between any point in $\mathcal{T}$ and its closest design point in $\Theta$. We are now ready to state a convergence theorem from \cite{Wendland_2004}, which guarantees convergence of $m_{\mathrm{N}}^g$ to the true function $g$.

\begin{proposition}
\label{prop:gp_rate}
Suppose $\mathcal{T} \subseteq \mathbb{R}^{d_{\theta}}$ is a bounded Lipschitz domain that satisfies an interior cone condition, and the symmetric positive definite kernel $k$ is such that its Reproducing kernel Hilbert space $H_{k}$ is isomorphic to the Sobolev
space $H^{\tau}(\mathcal{T})$, with $\tau = n+r$, $n \in \mathbb{N}$, $n> d_{\theta}/2$ and $0 \leq r < 1$. Suppose $m_{\mathrm{N}}^{g}$ is given by \eqref{eq:gp_pred} with $m=0$. If $g \in H^{\tau}(\mathcal{T})$, then there exists a constant $C$, independent of $g$, $\Theta$ and $N$, such that
\begin{equation*}
    \|  g - m_{\mathrm{N}}^{g}\|_{H^{\beta}(\mathcal{T})} \leq Ch_{\Theta}^{\tau-\beta}\|g\|_{H^{\tau}(\mathcal{T})}, \quad \text{for any }\beta \leq \tau,
\end{equation*}
for all sets $\Theta$ with $h_{\Theta}$ sufficiently small. 
\end{proposition}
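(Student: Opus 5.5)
The plan is the standard scattered-data argument: combine the optimality of the kernel interpolant in the native space with a sampling inequality for functions vanishing on $\Theta$ (the Narcowich--Ward--Wendland zeros lemma).

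\textbf{Step 1: reduce to interpolation.} With $m=0$, the formula \eqref{eq:gp_pred} shows that $m_{\mathrm{N}}^{g}$ is the kernel interpolant of $g$ on $\Theta$. It lies in $\mathrm{span}\{k(\cdot,\theta_i)\}$ and satisfies $m_{\mathrm{N}}^{g}(\theta_i)=g(\theta_i)$. The condition $n>d_\theta/2$ gives $H^\tau(\mathcal T)\hookrightarrow C(\overline{\mathcal T})$, so point evaluations make sense.

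\textbf{Step 2: norm stability.} By the reproducing property, $m_{\mathrm{N}}^{g}$ is the $H_k$-orthogonal projection of $g$ onto that span. Hence the error $e:=g-m_{\mathrm{N}}^{g}$ satisfies $\|e\|_{H_k}\le\|g\|_{H_k}$. Since $H_k$ is isomorphic to $H^\tau(\mathcal T)$ with equivalent norms, we get $\|e\|_{H^\tau}\le C\|g\|_{H^\tau}$. This extension-free step needs $g$ to be extendable to $H_k$. That is automatic here because $H_k\cong H^\tau(\mathcal T)$ on the domain itself, and the Lipschitz boundary gives a Stein extension if a whole-space argument is needed.

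\textbf{Step 3: sampling inequality.} The residual $e$ vanishes on $\Theta$. The key lemma is: for $u\in H^\tau(\mathcal T)$ with $u|_\Theta=0$ and $h_\Theta\le h_0$,
\[
\|u\|_{H^\beta(\mathcal T)}\le C h_\Theta^{\tau-\beta}|u|_{H^\tau(\mathcal T)}, \qquad 0\le\beta\le\tau .
\]
Applying this to $u=e$ and using Step 2 gives the claim.

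\textbf{Proving the lemma (main obstacle).} This step uses the interior cone condition and is where the real work lies. I would proceed in three stages.

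First, cover $\mathcal T$ by star-shaped subdomains: balls or cones of radius $\delta\sim h_\Theta$ that are star-shaped with respect to balls. The cone condition makes this possible with bounded overlap.

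Second, on each piece, use a local polynomial reproduction (Bramble--Hilbert / averaged Taylor polynomial). Because $h_\Theta$ is small relative to $\delta$, each piece contains enough points of $\Theta$ to be unisolvent for polynomials of degree $<\tau$, with a stable norming constant. This gives the local estimate $\|u\|_{L^\infty(\text{piece})}\le C\delta^{\tau-d_\theta/2}|u|_{H^\tau(\text{piece})}$ for $u$ vanishing at those points.

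Third, sum over the pieces to obtain the $\beta=0$ case, and then the integer $\beta$ cases. For fractional $\tau$ (when $r>0$), handle the seminorm via the Sobolev--Slobodeckij definition and a localized Bramble--Hilbert lemma. For intermediate $\beta$, interpolate between the $L^2$ and $H^\tau$ bounds.

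The delicate points in the lemma are choosing the constants so that the polynomial reproduction is uniform in $h_\Theta$, and summing the fractional seminorms over overlapping pieces. I would follow the argument in \cite{Wendland_2004} (Ch.~11) for these.
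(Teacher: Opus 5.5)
The paper gives no proof of its own here; it quotes the result from Wendland's monograph. Your plan is exactly the standard argument behind that citation, so it is correct: minimality of the kernel interpolant in $H_k\cong H^\tau(\mathcal T)$ gives $\|g-m_{\mathrm{N}}^{g}\|_{H^\tau(\mathcal T)}\le C\|g\|_{H^\tau(\mathcal T)}$, the Narcowich--Ward--Wendland sampling inequality handles functions vanishing on $\Theta$, and you interpolate between $\beta=0$ and $\beta=\tau$. Your restriction to $0\le\beta\le\tau$ is also the right reading of ``for any $\beta\le\tau$'': for $\beta<0$ the rate fails in general, as one sees by taking $g$ to be a sum of $h_\Theta$-scaled bumps vanishing on $\Theta$, for which $m_{\mathrm{N}}^{g}=0$ and $\|g\|_{H^\tau(\mathcal T)}$ is of order $h_\Theta^{-\tau}$ while $\|g\|_{H^\beta(\mathcal T)}$ stays bounded below.
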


\cref{prop:gp_rate} indicates that, as $d_{\theta}$ increases, Gaussian processes become less effective at approximating the function $g$, since the fill distance $h_{\Theta}$ usually scales as $N^{-1/d_{\theta}}$ (see e.g. \cite{teckentrup} and the references therein).

\subsubsection{Gaussian Process Regression as a Surrogate}

So far, we have presented GP regression to approximate real-valued functions. 
However, to construct a surrogate for $\mathcal{G}_{\mathrm{X}}$, the framework presented so far must be extended to a multi-output GP. 
This extension can be achieved by considering a prior over $\mathcal{G}_{\mathrm{X}}$ of the form:
\begin{equation*}
    \mathcal{G}_{\mathrm{X},0} \sim \text{GP}(0, K(\theta, \theta')),
\end{equation*}
where we have chosen zero mean for simplicity and 
$K: \mathcal{T} \times \mathcal{T} \rightarrow \mathbb{R}^{d_{y} \times d_{y}}$ is the covariance kernel. 
The kernel is often considered in the form $K(\theta,\theta') = k_p(\theta,\theta') I_{d_{y}}$, 
where $I_{d_{y}} \in \mathbb{R}^{d_{y} \times d_{y}}$ is the identity matrix and $k_p(\theta,\theta'): \mathcal{T} \times \mathcal{T} \rightarrow \mathbb{R}$ is a scalar-valued kernel in parameter space. 
This assumption reduces the multi-output GP to independent GPs for each output dimension, as in \eqref{eq:gp_real}. 
The predictive mean and covariance can be formulated analogously to \eqref{eq:gp_pred}.

An important remark when constructing a surrogate for $\mathcal{G}_{\mathrm{X}}$ is that this function depends not only on $\theta$ but also on the spatial observations $X$. 
To account for spatial correlation, the work of \cite{Bai_2024} provides a method to introduce spatial correlation and incorporate information from the PDE by extending the model of \cite{raissi2017machine} to parametric equations. We refer to this prior as a physics-informed Gaussian Process (PIGP), and briefly introduce this framework below. 

The starting point for the PIGP prior is a (scalar-valued) prior on the PDE solution $u$ of the form 
\begin{equation*}
   u_0(x,\theta) \sim \text{GP}(0, k_{p}(\theta,\theta') \, k_{s}(x,x')). 
\end{equation*}
If the differential operator in Equation~\eqref{eq:pde} is linear, 
and for the sets $X_{f} = \{x_{i}\}_{i=1}^{d_{f}} \subseteq D$ and 
$X_{g} = \{x_{i}\}_{i=1}^{d_{g}} \subseteq \partial D$, 
the (joint) PIGP prior is
\begin{equation}
\label{eq:pigp_prior}
U(\theta)=
\begin{bmatrix}
u(X,\theta) \\
g( X_g,\theta) \\
f(X_f,\theta)
\end{bmatrix}
\sim \text{GP}\!\left(
0,\;
K(\theta,\theta')
\right),
\end{equation}
where $K(\theta,\theta') \in \mathbb{R}^{d_y+d_f+d_g \times d_y+d_f+d_g} $ is given by
\begin{equation*}
K(\theta,\theta') = k_p(\theta,\theta') 
\begin{bmatrix}
K_s(X, X) & \mathcal{B}_{\theta_{\mathcal{B}}'} K_s(X, X_g) & \mathcal{A}_{\theta_{\mathcal{A}}'} K_s(X, X_f) \\
\mathcal{B}_{\theta_{\mathcal{B}}} K_s(X_g, X) & \mathcal{B}_{\theta_{\mathcal{B}}} \mathcal{B}_{\theta_{\mathcal{B}}'} K_s(X_g, X_g) & \mathcal{B}_{\theta_{\mathcal{B}}} \mathcal{A}_{\theta_{\mathcal{A}}'} K_s(X_g, X_f) \\
\mathcal{A}_{\theta_{\mathcal{A}}} K_s(X_f, X) & \mathcal{A}_{\theta_{\mathcal{A}}} \mathcal{B}_{\theta_{\mathcal{B}}'} K_s(X_f, X_g) & \mathcal{A}_{\theta_{\mathcal{A}}} \mathcal{A}_{\theta_{\mathcal{A}}'} K_s(X_f, X_f)
\end{bmatrix}.
\end{equation*}
Then
\begin{equation}
\label{eq:pigp_marginal}
\mathcal{G}_{\mathrm{X},N} = \mathcal{G}_{\mathrm{X}}(\theta) \,\big|\, u(X,\Theta), g(X_g,\Theta), f( X_f,\Theta) 
\sim \text{GP}\left(
m_{N, X_f, X_g}^{u}(\theta),
K_{N, X_f, X_g}(\theta, \theta')
\right),
\end{equation}
where the marginal predictive mean and covariance are given by
\begin{equation}
\label{eq:pigp_pred}
\begin{split}
&m_{N, X_f, X_g}^{u}(\theta) 
= 
\begin{bmatrix}
K_{uu}(\theta, \Theta) & K_{ug}(\theta, \Theta) & K_{uf}(\theta, \Theta)
\end{bmatrix}
K(\Theta, \Theta)^{-1}
 U(\Theta),  \\
&K_{N, X_f, X_g}(\theta, \theta') 
= K(\theta, \theta') 
- 
\begin{bmatrix}
K_{uu}(\theta, \Theta) & K_{ug}(\theta, \Theta) & K_{uf}(\theta, \Theta)
\end{bmatrix}
K(\Theta, \Theta)^{-1}
\begin{bmatrix}
K_{uu}(\theta', \Theta) \\[2pt]
K_{ug}(\theta', \Theta) \\[2pt]
K_{uf}(\theta', \Theta)
\end{bmatrix},
\end{split}
\end{equation}
where $K_{uu}(\theta, \Theta) = k_p(\theta,\Theta)\,K_s(X, X) \in \mathbb{R}^{d_{y} \times d_{y} N},$ $K_{ug}(\theta, \Theta) = k_p(\theta,\Theta)\,\mathcal{B}_{\theta_{\mathcal{B}}} K_s(X, X_g) \in \mathbb{R}^{d_{y} \times d_{g} N},$ $K_{uf}(\theta, \Theta) = k_p(\theta,\Theta)\,\mathcal{A}_{\theta_{\mathcal{A}}} K_s(X, X_f) \in \mathbb{R}^{d_{y} \times d_{f} N}, $ and $K(\Theta, \Theta) \in \mathbb{R}^{N(d_{y}+d_{f}+d_{g}) \times N(d_{y}+d_{f}+d_{g})}.$

For more details on this model, we refer the reader to \cite{Bai_2024}. With the exposition so far, we can now construct a surrogate for $\mathcal{G}_{\mathrm{X}}$, which we denote by $\mathcal{G}_{\mathrm{X}}^\mathrm{{GP,N}}$ when using a GP surrogate with mean $m_\mathrm{{GP,N}}^{\mathcal{G}}$ and covariance $\Gamma_{_\mathrm{{GP,N}}}:= K_\mathrm{{N}}$. In the case where we use a PIGP, it should be understood that $\mathcal{G}_{\mathrm{X}}^\mathrm{{PIGP,N}}$  represents the surrogate with mean and covariance given by Equations \eqref{eq:pigp_pred} and denoted by $m_\mathrm{{PIGP,N}}^{\mathcal{G}}$ and $\Gamma_{_\mathrm{{PIGP,N}}}:=K_\mathrm{{PIGP,N}}$, respectively.

\section{Sampling from the posterior }
\label{sec:4}

This section is devoted to reviewing algorithms for generating samples from the mean and marginal approximated posterior distributions obtained from a random surrogate model. In particular, we review the Metropolis–Hastings (MH) algorithm for generating samples from a target density, $\pi(\theta)$ \cite{metropolis,hastings,robert1999monte}. The second part presents an approach for assessing the accuracy of posterior approximations in situations where computing the true posterior is prohibitively expensive. The section concludes with an analysis of this methodology and provides convergence rates for the case of Gaussian process regression.

\subsection{Metropolis-Hastings Algorithm}

Given an initial state $\theta_0$, such that $\pi(\theta_{0})>0$, and a proposal density $q(\tilde{\theta} |\theta_j)$, the MH algorithm generates samples $\{\theta_j\}_{j=1}^\infty$, such that  $\theta_j \sim \pi(\theta)$ as $j\rightarrow \infty$. \cref{algo:mcmc} describes the MH algorithm used to generate a finite set of samples $\{\theta_{j}\}_{j=1}^{N_{S}}$ from the target distribution. In practice, the first $n_{0}$ samples are discarded, as they may still be far from the target distribution. These samples constitute the so called burn-in of the algorithm. A common rule of thumb is to set the burn-in period to approximately $10\%$ of the total samples. Therefore, the samples used for inference are $\{\theta_{j}\}_{j=n_{0}+1}^{N_{S}}$.

\begin{algorithm}
\caption{Metropolis-Hastings MCMC}\label{algo:mcmc}
\textbf{Input: } Initial state $\theta_{0}$, proposal distribution $q(\cdot|\theta_{j})$, target distribution $\pi(\cdot)$ and number of iterations $N_{S}$.

\textbf{Output: } $\{\theta_{j}\}_{j=1}^{N_{S}}$.

\begin{enumerate}
\renewcommand{\labelenumi}{}
 \item \textbf{for} $j = 0$ \textbf{to} $N_{S}-1$ \textbf{do}
    \begin{enumerate}
        \item Given $\theta_{j}$,  generate new candidate $\tilde{\theta}$ from $q(\Tilde{\theta}| \theta_{j})$.
        \item Accept $\tilde{\theta}$ as a sample with probability :
        \begin{equation}\alpha_{\mathrm{MH}}(\Tilde{\theta}|\theta_{j}) = \min \left(1, \frac{\pi(\Tilde{\theta})q(\theta_{j}|\Tilde{\theta}) }{\pi(\theta_{j})q(\Tilde{\theta}|\theta_{j})} \right) , 
        \end{equation}
        i.e $\theta_{j+1}= \tilde{\theta}$  with probability $\alpha(\Tilde{\theta}|\theta_{j})$ and  $\theta_{j+1}=\theta_{j}$ with probability $1-\alpha(\Tilde{\theta}|\theta_{j})$.
    \end{enumerate}
\end{enumerate}     
\end{algorithm}

Despite its apparent simplicity, selecting an appropriate proposal distribution is crucial and often challenging. The random-walk proposal is a common choice.  
It uses a Gaussian proposal distribution centered at \(\theta_j\) with covariance \(\beta^{2} I_{d_\theta}\), i.e. $q(\tilde{\theta}\mid \theta_j)=\mathcal{N}\big(\theta_j;\beta^{2} I_{d_\theta}\big)$. Equivalently, the proposed move can be written as follows,
\begin{equation*}\label{rwmh}
\tilde{\theta} = \theta_j + \beta \,\xi,\qquad \xi \sim \mathcal{N}(0,I_{d_\theta}),\qquad \beta>0.
\end{equation*}
The random walk proposal is symmetric, which means that $q(\theta_{j} | \tilde{\theta}) = q(\tilde{\theta} | \theta_{j})$ and the acceptance probability depends only on the target distribution,
\begin{equation}
\alpha_{\mathrm{MH}}(\tilde{\theta} \mid \theta_j) =\min\!\left( 1, \dfrac{\pi(\tilde{\theta})}{\pi(\theta_j)} \right).
\label{eq:acceptance_rw}
\end{equation}
The parameter $\beta$ is often referred to as the proposal variance, and its choice should not be overlooked. If $\beta$ is too small, the parameter space is explored inefficiently due to very small moves. On the other hand, if $\beta$ is too large, the algorithm rejects proposals more frequently, which also leads to poor exploration. In both cases, the resulting Markov chain can exhibit large variance and slow mixing. The optimal choice of $\beta$ has been studied in \cite{optimal_beta}, where it was shown that  for Gaussian target distributions that $\beta$ should be tuned to achieve an acceptance rate of approximately $0.234$, which minimizes the asymptotic variance. Although this result was derived under specific assumptions, it is widely used as a practical rule of thumb. We refer to the value of $\beta$ that yields an acceptance rate close to $0.234$ as the optimal proposal variance.

It is worth noting that the MH algorithm allows for a wide variety of proposal distributions. In particular, when the posterior distribution is differentiable with respect to the parameters, proposals can be designed that take advantage of the gradient information to improve efficiency. A well-known example is based on the Langevin equation resulting in the Metropolis-adjusted Langevin algorithm (MALA) \cite{mala}. 

As seen in Equation \eqref{eq:acceptance_rw}, even with the simplifications provided by the random-walk proposal, the data likelihood must generally still be evaluated at each iteration until convergence. This process can become computationally expensive in inverse problems in PDEs due to the need to compute the forward model $\mathcal{G}_{\mathrm{X}}$, which often requires a numerical approximation such as the Finite Element Method or Spectral Methods. Therefore, when employing surrogates to alleviate this burden, the objective is to draw samples from the mean and marginal approximate posteriors, given by Equations \eqref{eq:mean-approx} and  \eqref{eq:marginal-approx}. However, the question remains whether these approximate posteriors are close to or far from the true posterior. In the next section, we discuss this in more detail.

\subsection{Evaluation of Posterior Approximations}

Controlling the error between the approximate posterior and the true posterior is a desirable property when using surrogate models. In the case of numerical methods, such as the Finite Element Method (FEM), it has been shown that the approximate posterior converges to the true posterior as the discretization of the observational operator is refined, provided that the MCMC algorithm has converged \cite{approx_inversep,hoang2013complexity}. For random surrogate models, one ideally seeks that both the mean and marginal approximate posterior, $\pi^{y,\mathcal{G}_{\mathrm{X}}^{\mathrm{S,N}}}_{\mathrm{mean}}(\theta)$ and  
$\pi^{y,\mathcal{G}_{\mathrm{X}}^{\mathrm{S,N}}}_{\mathrm{marginal}}$, 
converge to the true posterior $\pi^{y}$ as $N \to \infty$. For GP surrogates based on standard kernels as defined in equations \eqref{eq:gp_pred}, it has indeed been shown that the approximate posteriors, induced by the mean and marginal approximations, converge to the true posterior in the Hellinger distance \cite{aretha_posterior,kostas_aretha}. However, in practice, one often has access to only a limited number of design points. In such cases, the approximate posterior induced by the surrogate may still be far from the true posterior. On the other hand, surrogate models such as neural networks are less theoretically understood, and there is no clear guarantee of convergence to the true posterior. 

In this section, we describe a computable quantity $\alpha_{\mathrm{val}}$ that can be used to assess the accuracy of the approximate posterior distributions. The quantity is derived from the Delayed Acceptance MCMC (DA-MCMC) algorithm \cite{delayed_mcmc,multilevel_da}, which employs a two-stage procedure to generate samples from a target distribution: the first stage uses a coarse, computationally inexpensive model to pre-screen proposed candidates, which are then evaluated with a finer, more accurate but costly model, thereby avoiding unnecessary evaluations of the fine model. The main benefit is that the quantity $\alpha_{\mathrm{val}}$ can be computed with a much smaller number of evaluations of the fine model compared to a full exploration of the posterior distribution with the fine model.

\cref{algo2} presents the approach to validate the approximate posterior distribution using the DA-MCMC algorithm. The central idea is to assess whether samples accepted for the approximate posterior by the MH algorithm are also consistent with the true posterior. The true posterior is approximated by a reference solution, denoted by $\pi^{y,F}$, obtained from a sufficiently accurate numerical approximation of the forward model, $\mathcal{G}_{\mathrm{X}}^\mathrm{F}$. Assume that we want to evaluate the accuracy of the surrogate-approximated posterior with a budget of $N_F$ evaluations of the fine model. Starting from the final sample $\theta_{N_{S}}$ of the surrogate posterior produced by Algorithm \ref{algo:mcmc}, we run the DA-MCMC algorithm until the fine model has been evaluated $N_F$ times. With $N_\mathrm{acc}$ the number of proposals tested and accepted by the fine model, we define $\alpha_{\mathrm{val}}$ as:
\begin{equation}
\label{eq:ar}
\alpha_{\mathrm{val}} = \frac{N_\mathrm{acc}}{N_F}.
\end{equation}
\begin{algorithm}
\caption{Posterior Validation via Delayed Acceptance MCMC}\label{algo2}
\textbf{Input: } Initial state $\theta=\theta_{N_{S}}$ given by Algorithm \ref{algo:mcmc}, surrogate posterior approximation $\pi^{y,S}(\cdot)$, reference posterior $\pi^{y,F}(\cdot)$, proposal distribution $q(\cdot|\theta)$ and number of evaluations of the fine model $N_{F}$.

\textbf{Output: } Acceptance rate $\alpha_{\mathrm{val}} = N_\mathrm{acc}/N_{F}$

\begin{enumerate}
\item Initialize $n_F = 0$.
\item \textbf{while} $n_F  < N_F$ \textbf{do}
\begin{enumerate}
    \item Given the current state $\theta$, generate a candidate $\tilde{\theta}$ from $q(\tilde{\theta} | \theta)$ and compute the \textbf{coarse acceptance probability}:
        \begin{equation*}
           \tilde{\alpha}(\tilde{\theta}|\theta) = \min \left(1, \frac{\pi^{y,S}(\tilde{\theta}) q(\theta|\tilde{\theta}) }{\pi^{y,S}(\theta) q(\tilde{\theta}|\theta)} \right). 
        \end{equation*}
    \item If $\tilde{\theta}$ is not accepted, return to step (a). Otherwise, update $n_F \leftarrow n_F + 1$ and compute the \textbf{fine acceptance probability}:
    \begin{equation}\label{eq:fine_prob}
        \alpha(\tilde{\theta}|\theta) = \min\left(1, \frac{\pi^{y,F}(\tilde{\theta})\, \pi^{y,S}(\theta)}{\pi^{y,F}(\theta)\, \pi^{y,S}(\tilde{\theta})} \right).
    \end{equation} 
    \item If $\tilde{\theta}$ is accepted in step (b), set $\theta = \tilde{\theta}$ and $ N_\mathrm{acc} \leftarrow N_\mathrm{acc} + 1.$
\end{enumerate}
\end{enumerate}     
\end{algorithm}
Intuitively, $\alpha_{\mathrm{val}}$ takes small values when the approximate posterior distribution $\pi^{y,S}$ associated with the surrogate $S$ is far from the reference posterior $\pi^{y,F}$, and conversely large values when the surrogate posterior distribution  is close the true posterior. As we will see in Section \ref{sec:5}, the value of $\alpha_{\mathrm{val}}$ provides a meaningful diagnostic also for small values of $N_F$.

In order to justify the use of $\alpha_{\mathrm{val}}$ as a convergence metric, we now analyze the fine acceptance probability $\alpha$ of \cref{algo2} and show that this converges to 1 as the error between our surrogate model and the reference solution $\mathcal{G}_{\mathrm{X}}^\mathrm{F}$ goes to zero. Note that if $\alpha \equiv 1$ in \cref{algo2} then $\alpha_{\mathrm{val}}=1$, and furthermore $\mathbb{E}[1-\alpha_{\mathrm{val}}] = \mathbb{E}\left[1- \frac{N_\mathrm{acc}}{N_F}\right] = \mathbb{E}[1-\alpha]$ since $\mathbb{E}[N_\mathrm{acc}] = \mathbb{E}[\alpha]N_F$. We are hence interested in analyzing the expected probability of rejection,
 $\mathbb{E}_{\zeta} \big[\, 1 - \alpha(\tilde{\theta}|\theta) \,\big],$
where $\zeta$ denotes the joint distribution of a current state $\theta$ and a proposal $\tilde{\theta}$. Note that the proposal $\tilde{\theta}$ has been accepted as a sample of $\pi^{y,S}$ in step (a), whereas, in the limit as the number of iterations $j \to \infty$, the current state $\theta$ is distributed according to $\pi^{y,F}$. Therefore, for simplicity, we assume that the marginal distributions of $\zeta$ are $\pi^{y,S}$ and $\pi^{y,F}$, respectively.

The following Theorem presents bounds for the expected probability of rejection when employing the mean and marginal approximations to the posterior, given in Equations \eqref{eq:mean-approx} and \eqref{eq:marginal-approx}, denoted by $\mathbb{E}_{\zeta}\left[ 1 - \alpha_{\mathrm{mean}}(\tilde{\theta} | \theta) \right]$ and $\mathbb{E}_{\zeta}\left[ 1 - \alpha_{\mathrm{marginal}}(\tilde{\theta} | \theta) \right]$, respectively.  The proof follows closely the work of \cite{aretha_mmcmc} and \cite{aretha_posterior}.\footnote{ The weighted $L^2$ norm of a function $g : \mathcal{T} \rightarrow \mathbb{R}^{d_y}$ is defined as $\|g\|_{L_{\pi^{y,F}}^{2}(\mathcal T)} = \int_{\mathcal{T}} \|g(\theta)\|_{2}^{2} \pi^{y,F}(\theta)\, d\theta = \sum_{i=1}^{d_{y}} \int_{\mathcal{T}} (g^{i}(\theta))^{2} \pi^{y,F}(\theta)\, d\theta.$}

\begin{theorem}
\label{prop:acceptance}

Denote by $\zeta$ the joint distribution of $\theta$ and $\tilde{\theta}$, with marginals $\pi^{y,F}$ and $\pi^{y,S}$, respectively, and denote by $\mathcal{G}_{\mathrm{X}}^{\mathrm{S,N}} \sim \mu^{\mathrm{S,N}}$ the random surrogate model, with mean $m_{\mathrm{S,N}}^{\mathcal{G}}$. Assume that $\sup_{\theta \in \mathcal{T}} \mathbb{E}_{\mu^{S,N}}\|\mathcal{G}_{\mathrm{X}}^{\mathrm{S,N}}(\theta)\|^{2}_{2} \leq C_{\mathrm{S}}^2$ and $\sup_{\theta \in \mathcal{T}} \|\mathcal{G}_{\mathrm{X}}^{\mathrm{F}}(\theta)\|_{2} \leq C_{\mathrm{F}}$, for some constants $C_\mathrm{S}$ and $C_\mathrm{F}$ independent of N.  Then there exist constants $C_{1}$ and $C_{2}$, independent of $N$, such that:

\begin{equation*}
    \begin{split}
&\mathbb{E}_{\zeta}\left[ 1-\alpha_{\mathrm{mean}}(\tilde{\theta} | \theta) \right] 
\leq C_{1}\| \mathcal{G}_{\mathrm{X}}^\mathrm{F} -  m_{\mathrm{S,N}}^{\mathcal{G}} \|_{L_{\pi^{y,F}}^{2}(\mathcal T)}, 
\quad \text{and} \quad \\ 
&\mathbb{E}_{\zeta}\left[ 1-\alpha_{\mathrm{marginal}}(\tilde{\theta} | \theta) \right] 
\leq C_{2} \left\| \left( \mathbb{E}_{\mu^{\mathrm{S,N}}}\left[ \| \mathcal{G}_{\mathrm{X}}^\mathrm{F} -  \mathcal{G}_{\mathrm{X}}^{\mathrm{S,N}}\|_{2}^{1+ \delta}\right] \right)^{1/(1+ \delta)} \right\|_{L_{\pi^{y,F}}^{2}(\mathcal T)}  \quad \text{for any} \quad 0 < \delta \leq 1.
\end{split}
\end{equation*}
\end{theorem}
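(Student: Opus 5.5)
The plan is to reduce the rejection probability to a difference of log-likelihood potentials, and then control that difference by the forward-model error, in the spirit of \cite{aretha_mmcmc} and \cite{aretha_posterior}.

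\textbf{Step 1: linearise the rejection probability.} Write $\pi^{y,F}=\exp(-\Phi_F)\pi_0/Z_F$ and $\pi^{y,S}=\exp(-\Phi_S)\pi_0/Z_S$. In the mean case
\[
\Phi_S=\tfrac12\|y-m^{\mathcal G}_{\mathrm{S,N}}\|_\Gamma^2 ;
\]
in the marginal case $\Phi_S=-\log \mathbb{E}_{\mu^{\mathrm{S,N}}}[\exp(-\tfrac12\|y-\mathcal G^{\mathrm{S,N}}_{\mathrm X}\|_\Gamma^2)]$. Then the ratio in \eqref{eq:fine_prob} equals $\exp(r(\tilde\theta)-r(\theta))$ with $r=\Phi_S-\Phi_F$, since the normalising constants cancel. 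Using $1-\min(1,e^{x})\le |x|$, we get
\[
\mathbb{E}_\zeta[1-\alpha]\le \mathbb{E}_{\pi^{y,F}}|r(\theta)|+\mathbb{E}_{\pi^{y,S}}|r(\tilde\theta)|.
\]
Only the marginals of $\zeta$ enter here, so no coupling structure is needed.

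\textbf{Step 2: change measure in the second term.} Write $\mathbb{E}_{\pi^{y,S}}|r|=\mathbb{E}_{\pi^{y,F}}\bigl[|r|\,\tfrac{Z_F}{Z_S}e^{-r}\bigr]$. Each potential is bounded above uniformly in $N$:
\begin{itemize}
\item $\Phi_F\le \|y\|^2_\Gamma+C_F^2\|\Gamma^{-1}\|$;
\item for the mean case, Jensen gives $\|m^{\mathcal G}_{\mathrm{S,N}}\|_2\le C_S$;
\item for the marginal case, Jensen gives $\Phi_S\le \mathbb{E}_{\mu^{\mathrm{S,N}}}\tfrac12\|y-\mathcal G^{\mathrm{S,N}}_{\mathrm X}\|_\Gamma^2\le C$, using the second-moment bound $C_S^2$.
\end{itemize}
Since both potentials are nonnegative, $e^{-r}\le e^{\Phi_F}$ is bounded and $Z_S,Z_F$ are bounded below by $e^{-C}$. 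So both terms are at most $C\,\mathbb{E}_{\pi^{y,F}}|r|\le C\|r\|_{L^2_{\pi^{y,F}}}$ by Cauchy--Schwarz.

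\textbf{Step 3a: mean case.} Use
\[
\bigl|\|y-a\|_\Gamma^2-\|y-b\|_\Gamma^2\bigr|\le \|\Gamma^{-1}\|\,(2\|y\|+\|a\|+\|b\|)\,\|a-b\|_2 ,
\]
with $a=\mathcal G^F_{\mathrm X}$ and $b=m^{\mathcal G}_{\mathrm{S,N}}$. Both norms are bounded by $C_F$ and $C_S$, so $|r|\le C\|\mathcal G^F_{\mathrm X}-m^{\mathcal G}_{\mathrm{S,N}}\|_2$ pointwise. This gives the first bound.

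\textbf{Step 3b: marginal case, the main obstacle.} Here the $\log$ and the expectation must be handled together.
\begin{itemize}
\item Both $e^{-\Phi_F}$ and $e^{-\Phi_S}=\mathbb{E}_{\mu^{\mathrm{S,N}}}[\cdot]$ are bounded below (by Step 2), and $\log$ is Lipschitz on $[e^{-C},1]$. Hence
\[
|r|\le C\,\mathbb{E}_{\mu^{\mathrm{S,N}}}\bigl|e^{-\frac12\|y-\mathcal G^F_{\mathrm X}\|_\Gamma^2}-e^{-\frac12\|y-\mathcal G^{\mathrm{S,N}}_{\mathrm X}\|_\Gamma^2}\bigr|\le C\,\mathbb{E}_{\mu^{\mathrm{S,N}}}\bigl|\Phi_F-\Phi^{\mathrm{S,N}}\bigr|,
\]
using $|e^{-s}-e^{-t}|\le|s-t|$ for $s,t\ge0$. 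Here $\Phi^{\mathrm{S,N}}=\tfrac12\|y-\mathcal G^{\mathrm{S,N}}_{\mathrm X}\|_\Gamma^2$ is the potential of the random surrogate.
\item Bound the difference as in Step 3a:
\[
|\Phi_F-\Phi^{\mathrm{S,N}}|\le C\,(1+\|\mathcal G^{\mathrm{S,N}}_{\mathrm X}\|_2)\,\|\mathcal G^F_{\mathrm X}-\mathcal G^{\mathrm{S,N}}_{\mathrm X}\|_2 .
\]
The factor $\|\mathcal G^{\mathrm{S,N}}_{\mathrm X}\|_2$ is now random and unbounded, which is where the difficulty lies.
\item Apply H\"older in $\mu^{\mathrm{S,N}}$ with exponents $(1+\delta)/\delta$ and $1+\delta$. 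This bounds the expectation by $(\mathbb{E}(1+\|\mathcal G^{\mathrm{S,N}}_{\mathrm X}\|_2)^{(1+\delta)/\delta})^{\delta/(1+\delta)}$ times the desired $(1+\delta)$-moment of the error.
\end{itemize}

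The first factor needs a $(1+\delta)/\delta\ge2$ moment, while the assumption only gives the second moment. For $\delta=1$ this is exactly $C_S$. For $\delta<1$ I would first try to avoid this by not splitting the product. Instead, use the cruder bound $|e^{-s}-e^{-t}|\le\min(1,|s-t|)$ together with $\min(1,ab)\le (ab)^{\theta}$ for $\theta\in(0,1]$, which lowers the moment needed on $\|\mathcal G^{\mathrm{S,N}}_{\mathrm X}\|_2$. Alternatively, for Gaussian surrogates such as DeepGaLA or a GP, all moments are controlled by the second, so the general-$\delta$ bound follows from the $C_S$ assumption. Finally, taking the $L^2_{\pi^{y,F}}$ norm in $\theta$, combined with Step 2, yields the second inequality with $C_2$ independent of $N$.
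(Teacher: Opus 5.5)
Your Steps 1, 2 and 3a are correct and use the same ingredients as the paper's proof: a Jensen lower bound on the surrogate likelihood, Lipschitz continuity of $\exp(-x)$ on $[0,\infty)$, and a change of measure with bounded density ratio $Z_F/Z_S$. Working with the log-ratio $r=\Phi_S-\Phi_F$ and $1-\min(1,e^{x})\le|x|$ is a tidier packaging of the paper's bound through $|1-l_F/l_S|$. Your mean-case estimate is linear in the error pointwise, so no quadratic term has to be absorbed. Step 3b is also complete for $\delta=1$, which is the only case used later for Gaussian processes.

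The genuine gap is $\delta\in(0,1)$ for a general, non-Gaussian $\mu^{\mathrm{S,N}}$, and the repair you sketch does not close it. Write $\varepsilon=\|\mathcal{G}_{\mathrm{X}}^{\mathrm{F}}(\theta)-\mathcal{G}_{\mathrm{X}}^{\mathrm{S,N}}(\theta)\|_2$ and $M(\theta)=(\mathbb{E}_{\mu^{\mathrm{S,N}}}[\varepsilon^{1+\delta}])^{1/(1+\delta)}$. After $\min(1,ab)\le(ab)^{\kappa}$ with $\kappa<1$, every H\"older split gives a bound homogeneous of degree $\kappa$ in the error, i.e.\ of order $M(\theta)^{\kappa}$. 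That is strictly weaker than the claimed linear bound when the error is small. The Gaussian fallback does not cover the theorem as stated.

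The paper makes the same H\"older split and simply replaces $(\mathbb{E}_{\mu^{\mathrm{S,N}}}\|y-\mathcal{G}_{\mathrm{X}}^{\mathrm{S,N}}\|_2^{(1+\delta)/\delta})^{\delta/(1+\delta)}$ by $\|y\|_2+C_{\mathrm{S}}$. It also absorbs $\mathbb{E}_{\mu^{\mathrm{S,N}}}[\varepsilon^{2}]$ into $M(\theta)$. Under a second-moment hypothesis, both steps are justified only for $\delta=1$. So you have found a real weak point, not missed an idea the paper supplies.

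The missing observation is that the random factor need not appear at all. Expand $\|y-\mathcal{G}_{\mathrm{X}}^{\mathrm{S,N}}\|_\Gamma^2$ around the bounded $\mathcal{G}_{\mathrm{X}}^{\mathrm{F}}$; equivalently, use $\|\mathcal{G}_{\mathrm{X}}^{\mathrm{S,N}}\|_2\le C_{\mathrm{F}}+\varepsilon$ in your Step 3b estimate. This gives $|\Phi^{\mathrm{S,N}}-\Phi_F|\le A\varepsilon+B\varepsilon^{2}$ with $A=\|\Gamma^{-1}\|(\|y\|_2+C_{\mathrm{F}})$ and $B=\tfrac12\|\Gamma^{-1}\|$. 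Combining your truncation $|e^{-s}-e^{-t}|\le\min(1,|s-t|)$ with $\min(1,x^{2})\le x^{1+\delta}$ for $0<\delta\le 1$ yields
\begin{align*}
\mathbb{E}_{\mu^{\mathrm{S,N}}}\bigl|e^{-\Phi^{\mathrm{S,N}}}-e^{-\Phi_F}\bigr|
&\le\mathbb{E}_{\mu^{\mathrm{S,N}}}\bigl[\min(1,A\varepsilon+B\varepsilon^{2})\bigr]
\le A\,\mathbb{E}_{\mu^{\mathrm{S,N}}}[\varepsilon]+\max(1,B)\,\mathbb{E}_{\mu^{\mathrm{S,N}}}[\varepsilon^{1+\delta}]\\
&\le\bigl(A+\max(1,B)(C_{\mathrm{F}}+C_{\mathrm{S}})^{\delta}\bigr)M(\theta).
\end{align*}
The last step uses $\mathbb{E}[\varepsilon^{1+\delta}]=M^{\delta}M$ and $M\le(\mathbb{E}[\varepsilon^{2}])^{1/2}\le C_{\mathrm{F}}+C_{\mathrm{S}}$. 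Feeding this into your log-Lipschitz step and Step 2 gives the marginal bound for every $\delta\in(0,1]$ under the stated second-moment assumptions, with $C_2$ independent of $N$.
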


\begin{pf}
    
It has been proved that the fine acceptance probability, \eqref{eq:fine_prob}, is given by \cite{preco_mcmc,multilevel_da} 
\begin{equation*}
    \alpha(\tilde{\theta} | \theta) 
=\min\left(1, \frac{\pi^{y,F}(\tilde{\theta})\, \pi^{y,S}(\theta)}{\pi^{y,F}(\theta)\, \pi^{y,S}(\tilde{\theta})} \right)= \min\left(1, \frac{l_{F}(\tilde{\theta}) \pi_{0}(\tilde{\theta}) l_{S}(\theta)\pi_{0}(\theta)}{l_{F}(\theta)\pi_{0} (\theta) l_{S}(\tilde{\theta})\pi_{0}(\tilde{\theta})} \right) =  \min\left(1, \frac{l_{F}(\tilde{\theta})\, l_{S}(\theta)}{l_{F}(\theta)\, l_{S}(\tilde{\theta})} \right)
\end{equation*}
where $l_{i}(\theta) = l_{i}(y \mid \theta)$, for $i \in \{S, F\}$, denotes the likelihood under the surrogate and fine models, respectively. 
Now, let us find a bound for the probability of rejection, $1-\alpha(\tilde{\theta}|\theta)$. If the ratio $\frac{l_{F}(\tilde{\theta})l_{S}(\theta)}{l_{F}(\theta)l_{S}(\tilde{\theta})} \geq 1$, then $1-\alpha(\tilde{\theta}|\theta)=0$. Otherwise, we have that
\begin{equation*}
\begin{split}
      1-\alpha(\tilde{\theta}|\theta) = \left( 1 - \frac{l_{F}(\tilde{\theta})}{l_{S}(\tilde{\theta})} \right) + \left(\frac{l_{F}(\tilde{\theta})l_{S}(\theta)}{l_{F}(\theta)l_{S}(\tilde{\theta})} \right)\left( 1-\frac{l_{F}(\theta)}{l_{S}(\theta)}\right)
      \leq \left| 1 - \frac{l_{F}(\tilde{\theta})}{l_{S}(\tilde{\theta})} \right| + \left| 1-\frac{l_{F}(\theta)}{l_{S}(\theta)}\right|.
\end{split}
\end{equation*}
The last expression is valid for both approximations to the posterior distribution, the mean and the marginal. 
Let us first focus on the mean approximation to the posterior distribution and consider either of the last two terms. 
By using the definition of the likelihood, we have
\begin{equation*}
\begin{split}
      \left| 1 -\frac{l_{F}(\theta)}{l_{S_{\mathrm{mean}}}(\theta)} \right| &= \left| \frac{\exp \left(-\frac{1}{2} \| y -m_{\mathrm{S,N}}^{\mathcal{G}}(\theta) \|_{\Gamma}^{2}\right) -\exp \left( -\frac{1}{2} \| y -\mathcal{G}_{\mathrm{X}}^\mathrm{F}(\theta) \|_{\Gamma}^{2}\right)}{\exp \left(-\frac{1}{2} \| y -m_{\mathrm{S,N}}^{\mathcal{G}}(\theta) \|_{\Gamma}^{2}\right)} \right|\\
      &\leq \left|\frac{1}{ \exp \left(-\frac{1}{2} \| y -m_{\mathrm{S,N}}^{\mathcal{G}}(\theta) \|_{\Gamma}^{2}\right) } \right| \left| \exp \left(-\frac{1}{2} \| y -m_{\mathrm{S,N}}^{\mathcal{G}}(\theta) \|_{\Gamma}^{2}\right) -\exp \left( -\frac{1}{2} \| y -\mathcal{G}_{\mathrm{X}}^\mathrm{F}(\theta) \|_{\Gamma}^{2}\right) \right|.
\end{split}
\end{equation*}
 Using the triangle inequality and the inequality $\|x\|_{\Gamma}^{2}\leq \lambda_{\text{max}}\|x\|_{2}^{2}$, where $\lambda_{\text{max}}$ is the maximum eigenvalue of $\Gamma^{-1}$, the first term of the last inequality can be bounded in the following way
\begin{equation*}
\begin{split}
    \left|\frac{1}{ \exp \left(-\frac{1}{2} \| y -m_{\mathrm{S,N}}^{\mathcal{G}}(\theta) \|_{\Gamma}^{2}\right) } \right|= \exp \left(\frac{1}{2} \| y -m_{\mathrm{S,N}}^{\mathcal{G}}(\theta) \|_{\Gamma}^{2}\right)
    &\leq \exp\left(\lambda_{\text{max}}( \| y \|^{2}_{2} + \| m_{\mathrm{S,N}}^{\mathcal{G}}(\theta) \|^{2}_{2})\right)\\
    &\leq\exp\left(\lambda_{\text{max}}( \| y \|^{2}_{2} +C_{\mathrm{S}}^2)\right),
\end{split}
\end{equation*}
where we have taken the supremum over $\theta$, and noting that $ \| m_{\mathrm{S,N}}^{\mathcal{G}}(\theta) \|^{2}_{2}=\big\| \mathbb{E}_{\mu^{\mathrm{S,N}}}[\mathcal{G}_{\mathrm{X}}^{\mathrm{S,N}}(\theta)] \big\|^{2}_{2}$, which using Jensen's inequality can be bounded by $ C_{\mathrm{S}}^2$ by assumption. Then, define $C_{B}=\exp\left(\lambda_{\text{max}}( \| y \|^{2}_{2} +C_{\mathrm{S}}^2) \right)$, and use the fact that the exponential function $\exp(-x)$ is locally Lipschitz on $[0,\infty)$, together with the triangle inequality, we obtain 
\begin{equation*}
\begin{split}
      \left| 1 -\frac{l_{F}(\theta)}{l_{S_{\mathrm{mean}}}(\theta)} \right|  &\leq \frac{1}{2}C_B\left| \| y -\mathcal{G}_{\mathrm{X}}^\mathrm{F}(\theta) \|_{\Gamma}^{2} -\| y -m_{\mathrm{S,N}}^{\mathcal{G}}(\theta) \|_{\Gamma}^{2}\right|\\
      &\leq \frac{1}{2}C_B\left| (\| y -m_{\mathrm{S,N}}^{\mathcal{G}}(\theta) \|_{\Gamma} + \| \mathcal{G}_{\mathrm{X}}^\mathrm{F}(\theta) -  m_{\mathrm{S,N}}^{\mathcal{G}}(\theta)\|_{\Gamma})^{2} -\| y -m_{\mathrm{S,N}}^{\mathcal{G}}(\theta) \|_{\Gamma}^{2}\right|\\
      &=\frac{1}{2}C_B(2\| y -m_{\mathrm{S,N}}^{\mathcal{G}}(\theta) \|_{\Gamma} \| \mathcal{G}_{\mathrm{X}}^\mathrm{F}(\theta) -  m_{\mathrm{S,N}}^{\mathcal{G}}(\theta)\|_{\Gamma} +  \| \mathcal{G}_{\mathrm{X}}^\mathrm{F}(\theta) -  m_{\mathrm{S,N}}^{\mathcal{G}}
      (\theta)\|_{\Gamma}^{2})\\
      &\leq \frac{1}{2}C_B(2\lambda_{\text{max}}\| y -m_{\mathrm{S,N}}^{\mathcal{G}}(\theta) \|_{2} \| \mathcal{G}_{\mathrm{X}}^\mathrm{F}(\theta) -  m_{\mathrm{S,N}}^{\mathcal{G}}(\theta)\|_{2} +  \lambda_{\text{max}}\| \mathcal{G}_{\mathrm{X}}^\mathrm{F}(\theta) -  m_{\mathrm{S,N}}^{\mathcal{G}}
      (\theta)\|_{2}^{2}).
\end{split}
\end{equation*}
Note that $\| y - m_{\mathrm{S,N}}^{\mathcal{G}}(\theta) \|_{2} \leq \| y \|_{2} + \| m_{\mathrm{S,N}}^{\mathcal{G}}(\theta) \|_{2}$ can be uniformly bounded using similar arguments as before.

We now need to take the expectation of the above bound with respect to $\pi^{y,F}$ (for $\theta$) and $\pi^{y,S}$ (for $\tilde{\theta}$). Note that for any random variable $X$, we have
\begin{align*}
\mathbb{E}_{\pi^{y,S}}\left[ X \right] &= \int_{\mathcal{T}} X \frac{1}{Z_{S}} \left (\exp \left(-\frac{1}{2} \| y- m_{\mathrm{S,N}}^{\mathcal{G}}(\theta)\|_{\Gamma}^{2} \right)\right) \pi_{0}(\theta) \\
&= \int_{\mathcal{T}} X \frac{Z_F}{Z_{S}} \frac{\left (\exp \left(-\frac{1}{2} \| y- m_{\mathrm{S,N}}^{\mathcal{G}}(\theta)\|_{\Gamma}^{2} \right)\right)}{\left (\exp \left(-\frac{1}{2} \| y- \mathcal{G}_{\mathrm{X}}^\mathrm{F}(\theta)\|_{\Gamma}^{2} \right)\right)} \pi^{y,F}(\theta) \\
&\leq C_\mathrm{S,F} \, \mathbb{E}_{\pi^{y,F}}\left[ X \right],
\end{align*}
where the constant $C_\mathrm{S,F} = \frac{Z_\mathrm{F}}{Z_\mathrm{S}} \exp\left( \frac{\lambda_\mathrm{max}}{2} (\|y\|^{2}_2 + C^{2}_\mathrm{F}) \right)$ is finite by assumption. Taking the expectation with respect to $\zeta$, and applying Minkowski's inequality and Hölder’s inequality with $p=\infty, q=1$, we obtain 
%
\begin{equation*}
\begin{split}
\mathbb{E}_{\zeta}\left[1-\alpha_\mathrm{mean}(\tilde{\theta}|\theta)\right] 
& \leq (1+C_\mathrm{S,F}) \lambda_\mathrm{max} C_B(\|y\|_2 + C_\mathrm{S}) \, \left(\mathbb{E}_{\pi^{y,F}} \left[ \| \mathcal{G}_{\mathrm{X}}^\mathrm{F} -  m_{\mathrm{S,N}}^{\mathcal{G}} \|_2 \right] + \mathbb{E}_{\pi^{y,F}} \left[ \| \mathcal{G}_{\mathrm{X}}^\mathrm{F} -  m_{\mathrm{S,N}}^{\mathcal{G}}\|_2^{2} \right] \right) \\
  & \leq C_1 \| \mathcal{G}_{\mathrm{X}}^\mathrm{F} -  m_{\mathrm{S,N}}^{\mathcal{G}} \|_{L_{\pi^{y,F}}^{2}(\mathcal T)} .
\end{split}
\end{equation*}
The proof for the marginal approximation is similar. We have
\begin{equation*}
\begin{split}
   &\left|1-\frac{l_{F}(\theta)}{l_{S_\mathrm{marginal}}(\theta)} \right|=\left|1-\frac{\exp\left(  -\frac{1}{2} \| y -\mathcal{G}_{\mathrm{X}}^\mathrm{F}(\theta) \|_{\Gamma}^{2} \right)}{\mathbb{E}_{\mu^{\mathrm{S,N}}}\big[ \exp\left( -\frac{1}{2} \| y -\mathcal{G}_{\mathrm{X}}^{\mathrm{S,N}}(\theta) \|_{\Gamma}^{2}\right)\big]} \right|\\
   &\leq \left|\frac{1}{ \exp\left(\mathbb{E}_{\mu^{\mathrm{S,N}}}\big[ -\frac{1}{2} \| y -\mathcal{G}_{\mathrm{X}}^{\mathrm{S,N}}(\theta) \|_{\Gamma}^{2}\big]\right) } \right|\mathbb{E}_{\mu^{\mathrm{S,N}}}\left[\left| \exp\left(-\frac{1}{2} \| y -\mathcal{G}_{\mathrm{X}}^{\mathrm{S,N}}(\theta) \|_{\Gamma}^{2}\right)- \exp\left(  -\frac{1}{2} \| y -\mathcal{G}_{\mathrm{X}}^\mathrm{F}(\theta) \|_{\Gamma}^{2}\right)\right|\right].
\end{split}
\end{equation*}
where Jensen’s inequality was applied to the exponential function as well as the absolute value. Note that the first term in the last inequality can be bounded again by the assumption, i.e.,
\begin{equation*}
    \left|\frac{1}{ \exp\left(\mathbb{E}_{\mu^{\mathrm{S,N}}}\big[ -\frac{1}{2} \| y -\mathcal{G}_{\mathrm{X}}^{\mathrm{S,N}}(\theta) \|_{\Gamma}^{2}\big]\right) } \right|= \exp\left(\mathbb{E}_{\mu^{\mathrm{S,N}}}\big[ \frac{1}{2} \| y -\mathcal{G}_{\mathrm{X}}^{\mathrm{S,N}}(\theta) \|_{\Gamma}^{2}\big]\right)
    \leq \exp\left(\lambda_{\text{max}}(\| y \|^{2}_{2} + C_{\mathrm{S}}^2)\right) = C_{\mathrm{B}},
\end{equation*}
and, by using the local Lipschitz continuity of the exponential function, we proceed as in the mean approximation to obtain,
\begin{equation*}
\begin{split}
      \left|1- \frac{l_{F}(\theta)}{l_{S_\mathrm{marginal}}(\theta)}\right|
      &\leq \frac{1}{2} C_B\mathbb{E}_{\mu^{\mathrm{S,N}}}\left[\left| \| y -\mathcal{G}_{\mathrm{X}}^\mathrm{F}(\theta) \|_{\Gamma}^{2} -\| y -\mathcal{G}_{\mathrm{X}}^{\mathrm{S,N}}(\theta) \|_{\Gamma}^{2}\right|\right]\\
      &\leq\frac{1}{2}C_B\mathbb{E}_{\mu^{\mathrm{S,N}}}\left[2\lambda_{\text{max}}\| y -\mathcal{G}_{\mathrm{X}}^{\mathrm{S,N}}(\theta) \|_{2} \| \mathcal{G}_{\mathrm{X}}^\mathrm{F}(\theta) -  \mathcal{G}_{\mathrm{X}}^{\mathrm{S,N}}(\theta)\|_{2} +  \lambda_{\text{max}}\| \mathcal{G}_{\mathrm{X}}^\mathrm{F}(\theta) -  \mathcal{G}_{\mathrm{X}}^{\mathrm{S,N}}(\theta)\|_{2}^{2}\right].
\end{split}
\end{equation*}
Applying Minkowski's inequality and Hölder's inequality with conjugate exponents $p = \frac{1+\delta}{\delta}$ and $q = 1+\delta$ for some $0 < \delta \leq 1$, to the first term in the last inequality, we have
\begin{equation*}
\begin{split}
      \left| 1-\frac{l_{F}(\theta)}{l_{S_\mathrm{marginal}}(\theta)}  \right|
      &\leq \lambda_{\text{max}}C_B\left(\left( \mathbb{E}_{\mu^{\mathrm{S,N}}}\left[ \| y -\mathcal{G}_{\mathrm{X}}^{\mathrm{S,N}}(\theta) \|_{2}^{(1+ \delta)/ \delta} \right] \right)^{\delta/ (1+ \delta)}\left(  \mathbb{E}_{\mu^{\mathrm{S,N}}}\left[ \| \mathcal{G}_{\mathrm{X}}^\mathrm{F}(\theta) -  \mathcal{G}_{\mathrm{X}}^{\mathrm{S,N}}(\theta)\|_{2}^{1+ \delta}\right] \right)^{1/(1+\delta)} \right.\\
      &+ \left. \mathbb{E}_{\mu^{\mathrm{S,N}}}\left[\| \mathcal{G}_{\mathrm{X}}^\mathrm{F}(\theta) -  \mathcal{G}_{\mathrm{X}}^{\mathrm{S,N}}(\theta)\|_{2}^{2})\right]\right).
\end{split}
\end{equation*}
Taking the expectation with respect to $\zeta$, and applying Minkowski's inequality and Hölder’s inequality with $p=\infty, q=1$, we obtain 
%
\begin{equation*}
\begin{split}
\mathbb{E}_{\zeta}\left[ 1-\alpha_{\mathrm{marginal}}(\tilde{\theta} | \theta) \right] & \leq (1+C_\mathrm{S,F}) \lambda_\mathrm{max}C_B (\|y\|_2 + C_\mathrm{S}) \, \left( \mathbb{E}_{\pi^{y,F}} \left[ \mathbb{E}_{\mu^{\mathrm{S,N}}} \left[ \| \mathcal{G}_{\mathrm{X}}^\mathrm{F}(\theta) -  \mathcal{G}_{\mathrm{X}}^{\mathrm{S,N}}(\theta)\|_{2}^{(1+ \delta)/ \delta} \right]^{\delta/ (1+ \delta)} \right] \right. \\
&+ \left. \mathbb{E}_{\pi^{y,F}} \left[ \mathbb{E}_{\mu^{\mathrm{S,N}}} \left[ \| \mathcal{G}_{\mathrm{X}}^\mathrm{F}(\theta) -  \mathcal{G}_{\mathrm{X}}^{\mathrm{S,N}}(\theta)\|_{2}^{2}\right] \right] \right)\\
&\leq C_{2} \left\| \left( \mathbb{E}_{\mu^{\mathrm{S,N}}}\left[ \| \mathcal{G}_{\mathrm{X}}^\mathrm{F} -  \mathcal{G}_{\mathrm{X}}^{\mathrm{S,N}}\|_{2}^{1+ \delta}\right] \right)^{1/(1+ \delta)} \right\|_{L_{\pi^{y,F}}^2(\mathcal T)}.
\end{split}
\end{equation*}
This completes the proof.
\end{pf}

In the particular case of a Gaussian process surrogate model, an application of \cref{prop:acceptance} and \cref{prop:gp_rate} gives the following convergence rates.

\begin{proposition}
\label{prop:gp_rates}
Suppose we use a standard Gaussian process surrogate model $\mathcal{G}_{\mathrm{X}}^\mathrm{{GP,N}}$ with $K(\theta,\theta') = k_p(\theta,\theta') I_{d_{y}}$, as defined in \eqref{eq:gp_pred}, 
and that the assumptions of  \cref{prop:gp_rate} are satisfied for $g = \mathcal G^\mathrm{F}_\mathrm{X}$.
Then there exist constants $C_{1}$, $C_{2}$, and $C_{3}$, independent of $N$, such that
\begin{equation*}
  \mathbb{E}_{\zeta}\left[ 1-\alpha_{\mathrm{mean}}(\tilde{\theta} | \theta) \right] 
\leq C_{1}h_{U}^{\tau}, 
\quad \text{and} \quad  
\mathbb{E}_{\zeta}\left[ 1-\alpha_{\mathrm{marginal}}(\tilde{\theta} | \theta) \right] 
\leq C_{2} h_{U}^{\tau} + C_{3}h_{U}^{\tau - \frac{d_{\theta}}{2}}.  
\end{equation*}
\end{proposition}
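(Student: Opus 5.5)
The plan is to feed the GP error bounds of \cref{prop:gp_rate} into the two estimates of \cref{prop:acceptance}. Here $h_U$ denotes the fill distance $h_\Theta$ of the design points. First I will check the hypotheses of \cref{prop:acceptance}. Since $H^\tau(\mathcal T)$ with $\tau>d_\theta/2$ embeds continuously into $C(\overline{\mathcal T})$, the fine model $\mathcal G^{\mathrm F}_{\mathrm X}$ is bounded. The GP mean is then bounded too: $\|m^{\mathcal G}_{\mathrm{GP,N}}\|_{L^\infty}\le \|\mathcal G^{\mathrm F}_{\mathrm X}\|_{L^\infty}+C\|\mathcal G^{\mathrm F}_{\mathrm X}-m^{\mathcal G}_{\mathrm{GP,N}}\|_{H^{\beta}}$ for some $\beta\in(d_\theta/2,\tau]$, and the second term is uniformly bounded for small $h_U$. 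The predictive variance satisfies $k_N(\theta,\theta)\le k_p(\theta,\theta)$, which is bounded. Together these give $C_{\mathrm S}$ independent of $N$.

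For the mean bound, I use that the $L^2_{\pi^{y,F}}$ norm is controlled by $\sup\pi^{y,F}$ times the Lebesgue $L^2$ norm. The posterior density is bounded because the likelihood is at most $1$ and the prior density is assumed bounded. Applying \cref{prop:gp_rate} componentwise with $\beta=0$ gives $\|\mathcal G^{\mathrm F}_{\mathrm X}-m^{\mathcal G}_{\mathrm{GP,N}}\|_{L^2_{\pi^{y,F}}}\le C h_U^{\tau}$. Substituting this into the first inequality of \cref{prop:acceptance} yields $C_1h_U^\tau$.

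For the marginal bound I take $\delta=1$. Writing $\mathcal G^{\mathrm{GP,N}}_{\mathrm X}(\theta)=m^{\mathcal G}_{\mathrm{GP,N}}(\theta)+\sqrt{k_N(\theta,\theta)}\,\xi$ with $\xi\sim\mathcal N(0,I_{d_y})$, Minkowski's inequality gives
\[
\Big(\mathbb E\|\mathcal G^{\mathrm F}_{\mathrm X}-\mathcal G^{\mathrm{GP,N}}_{\mathrm X}\|_2^2\Big)^{1/2}\le \|\mathcal G^{\mathrm F}_{\mathrm X}(\theta)-m^{\mathcal G}_{\mathrm{GP,N}}(\theta)\|_2+\sqrt{d_y}\,k_N(\theta,\theta)^{1/2}.
\]
The first term contributes $C_2h_U^\tau$ exactly as in the mean case. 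For the second term I need $\|k_N(\cdot,\cdot)^{1/2}\|_{L^2}$. Here I use the standard identity that $k_N(\theta,\theta)^{1/2}$ equals the power function, i.e.\ the supremum of $|g(\theta)-m^g_N(\theta)|$ over $g$ in the unit ball of $H_k$. So it suffices to bound the pointwise ($L^\infty$) interpolation error uniformly over that unit ball.

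I expect this pointwise bound to be the main obstacle. I will obtain it from \cref{prop:gp_rate} via a Sobolev sampling inequality, or equivalently an $L^\infty$ version of the Wendland estimate, which gives $\|g-m^g_N\|_{L^\infty}\le C h_U^{\tau-d_\theta/2}\|g\|_{H^\tau}$. Using the norm equivalence between $H_k$ and $H^\tau$, this yields $\sup_\theta k_N(\theta,\theta)^{1/2}\le C h_U^{\tau-d_\theta/2}$. Integrating against the bounded density $\pi^{y,F}$ then gives the $C_3h_U^{\tau-d_\theta/2}$ term.

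If only the Sobolev-norm form of \cref{prop:gp_rate} is accepted, I would instead take $\beta$ just above $d_\theta/2$ and apply the embedding into $L^\infty$. That route loses an arbitrarily small power, so I will quote the $L^\infty$ estimate from \cite{Wendland_2004} directly.
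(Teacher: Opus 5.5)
Your proposal is correct and has the same skeleton as the paper's proof. You check the hypotheses of \cref{prop:acceptance}, obtain the mean bound from \cref{prop:gp_rate} with $\beta=0$, and for the marginal bound take $\delta=1$, split into mean error plus predictive variance, and control the variance through the power-function identity. The routine differences are these. For $C_{\mathrm S}$ the paper cites a Fernique-type argument from \cite{aretha_posterior}, whereas you use the elementary Gaussian second-moment formula $\|m(\theta)\|_2^2+d_y k_N(\theta,\theta)$ together with $k_N\le k_p$ and a uniform bound on the mean. Your pointwise Minkowski step is equivalent to the paper's Jensen, $(a+b)^2\le 2a^2+2b^2$ and $\sqrt{x+y}\le\sqrt{x}+\sqrt{y}$ chain.

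The substantive difference is the pointwise bound on the power function. The paper bounds $\sup_\theta|g(\theta)-m_N^g(\theta)|$ by $\|g-m_N^g\|_{H^{d_\theta/2}(\mathcal T)}$ via the Sobolev embedding, then applies \cref{prop:gp_rate} with $\beta=d_\theta/2$. However, $H^{d_\theta/2}(\mathcal T)\hookrightarrow L^\infty(\mathcal T)$ fails at exactly this critical exponent, so that step is not valid as written. Your route quotes the $L^\infty$ (sampling-inequality) estimate $\|g-m_N^g\|_{L^\infty}\le C h_U^{\tau-d_\theta/2}\|g\|_{H^\tau}$ directly from \cite{Wendland_2004}. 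That is the right way to get exactly the exponent $\tau-d_\theta/2$, and your fallback with $\beta=d_\theta/2+\epsilon$ honestly records the $\epsilon$ loss the embedding route would incur. You also correctly work with the posterior variance $k_N(\theta,\theta)$ and the interpolant $m_N^g$ of $g$. The paper's displayed chain instead writes the prior kernel $k_p(\theta,\theta)$, which does not decay in $N$, and the surrogate mean $m^{\mathcal G}_{\mathrm{GP,N}}$ in place of $m_N^g$.
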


\begin{pf}
First, we note that in the case of a Gaussian surrogate model, where $\mathcal{G}_{\mathrm{X}}^{\mathrm{S,N}}(\theta) \sim \mathrm{GP}(m_{\mathrm{S,N}}^{\mathcal{G}}(\theta), \Gamma_{\mathrm{S,N}}(\theta,\theta'))$, the assumption in \cref{prop:acceptance} that $\sup_{\theta \in \mathcal{T}} \mathbb{E}_{\mu^{S,N}}\|\mathcal{G}_{\mathrm{X}}^{\mathrm{S,N}}(\theta)\|^2_{2} \leq C_{\mathrm{S}}^2$ has been proven to hold in \cite{aretha_posterior} using Fernique's theorem, where in particular it is shown that the constant $C_{\mathrm{S}}$ can be chosen independent of $N$. The other assumptions of \cref{prop:acceptance} are satisfied by \cref{prop:gp_rate}.
 
The first claim follows from \cref{prop:gp_rate} with $\beta=0$ and the first statement of \cref{prop:acceptance}, using that $\|\cdot\|_{L^2_{\pi^{y,F}}(\mathcal T)}$ can be bounded in terms of $\|\cdot\|_{L^2(\mathcal T)}$ as in the proof of \cref{prop:acceptance}. 
For the second claim, let us choose $\delta = 1$ in the second statement in \cref{prop:acceptance} and apply Jensen's inequality; then we have
\begin{equation*}
  \mathbb{E}_{\zeta}\left[ 1-\alpha_{\mathrm{marginal}}(\tilde{\theta} | \theta_{}) \right] 
\leq C_{2} \mathbb{E}_{\pi^{y,F}}\left[  \left( \mathbb{E}_{\mu^\mathrm{{GP,N}}}\left[ \| \mathcal{G}_{\mathrm{X}}^\mathrm{F} -  \mathcal{G}_{\mathrm{X}}^\mathrm{{GP,N}}\|_{2}^{2}\right] \right)^{1/2} \right] \leq  C_{2} \left( \mathbb{E}_{\pi^{y,F}}\left[  \mathbb{E}_{\mu^\mathrm{{GP,N}}}\left[ \| \mathcal{G}_{\mathrm{X}}^\mathrm{F} -  \mathcal{G}_{\mathrm{X}}^\mathrm{{GP,N}}\|_{2}^{2}\right]\right] \right)^{1/2} .  
\end{equation*}
Apply the triangle inequality gives
\begin{equation*}
\mathbb{E}_{\pi^{y,F}}\left[ \mathbb{E}_{\mu^\mathrm{{GP,N}}}\left[ \| \mathcal{G}_{\mathrm{X}}^\mathrm{F} -  \mathcal{G}_{\mathrm{X}}^\mathrm{{GP,N}}\|_{2}^{2}\right] \right]  \leq 2\mathbb{E}_{\pi^{y,F}}\left[ \mathbb{E}_{\mu^\mathrm{{GP,N}}}\left[ \| \mathcal{G}_{\mathrm{X}}^\mathrm{F} -  m_\mathrm{{GP,N}}^{\mathcal{G}}\|_{2}^{2}\right] \right] +2\mathbb{E}_{\pi^{y,F}}\left[ \mathbb{E}_{\mu^\mathrm{{GP,N}}}\left[ \| m_\mathrm{{GP,N}}^{\mathcal{G}} -  \mathcal{G}_{\mathrm{X}}^\mathrm{{GP,N}}\|_{2}^{2}\right] \right].
\end{equation*}
Using \cref{prop:gp_rate}, the first term can be bounded by
\begin{equation*}
   \mathbb{E}_{\pi^{y,F}}\left[ \mathbb{E}_{\mu^\mathrm{{GP,N}}}\left[ \| \mathcal{G}_{\mathrm{X}}^\mathrm{F} - m_\mathrm{{GP,N}}^{\mathcal{G}}\|_{2}^{2}\right] \right] = \mathbb{E}_{\pi^{y,F}}\left[ \| \mathcal{G}_{\mathrm{X}}^\mathrm{F} -  m_\mathrm{{GP,N}}^{\mathcal{G}}\|_{2}^{2}\right] \leq C_{2}^2 h_{U}^{2\tau}. 
\end{equation*}
For the second term,  we follow \cite{aretha_posterior} and use the equality $k_p(\theta,\theta) = \sup_{\|g\|_{H_{k}}=1}|g(\theta) - m_\mathrm{{GP,N}}^{\mathcal{G}}(\theta)|^{2}$, the Sobolev embedding theorem and \cref{prop:gp_rate} to obtain
\begin{equation*}
\begin{split}
\mathbb{E}_{\pi^{y,F}}\left[ \mathbb{E}_{\mu^\mathrm{{GP,N}}}\left[ \| m_\mathrm{{GP,N}}^{\mathcal{G}}(\theta) -  \mathcal{G}_{\mathrm{X}}^\mathrm{{GP,N}}(\theta)\|_{2}^{2}\right] \right] 
&= d_{y}\int_{\mathcal{T}} k_p(\theta,\theta)\pi^{y,F}(\theta)d\theta\\
&\leq \frac{d_{y}}{Z_{\mathrm{F}} } \sup_{\theta \in \mathcal{T}}\sup_{\|g\|_{H_{k}}=1}|g(\theta) - m_\mathrm{{GP,N}}^{\mathcal{G}}(\theta)|^{2}\\
&\leq  \frac{C_\mathrm{emb}d_{y}}{Z_{\mathrm{F}} }\sup_{\|g\|_{H_{k}}=1} \|g - m_\mathrm{{GP,N}}^{\mathcal{G}}\|^{2}_{H^{\frac{d_{\theta}}{2}}(\mathcal T)}\\
&\leq C_{3}^2 h_{U}^{2\tau - d_{\theta}}.
\end{split}
\end{equation*}
The result then follows from the inequality $\sqrt{x+y} \leq \sqrt{x} + \sqrt{y}$.
\end{pf}

For the case of neural network surrogates, we first note that convergence rates can be computed for the case of the mean approximation by using the result of \cref{prop:nn_rates}. In particular, if the operator $\mathcal{G}^\mathrm{F}_{\mathrm{X}}$ is sufficiently regular, then there exists a feedforward neural network $f_{a}(\cdot;\mathrm{W})$ satisfying
\begin{equation*}
    \mathbb{E}_{\zeta}\!\left[ 1 - \alpha_{\mathrm{mean}}(\tilde{\theta} \mid \theta_{j}) \right] 
\leq C_{1}\exp \left(-2^{-\frac{1}{2}}C_{2}M^{\frac{1}{2d_{\theta}+7}}\right),
\end{equation*}
where $M, C_1$ and $C_2$ are as in \cref{prop:nn_rates}. However, we emphasize here that the error bound obtained depends heavily on the neural network architecture, specifically the number of neurons per layer, the number of layers, and the use of the ReLU activation function. This construction does not fully align with the architecture used in our experiments to construct the neural network surrogate. Moreover, it remains unclear how to explicitly construct and train a neural network that attains these rates in practice.
In the case of the marginal approximation based on DeepGaLA, there is no theoretical background that could guide us toward establishing convergence rates. To the best of our knowledge, no existing theory provides insight into how the covariance matrix of DeepGaLA behaves or indeed converges to 0 as the size of the neural network increases.

\section{Numerical Experiments}\label{sec:5}

In this section, we study Bayesian inverse problems for three different differential equations: a one-dimensional boundary value problem, a two-dimensional elliptic PDE, and the Navier–Stokes equations in stream function formulation.  We start by briefly describing the numerical methods used to approximate the solutions to these problems and present the specific neural network architecture and training hyperparameters, along with details of the MCMC algorithm. The first two examples that we study are linear, and hence we compare our DeepGaLA approach with the PIGP, in terms of $\alpha_{\mathrm{val}}$ values, the bounds from \cref{prop:acceptance}, and the online evaluation time. In doing these comparisons we have decided to exclude the offline training time. The rationale behind this choice is that the two methods have fundamentally different training procedures. In particular,  for the PIGP model, training requires solving the differential equation in question for several values of $\theta$. Obtaining these solutions is problem-dependent: it can be relatively fast for simple cases, such as one-dimensional problems, but can become quite complex for high-dimensional problems or those defined on nonstandard domains. On the other hand, \cref{algo1} proposes a way to train a neural network; however, there are still many free parameters that affect training, such as the batch size, learning rate, and size of the training dataset. All these factors play an important role in the training cost.

\begin{table}[ht]
\centering
\begin{tabular}{lcc}
\toprule
\cmidrule(lr){1-3}
&$m_{\mathrm{S,N}}^{\mathcal{G}}$ & $\Gamma_{_\mathrm{{S,N}}}$\\
\midrule
$\mathrm{PIGP}$ & $2d_y(d_y + d_f + d_g)N + 3N$     & $(d_y + d_f + d_g)^3 N^3
+ 2 d_y^2 N(d_y + d_g + d_f)
+ 2 d_y^2$     \\
$\mathrm{dG}$  & $d_{y}(2 d_{L}d_n + d_L + (L-1)(5d_{n} + 2d_{n}^{2}) + d_{n}(6 d_{0}+ 9))$    &    $2d_{y}^{2}(d_n^2 + d_n)$ \\
\bottomrule
\end{tabular}
\caption{Online Cost of Evaluation for the PIGP and deepGaLA surrogate models}
\label{tab:cost}
\end{table}

Table \ref{tab:cost}   presents an online cost analysis of both the neural surrogate and the PIGP in terms of floating-point operations (FLOPs), considering the cost of evaluating the mean and variance at a new test input $\theta$. See \cref{appendix:costeval} for details of the derivations. As can be seen, the computational cost of evaluating the mean of the Gaussian process scales as $\mathcal{O}(N)$ with the number of training data points, while obtaining the variance scales as $\mathcal{O}(N^{3})$ due to the need to invert a matrix or solve a linear system, as in \eqref{eq:pigp_pred}. In contrast, the neural network exhibits a linear cost $\mathcal{O}(L)$ with respect to the number of layers and a quadratic cost $\mathcal{O}(d_n^{2})$ with respect to the number of neurons per hidden layer. The cost of obtaining the variance of DeepGALA requires evaluating the activations of the last hidden layer, which in turn requires evaluating the entire network. This can be obtained for free when evaluating the mean, as it is already computed during the forward pass in \texttt{PyTorch}. Therefore, the additional cost comes from matrix–vector multiplications and scales quadratically with the number of observed solutions $d_y$.

\subsection{1D Elliptic Boundary Balue Problem}
We consider  the following one-dimensional differential equation:
\begin{equation*}
\begin{split}
  -\frac{d}{dx}\left(\exp({\alpha(x,\theta)})\frac{du(x,\theta)}{dx} \right) = 4x&, \quad x\in D, \quad \theta \in \mathcal{T}, \\
  u(0,\theta) = 0, \quad u(1,\theta)=2&, \quad \text{on}  \quad \partial D
\end{split}
\end{equation*}
where   $D = (0,1)$. The coefficient $\alpha( \cdot,\theta) \in L^{\infty}(D)$ is defined by the expansion:
\begin{equation*}
     \alpha(x,\theta) = \sum_{n=1}^{d_{\theta}} \sqrt a_{n} \theta_{n} b_{n}(x),
\end{equation*}
where $a_n$ and $b_n$ correspond to the eigenvalues and eigenfunctions of the KL expansion of an exponential covariance kernel \cite{ghanem2003stochastic} and take the form $a_{n} = \frac{8}{w_{n}^{2} + 16}$ and $b_{n} = A_{n}\left(  \text{sin}(w_{n}x) + \frac{w_{n}}{4} \text{cos}(w_{n}x) \right)$, where $w_{n}$ is the $n_{th}$ solution of the equation $\text{tan} (w_{n}) = \frac{8w_{n}}{w_{n}^{2}-16}$ and $A_{n}$ is a normalization constant  such that $\|b_{n}\|_{L^2(D)} = 1$. 
We also assume that $\theta \in [-2,2]^{d_{\theta}} = \mathcal T$.

We now consider the inverse problem \eqref{inv_eq} where 
$
\mathcal{G}_{\mathrm{X}}(\theta)=\{u(x_{i},\theta) \}_{i=1}^{d_{y}}
$ and $\Gamma=\sigma^{2}I_{d_{y}}$, with $\sigma^{2}=10^{-4}$. Furthermore, we consider $d_{\theta} \in \{2, 3, 4, 5\}$ and $d_y = 6$ observations located within the domain $D$.  The observations were obtained by setting $\theta^{\dag} = [0.098, 0.430, 0.206, 0.090, -0.153]$ and using a FEM solver with 
$N_{\text{dof}}$ degrees of freedom and piece-wise linear basis functions. For the MCMC sampling, we used the RWMH algorithm with a uniform prior distribution 
$\mathrm{U}[-2,2]^{d_{\theta}}$ and drawn $2.5 \times 10^{6}$ samples from the posterior distributions. This number of samples was chosen so that the effective sample size is at least $10^{3}$, in particular for the problem in dimension 5. We consider the posterior distribution computed using the FEM, $\pi_{\text{FEM}}$, with $N_{\text{dof}} = 50$, as the ground 
truth. The DeepGaLA surrogate was constructed as follows: we used a neural network with one hidden layer, 60 neurons per layer, and Fourier embeddings with $d_{F} = 60$ and $\sigma_{FF} = 1$. The network was trained using \cref{algo1}, with 
$S = 5{,}000$ epochs, weights $\{ \lambda_i \}_{i=1} = 1$, and various sizes of the training set $\mathcal{D}$. 
The Laplace approximation was computed using the same training data. 

\begin{figure}[ht]
\centering
\includegraphics[width=\textwidth]{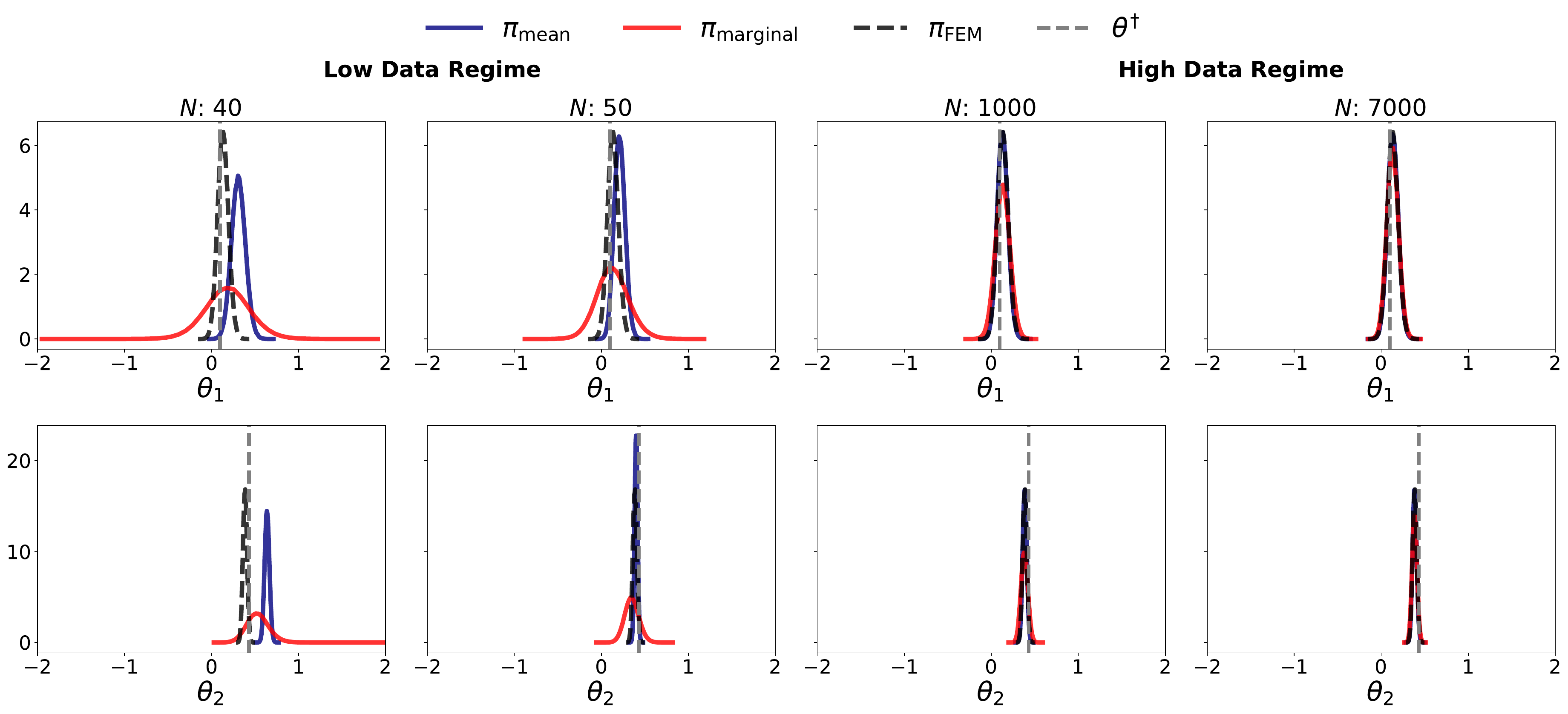}
\caption{1D BVP: Marginal ground-truth posterior distributions and the DeepGaLA mean and marginal approximations for different training set sizes.}
\label{bvp_fig:1}
\end{figure}

We now present in Figure \ref{bvp_fig:1} the mean and marginal posterior distributions obtained with DeepGaLA for different training sample sizes for $d_{\theta}=2$. As we can see in both cases, as the number of training points $\mathcal{D}$ increases, both the mean and the marginal posterior approximations converge to the ground truth posterior. However, the additional uncertainty of the marginal approximate posterior is beneficial in the low data regime, since it appropriately reflects the error in the approximation of the neural network, thus avoiding being overconfident around the wrong solution, as is the case with the mean approximate posterior. 

\begin{table}[ht]
\centering
\begin{tabular}{ccccccc}
\toprule
{$N$} & 
& 40 & 50 & 100 & 7{,}000 & 30{,}000\\
\midrule
\multirow{2}{*}{$\alpha_{\mathrm{val}}$ (\%)} 
& $\pi_{\mathrm{mean}}$ 
    & {1.45 $\pm$ 0.11}
    & {15.26$\pm$ 2.93}
    & {71.28$\pm$ 0.42}
    & {99.01$\pm$ 0.08}
    & {96.88$\pm$ 0.15} \\
& $\pi_{\mathrm{marginal}}$
    & {17.68$\pm$ 0.44}
    & {24.39$\pm$ 0.44}
    & {23.94$\pm$ 0.32}
    & {68.87$\pm$ 0.58}
    & {86.11$\pm$ 0.20} \\
\midrule

\multirow{2}{*}{$\|\cdot\|$}
& $\pi_{\mathrm{mean}}$
 &{$1.84 \times 10^{-1}$} 
 & {$3.28 \times 10^{-2}$} 
 & {$4.07 \times 10^{-3}$} 
 & {$3.41 \times 10^{-4}$} 
 & {$1.15 \times 10^{-3}$} \\
& $\pi_{\mathrm{marginal}}$
&{ $1.03 \times 10^{0}$ }
& {$1.99 \times 10^{0}$} 
& {$2.98 \times 10^{0}$} 
& {$5.05 \times 10^{-1}$} 
& {$7.50 \times 10^{-3}$} \\
\bottomrule
\end{tabular}
\caption{1D BVP: Mean $\pm$ standard deviation of $\alpha_{\mathrm{val}}$, together with the corresponding bounds for the DeepGaLA mean and marginal approximations, across different training set sizes for $d_{\theta}=2$.}
\label{bvp_table:1}
\end{table}

In Table \ref{bvp_table:1}, we present the values  of $\alpha_{\mathrm{val}}$, together with the error bounds established in \cref{prop:acceptance}, for $d_{\theta}=2$. We used a RW proposal to run \cref{algo2}, where the choice of the proposal variance $\beta$ was made carefully, since a poor choice can lead to inaccurate estimates of $\alpha_{\mathrm{val}}$ due to poor mixing of the DA chain. Therefore, we found that setting the proposal variance $\beta$ to the optimal value obtained for $\pi_{\mathrm{mean}}$, and using this same value to assess both approximate posteriors $\pi_{\mathrm{mean}}$ and $\pi_{\mathrm{marginal}}$, provides good mixing of the chains. Then,  $\alpha_{\mathrm{val}}$ was computed using Algorithm \ref{algo2} with $10^5$ evaluations of the FEM ground truth model, after which the samples were divided into groups of $N_F=10^{4}$ samples to obtain 10 realizations of $\alpha_{\mathrm{val}}$.  We provide the mean and standard deviation over these realizations in Table \ref{bvp_table:1}, and note that variations in the value of $\alpha_{\mathrm{val}}$ primarily come from a burn-in effect in \cref{algo2}.
The error bounds from \cref{prop:acceptance} were estimated by using $2.5\times 10^{5}$ samples from the FEM ground truth posterior, with the procedure repeated three times. We only report the mean value of the error bounds in \cref{bvp_table:1}, since the standard deviations were negligible.

The results in \cref{bvp_table:1} show that $\alpha_{\mathrm{val}}$ increases with the size of the training dataset while the corresponding error bounds become tighter, confirming the convergence of the mean and marginal approximations. In particular, $N = 7{,}000$ training samples are sufficient to obtain a posterior distribution close to the ground truth for the mean approximation. For $N = 30{,}000$, we observe a slight decrease in $\alpha_{\mathrm{val}}$ together with an increase in the error bound, which may be attributed to non-optimal training for this dataset size. Finally, consistent with results presented in Figure \ref{bvp_fig:1},  Table \ref{bvp_table:1} indicates that in the low-data regime, the marginal approximation achieves higher values of $\alpha_{\mathrm{val}}$ than the mean approximation, as it has a bigger overlap with the ground truth posterior. In our numerical experiments, although the actual values of $\alpha_{\mathrm{val}}$ may change, the general trends of $\alpha_{\mathrm{val}}$ increasing with $N$ and the marginal approximations having larger $\alpha_{\mathrm{val}}$ values than the mean approximations in the low-data regime, were consistently observed for varying values of $N_F$. Figure \ref{bvp_fig:2} further highlights that, in the low-data regime, the marginal approximation achieves a higher $\alpha_{\mathrm{val}}$ than the mean approximation across different parameter-space dimensions, highlighting the importance of incorporating uncertainty when data are limited or the parameter dimension is high. 

\begin{figure}[ht]
\centering
\includegraphics[width=1\textwidth]{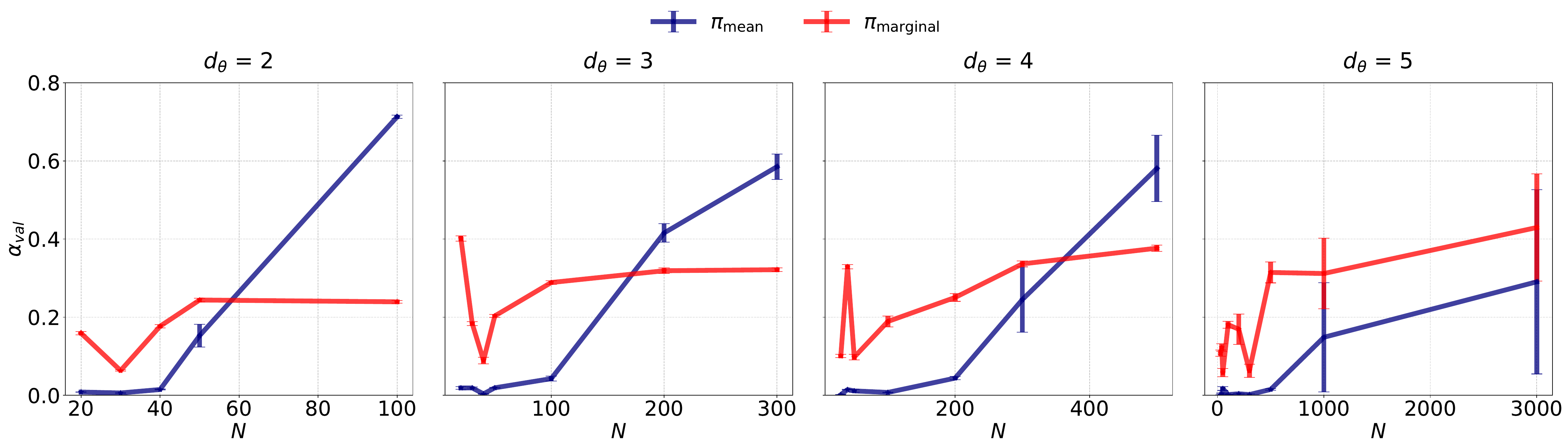}
\caption{1D BVP in the low-data regime: $\alpha_{\mathrm{val}}$ for the mean and marginal approximations across different values of $d_{\theta}$. }
\label{bvp_fig:2}
\end{figure}
\subsubsection{Comparison with PIGPs}

We now compare the  DeepGaLA surrogate with the PIGP surrogate in the high-data regime using the mean based approximate posterior. In the case of the PIGP surrogate, we have used a Matérn kernel of order $\frac{5}{2}$ for both the spatial and parameter spaces, while  $d_f = 10$, $d_g = 2$. We present the results of this comparison in Table \ref{tab:compa1}. As we can see the bounds in \cref{prop:acceptance} tighten as $\alpha_{\mathrm{val}}$ increases. Both surrogate models exhibit comparable evaluation times, with DeepGaLA being slightly faster. We do not report GPU times, as the evaluation of both surrogates at six points does not yield a significant difference; GPU acceleration would become advantageous in more computationally demanding scenarios.
Moreover, we can see that the evaluation time of the DeepGALA is essentially unaffected by the size of the training data; since the architecture remains nearly the same (aside from the number of inputs), its evaluation time remains stable as the dimension of the parameters grows. On the other hand, the evaluation time of the PIGP increases marginally with the size of the training data; however, the linear behavior expected from the cost analysis would become more apparent if a larger range of training points were considered.

\begin{table}[ht]
\centering
\begin{tabular}{ c c c c c c c}
\toprule
\multicolumn{6}{c}{1DBVP Inverse Problem: Mean Approximation} \\
\midrule
\multirow{3}{*}{$d_{\theta}$} & 
\multirow{3}{*}{S} & 
\multirow{3}{*}{$N$} &  
\multirow{3}{*}{$\|\cdot\|$} & 
\multirow{3}{*}{$\alpha_{\mathrm{val}}$(\%)} & 
{Mean } \\
 &  &  &  &  & Likelihood Eval.  \\
  &  &  &  &  & Time CPU (ms) \\
\midrule
\multirow[c]{2}{*}{2}  
 & $m^{\mathcal{G}}_{PIGP}$ & 75 
& $3.41 \times 10^{-4}$ 
& $99.20 \pm 0.08$ 
& $0.23 \pm 0.01$  \\
 & $m^{\mathcal{G}}_{dG}$ & 7,000 
& $3.02 \times 10^{-4}$ 
& $99.01 \pm 0.26$ 
& $0.22 \pm 0.03$ \\
\midrule
\multirow[c]{2}{*}{3} 
 & $m^{\mathcal{G}}_{PIGP}$  & 100 
& $7.47 \times 10^{-4}$ 
& $96.80 \pm 0.18$ 
& $0.24 \pm 0.03$  \\
 & $m^{\mathcal{G}}_{dG}$ & 5,000
& $9.08 \times 10^{-4}$ 
& $95.88 \pm 0.19$ 
& $0.22 \pm 0.01$ \\
\midrule
\multirow[c]{2}{*}{4} 
 & $m^{\mathcal{G}}_{PIGP}$  & 160 
& $1.48  \times 10^{-3}$ 
& $93.47 \pm 0.47$ 
& $0.27 \pm 0.02$ \\
 & $m^{\mathcal{G}}_{dG}$ &  9,000 
& $1.18 \times 10^{-3} $ 
& $94.61 \pm 0.65$ 
& $0.22 \pm 0.01$ \\
\midrule
\multirow[c]{2}{*}{5} 
 & $m^{\mathcal{G}}_{PIGP}$  & 200 
& $4.63 \times 10^{-3}$ 
& $87.22 \pm 8.66$ 
& $0.26 \pm 0.02$ \\
 & $m^{\mathcal{G}}_{dG}$ & 9,000
& $3.47 \times 10^{-3}$ 
& $87.70 \pm 4.24$ 
& $0.22 \pm 0.01$ \\
\bottomrule
\end{tabular}
\caption{1D BVP: Mean $\pm$ standard deviation of $\alpha_{\mathrm{val}}$, together with the corresponding bounds for the DeepGaLA mean and marginal approximations, across different training set sizes for $d_{\theta}=2$.}
\label{tab:compa1}
\end{table}
 
We additionally present in Figure \ref{bvp_fig:3}, the evaluation times of the marginal approximation using DeepGaLA and PIGP. As we can see, the evaluation time using DeepGaLA remains largely unchanged across different values of $\alpha_{\mathrm{val}}$ as the dimensionality of the parameter space increases. In contrast, the evaluation cost for the PIGP grows with increasing dimension, consistent with the dimension-dependent convergence rates.

\begin{figure}[ht]
    \centering
\includegraphics[width=1\textwidth]{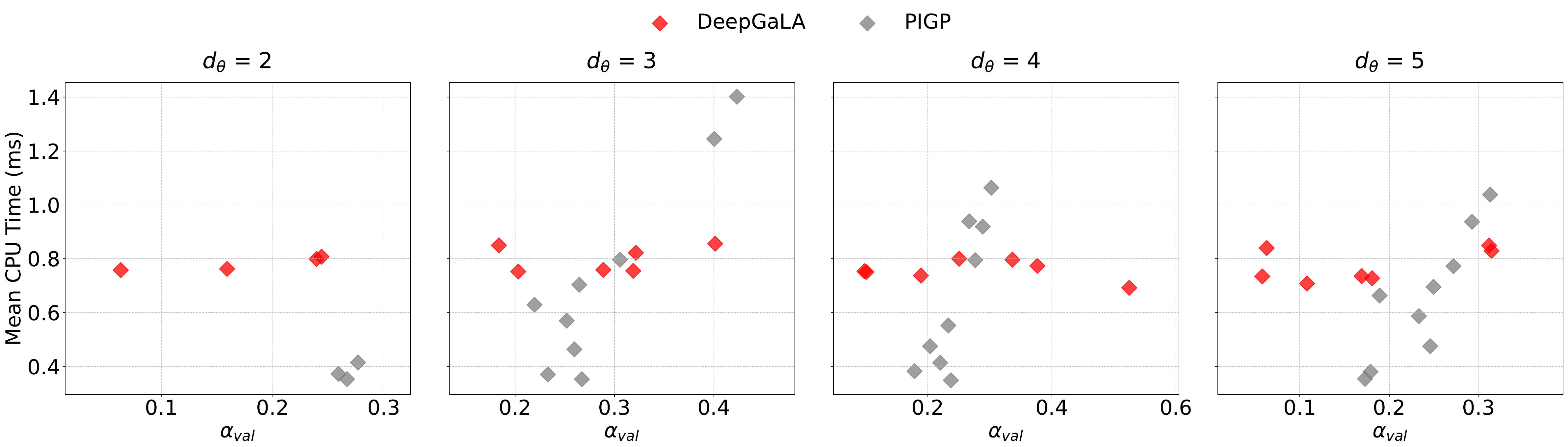}
\caption{1D BVP: Evaluation time in the low-data regime for PIGP and DeepGaLA across different parameter-space dimensions. Each data point corresponds to a different choice of training data size.}
\label{bvp_fig:3}
\end{figure}


\subsection{2D-Elliptic Inverse Problem }

Consider now a two-dimensional elliptic partial differential equation given by:
\begin{equation*}
\begin{split}
    -\nabla \cdot \left( \exp(\alpha(x,\theta))\nabla u(x,\theta) \right) &= 4x_{1}x_{2}, \quad x\in D,\quad \theta \in \mathcal{T},\\
    u(x,\theta) &= 0, \quad \text{on} \quad \partial D
\end{split}
\end{equation*}
where $D = (0,1)^{2}$. The coefficient $\alpha( \cdot,\theta) \in L^{\infty}(D)$ is expanded in the following way:
\begin{equation*}
    \alpha(x,\theta) =  \sum_{n=1}^{d_{\theta}} 
    \sqrt{a_{n}^{2D}}  
    \theta_{n} b_{n}^{2D}(x)  ,
\end{equation*}
where $a_{i}^{2D}=a_{j_{n}}a_{i_{n}} $ and $b_{i}^{2D}(x) = b_{j_{n}}(x_{1}) b_{i_{n}}(x_{2})$ for some $j_{n},i_{n} \in \mathbb{N}$ are obtained as products of the eigenvalues, $a_{j_{n}}$, and eigenfunctions, $b_{j_{n}}$, of the one–dimensional problem such that  $a_{1}>a_{2 }> ,..,>a_{d_{\theta}}$. We assume that the parameters $\theta$ take values in the parameter space $\mathcal{T} = [-4,4]^{d_{\theta}}$. The setup of the inverse problem \eqref{inv_eq} is similar to the previous experiment. We consider once more $\mathcal{G}_{\mathrm{X}}(\theta) = \{ u(x_i,\theta) \}_{i=1}^{d_y}$, $\Gamma = \sigma^{2}I_{d_{y}}$ with $\sigma^{2} = 10^{-4}$, $d_{\theta} \in \{2,3,4,5\}$ and $d_{y} = 6$ observations located inside the domain $D$. The observations are obtained using the FEM on a $75 \times 75$ grid and piece-wise linear basis functions. The MCMC setup follows the configuration of the first experiment, using a uniform prior $U[-4,4]^{d_{\theta}}$ and drawing $2.5 \times 10^{6}$ samples from the posterior distributions. We consider the posterior distribution computed using the FEM, $\pi_{\mathrm{FEM}}$, on a $50 \times 50$ grid as the ground truth.

For this experiment, the DeepGaLA surrogate was constructed with the following setup: we used one hidden layer with 80 neurons and Fourier embeddings with $d_{F} = 80$ and $\sigma_{FF} = 1$. The neural network was trained using \cref{algo1} over $S= 5,000$ epochs with adaptive weighting and various sizes of $\mathcal{D}$. The Laplace approximation was fitted using the same datasets. The mean and marginal posterior distributions obtained by DeepGaLA are presented in Figure \ref{ell2d_fig:1}. This experiment shows results similar to the previous one. Both the mean and marginal approximations converge to the ground truth as the number of training points in $\mathcal{D}$ increases. Additionally, we can again see the benefit of incorporating uncertainty in the neural network in low-data regimes, which helps prevent overconfidence when the predictions diverge from the truth.

\begin{figure}[ht]
\centering
\includegraphics[width=\textwidth]{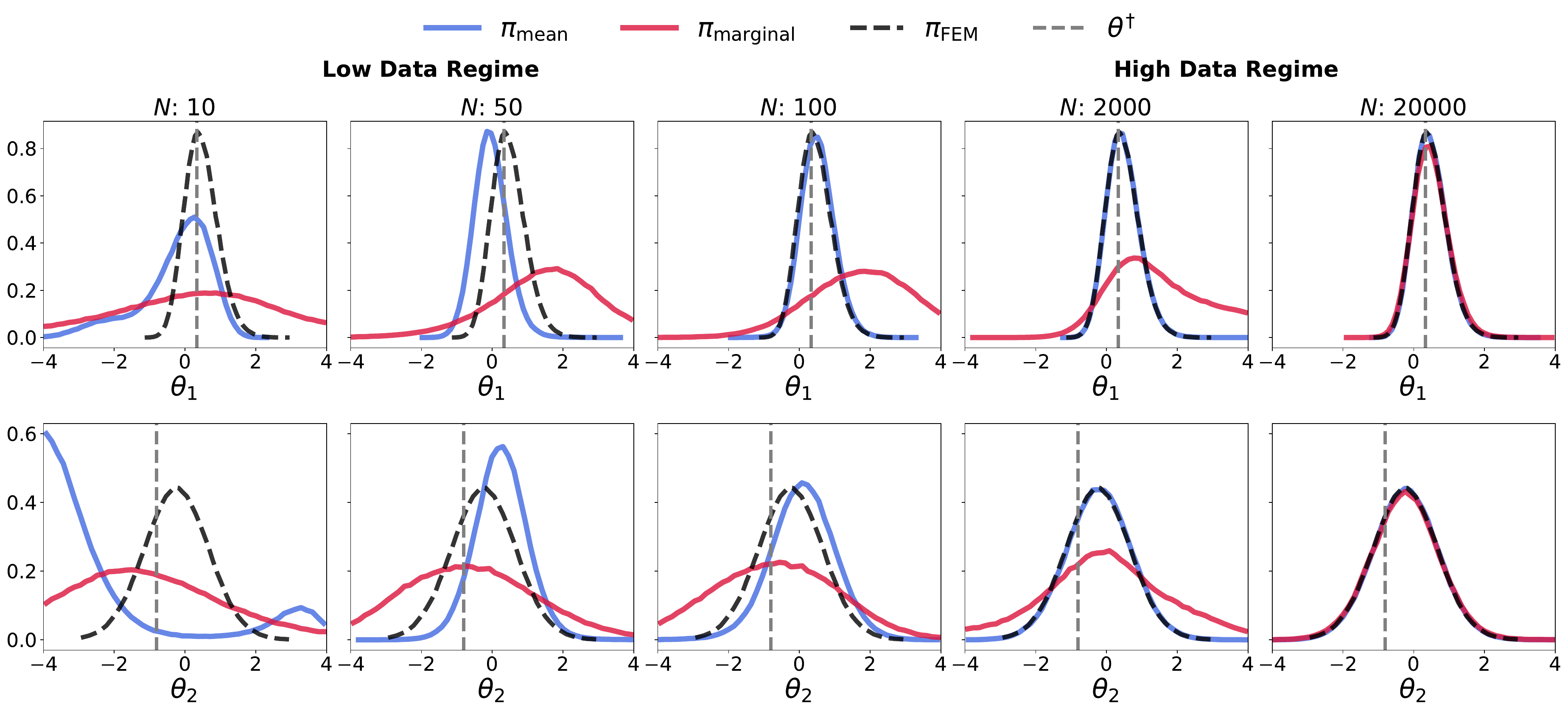}
\caption{Elliptic-2D: Marginal ground-truth posterior distributions and the DeepGaLA mean and marginal approximations for different training set sizes.}
\label{ell2d_fig:1}
\end{figure}

In Table \ref{ell2d_tab:1}, we present the mean values and the standard deviation of $\alpha_{\mathrm{val}}$ and the  mean error bounds, where we have omitted the standard deviations due to their negligible magnitude.  Again, we computed these quantities following the same procedure as in the previous experiment. In particular, $\alpha_{\mathrm{val}}$ was computed using \cref{algo2} by evaluating the finer model $10^{5}$ times, and then partitioning these evaluations into 10 realizations of $\alpha_{\mathrm{val}}$. The error bounds were estimated by using $2.5\times 10^{5}$ samples from the true posterior; this procedure was repeated three times. Moreover, we again used a RW proposal for \cref{algo2} and set the proposal variance $\beta$ to the value that is optimal for $\pi_{\mathrm{mean}}$. This same value was then used to assess both $\pi_{\mathrm{mean}}$ and $\pi_{\mathrm{marginal}}$. As expected, $\alpha_{\mathrm{val}}$ increases and the error bounds decrease as the mean approximation approaches the true posterior. Once again, Table \ref{ell2d_tab:1} shows consistent results with Figure \ref{ell2d_fig:1}, confirming that the marginal approximation is beneficial in the low-data regime, where its greater overlap with the true posterior results in a higher $\alpha_{\mathrm{val}}$. Figure \ref{ell2d_fig:2} further confirms the importance of the marginal approximation in the low-data regime, as $\alpha_{\mathrm{val}}$ is consistently larger for the marginal approximation than for the mean approximation across the parameter-space dimensions shown in the figure.

\begin{table}[ht]
\centering
\begin{tabular}{ccccccc}
\toprule
{$N$} & 
& 10 & 50 & 100 & 2{,}000 & 20{,}000 \\
\midrule
\multirow{2}{*}{$\alpha_{\mathrm{val}}$ (\%)} 
& $\pi_{\mathrm{mean}}$ 
    & $15.79 \pm 0.62$
    & $30.74 \pm 0.63$
    & $76.74 \pm 0.59$
    & $98.89 \pm 0.11$
    & $99.21 \pm 0.09$ \\
& $\pi_{\mathrm{marginal}}$
    & $22.92 \pm 0.37$
    & $33.64 \pm 0.54$
    & $37.62 \pm 0.44$
    & $51.06 \pm 0.57$
    & $95.68 \pm 0.11$ \\
\midrule
\multirow{2}{*}{$\|\cdot\| $}
& $\pi_{\mathrm{mean}}$
    & $5.15  \times 10^{-2}$
    & $5.23 \times 10^{-2}$
    & $4.66 \times 10^{-2}$
    & $4.59 \times 10^{-2}$
    & $4.60 \times 10^{-2}$ \\
& $\pi_{\mathrm{marginal}}$
    & $5.46 \times 10^{-1}$
    & $1.37\times 10^{-1} $
    & $1.21 \times 10^{-1}$
    & $4.48 \times 10^{-2}$
    & $3.29 \times 10^{-2}$ \\
    \bottomrule
\end{tabular}
\caption{Elliptic-2D: Mean $\pm$ standard deviation  of $\alpha_{\mathrm{val}}$ and corresponding bounds for the DeepGaLA mean and marginal approximations across different training set sizes.}
\label{ell2d_tab:1}
\end{table}

\begin{figure}[ht]
\centering
\includegraphics[width=1\textwidth]{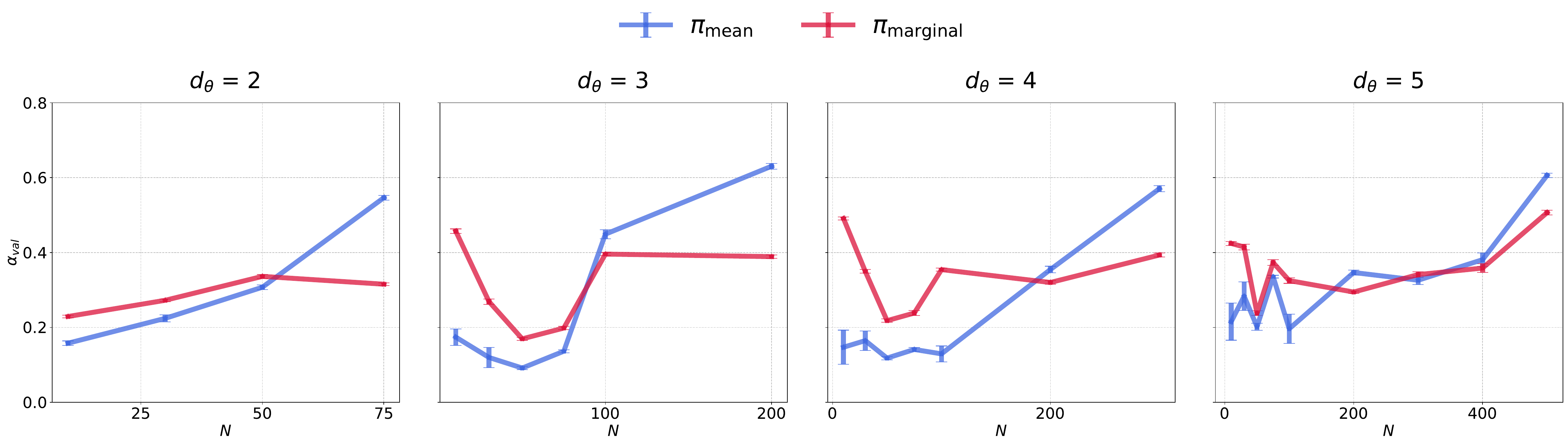}
\caption{Elliptic-2D: $\alpha_{\mathrm{val}}$ for the mean and marginal approximations across different values of $d_{\theta}$.}
\label{ell2d_fig:2}
\end{figure}

\subsubsection{Comparison with PIGPs}

We next perform a comparison between the DeepGaLA and PIGP surrogate in the high-data regime based on the mean approximate posterior.  For the PIGP surrogate, we  employed a Matérn kernel of order $\tfrac{5}{2}$ for both the spatial and parameter spaces, with $d_f = 10$, $d_g = 20$. Table \ref{ell2d_tab:2} shows the results of this comparison. As we can see, the bounds across the surrogate models are very similar, and, as expected, the corresponding values of $\alpha_{\mathrm{val}}$ are also close. As in the previous experiment, DeepGaLA exhibits slightly faster evaluation times. This could be further improved by identifying a smaller architecture, once again highlighting the computational advantage of the deep surrogate.

\begin{table}[ht]
\centering
\begin{tabular}{ c c c c c c }
\toprule
\multicolumn{6}{c}{Elliptic Inverse Problem: Mean Approximation} \\
\midrule
\multirow{3}{*}{$d_{\theta}$} & 
\multirow{3}{*}{S} & 
\multirow{3}{*}{$N$} &  
\multirow{3}{*}{$\|\cdot\|_{L_{\mu^{y}}}$} & 
\multirow{3}{*}{$\alpha_{\mathrm{val}}$(\%)} & 
{Mean } \\
 &  &  &  &  & Likelihood Eval.  \\
  &  &  &  &  & Time CPU (ms) \\
\midrule
\multirow[c]{2}{*}{2}  
 & $m^{\mathcal{G}}_{\mathrm{PIGP}}$ & 150 
 & $4.60 \times 10^{-2}$ 
 & $98.76 \pm 0.10$ 
 & $0.27 \pm 0.01$\\
 & $m^{\mathcal{G}}_{\mathrm{dG}}$ & 2,000 
 & $4.59 \times 10^{-2}$ 
 & $98.89 \pm 0.11$ 
 & $0.23 \pm 0.02$\\
\midrule
\multirow[c]{2}{*}{3} 
 & $m^{\mathcal{G}}_{\mathrm{PIGP}}$  & 150 
 & $3.67 \times 10^{-2}$ 
 & $96.32 \pm 0.13$
 & $0.27 \pm 0.03$\\
 & $m^{\mathcal{G}}_{\mathrm{dG}}$ & 2,000 
 & $3.65 \times 10^{-2}$ 
 & $95.95 \pm 0.21$ 
 & $0.23 \pm 0.01$\\
\midrule
\multirow[c]{2}{*}{4} 
 & $m^{\mathcal{G}}_{\mathrm{PIGP}}$  & 150 
 & $3.44 \times 10^{-2}$ 
 & $94.95 \pm 0.30$ 
 & $0.26 \pm 0.02$\\
 & $m^{\mathcal{G}}_{\mathrm{dG}}$ &  2,000 
 & $3.44 \times 10^{-2}$ 
 & $94.98 \pm 0.23$ 
 & $0.23 \pm 0.01$\\
\midrule
\multirow[c]{2}{*}{5} 
 & $m^{\mathcal{G}}_{\mathrm{PIGP}}$  & 170 
 & $3.09 \times 10^{-2}$ 
 & $85.88 \pm 0.56$
 & $0.29 \pm 0.01$\\
 & $m^{\mathcal{G}}_{\mathrm{dG}}$ & 5,000
 & $3.10 \times 10^{-2}$ 
 & $85.19 \pm 0.43$
 & $0.23 \pm 0.01$\\
\bottomrule
\end{tabular}
\caption{Elliptic-2D: Comparison of PIGP and DeepGaLA across multiple metrics and parameter-space dimensions.}
\label{ell2d_tab:2}
\end{table}

Finally, Figure \ref{ell2d_fig:3} shows the evaluation time of the marginal approximation as a function of $\alpha_{\mathrm{val}}$ for both DeepGaLA and PIGP. For this particular experiment, the PIGP surrogate achieves faster evaluation times for comparable values of $\alpha_{\mathrm{val}}$. Nevertheless, as the dimensionality of the parameter space increases, the evaluation time required by PIGP to attain reasonable values of $\alpha_{\mathrm{val}}$ grows significantly. This behavior is not observed for DeepGaLA, whose evaluation time remains essentially constant, as it is driven by the network architecture rather than by the size of the training data.

\begin{figure}[ht]
    \centering
\includegraphics[width=1\textwidth]{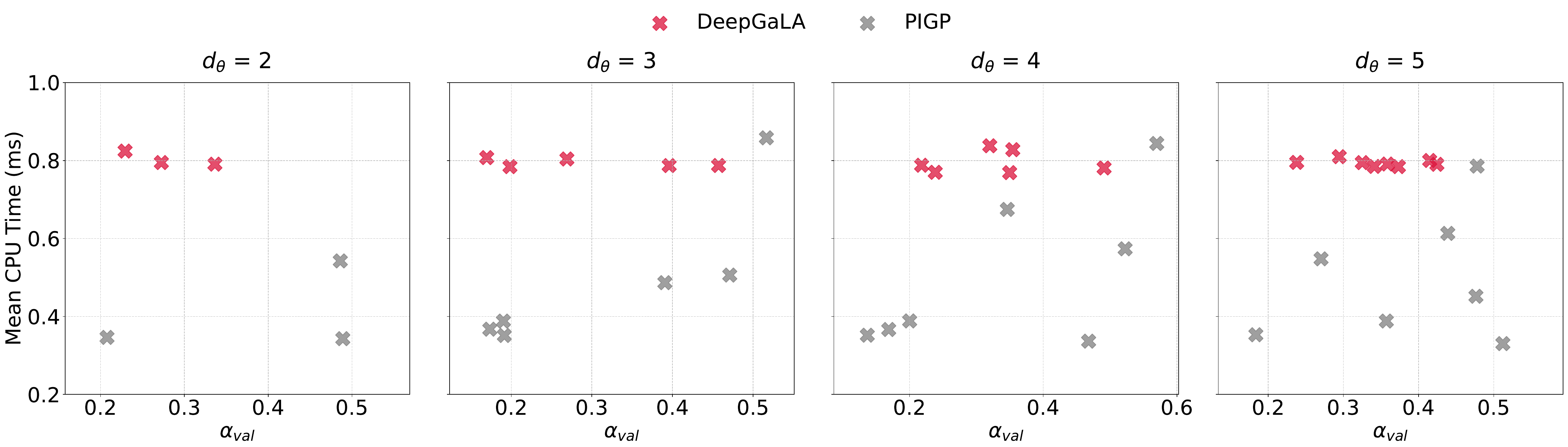}
\caption{Elliptic-2D:  Evaluation time in the low-data regime for PIGP and DeepGaLA across different parameter-space dimensions. Each data point corresponds to a different choice of training data size.}
\label{ell2d_fig:3}
\end{figure}

\subsection{Navier-Stokes Inverse Problem}
Let us now consider the vorticity-stream formulation of the Navier-Stokes equations given the following equation:
\begin{equation*}\label{eq:nv} 
\begin{split} 
 \frac{\partial \omega(z,t,\theta)}{\partial t} + U(z,t,\theta) \cdot \nabla \omega(z,t,\theta) &= \nu \nabla^2 \omega(z,t,\theta) + f(z), \quad (z,t) \in D, \quad  \theta \in \mathcal{T},\\
 \nabla^2 \psi(z,t,\theta) &= -\omega(z,t,\theta), \quad (z,t) \in D, \\
 w(z, 0,\theta) &= w_0(z,\theta), \quad z \in D_{s},
\end{split} 
\end{equation*}
where $D = D_{s} \times (0, T)$, $D_{s}=[0, 2\pi]^{2}$ and  $T \in \mathbb{R}^{+}$. The functions $\omega$ and $\psi$ are known as the vorticity and stream function, respectively, and $f(z) = 0.001(\sin(2\pi(z_1 + z_2)) + \cos(2\pi(z_1 + z_2)))$ represents the vorticity form of the forcing term. The term $U(z,t,\theta) = (u(z,t,\theta), v(z,t,\theta))$ is the velocity, with components derived from the stream function as, $u(z,t,\theta) = \partial \psi(z,t,\theta) /\partial z_{2}, \quad v = -\partial \psi(z,t,\theta)/\partial z_{1}$.
The initial condition is given by a random field $\mathcal{N}(0, 7^{3/2} (- \Delta + \tau^{2} I)^{-\alpha/2})$, with periodic boundary conditions. Under this setup, the random field has a Karhunen-Loève expansion given by:
\begin{equation*} \label{eq:kl} w_0(z,\theta) = 7^{3/2} \sum_{n=1}^{\frac{d_{\theta}}{2}} \frac{1}{\mu_n^{\alpha / 2}} \left( \phi_{1,n} (z)\theta_{1,n}, + \phi_{2,n} (z)\theta_{2,n} \right), 
\end{equation*}
where $\mu_n = \tau^{2} + n_{z_1}^{2} + n_{z_2}^{2}$ are the eigenvalues  and $\phi_{1,n}(z) = \cos(n_{z_1} z_1 + n_{z_2} z_2)/\sqrt{2\pi}$ and $\phi_{2,n}(z) = \sin(n_{z_1} z_1 + n_{z_2} z_2)/\sqrt{2\pi}$ are the eigenfunctions of the shifted Laplacian $- \Delta + \tau^{2} I$, for $n_{z_1}, n_{z_2} \in \mathbb{Z}$. We consider $\theta \in [-2,2]^{d_{\theta}}$, where $d_{\theta}=2m$ for some $m \geq 1$ so that the parameters are arranged in pairs.

The inverse problem \eqref{inv_eq} is set up such that $\mathcal{G}_{\mathrm{X}}(\theta) = \{ u(z_i, T,\theta) \}_{i=1}^{d_y}$, and $\Gamma = \sigma^{2}I_{d_{y}}$, with $\sigma^{2} = 10^{-2}$. We consider $d_{\theta} = 2$ and $d_y = 6$ observations  in the domain $D_{s}$ at time $T = 2$. The observations were obtained using a pseudo-spectral method combined with the Crank–Nicolson scheme for time integration, with parameters $\nu = 10^{-2}$, $\tau = \sqrt{2}$, and $\alpha = 5$.  For the MCMC, we used the RWMH algorithm with a uniform prior $\mathrm{U}[-2,2]^{d_\theta}$ and drew $2.5 \times 10^6$ samples from the posterior distribution. The DeepGaLA surrogate was constructed as follows: 2 hidden layers with 300 neurons per layer, a Fourier embedding with embedding scale set to 1, $\sigma_{\mathrm{FF}}^2 = 1$. A periodic embedding was used to enforce periodic boundary conditions. The neural network was trained using \cref{algo1} with different sizes of $\mathcal{D}$ and $S = 5,000$ epochs, updating the weights $\{ \lambda_i \}_{i=1}^{N}$ every 250 epochs. To fit the Laplace approximation, we used the same training data.

We now present in Figure \ref{nv_fig:1} the mean and marginal approximate posterior distributions obtained with DeepGaLA, together with the true values of $\theta$. We observe that $\pi_{\mathrm{marginal}}$ converges toward $\pi_{\mathrm{mean}}$ as the size of the training dataset $\mathcal{D}$ increases. At the same time, $\pi_{\mathrm{mean}}$ becomes increasingly concentrated around the true parameter values as $\mathcal{D}$ grows. Nevertheless, without access to the true posterior, it is not possible to assess how close these posterior approximations are to the ground truth; in this case, computing the true posterior directly was computationally prohibitive. We employ \cref{algo2} to assess this, using the previously described pseudo-spectral method as the fine model and a RW proposal with a proposal variance chosen as in the previous experiments. The fine model was evaluated $5 \times 10^{3}$ times and then the resulting evaluations were partitioned into 10 realizations of $\alpha_{\mathrm{val}}$. The mean values of $\alpha_{\mathrm{val}}$, along with their standard deviations, are reported in Table \ref{nv_tab:1}. From these results, we can conclude that the mean approximation converges to the ground truth posterior as the amount of training data increases. In contrast, and consistent with our previous findings, the marginal approximation plays a crucial role in the low-data regime by mitigating model overconfidence, which is evident in the mean approximation being far from the true posterior in this setting.

\begin{figure}[ht]
\centering
\includegraphics[width=\textwidth]{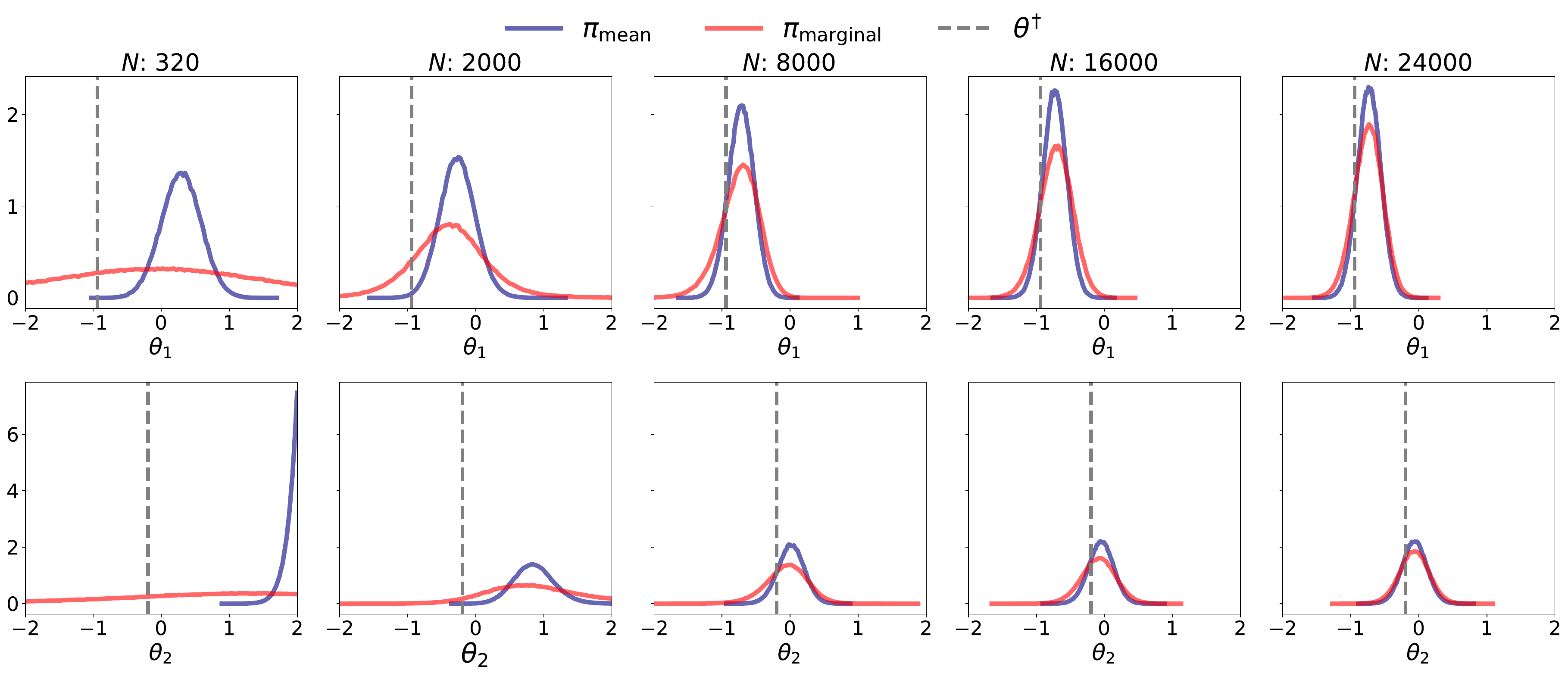}
\caption{NV: Mean and marginal approximations of the posterior distributions using DeepGaLA for different training set sizes.}
\label{nv_fig:1}
\end{figure}

\begin{table}[ht]

\centering
\begin{tabular}{ccccccc}
\toprule
{$N$} & 
& 320 & 2{,}000  & 8{,}000  & 16{,}000 & 24{,}000 \\

\midrule
\multirow{2}{*}{$\alpha_{\mathrm{val}}$ (\%)} 
& $\pi_{\mathrm{mean}}$ 
    & $12.66 \pm 1.94$
    & $11.28 \pm 1.11$
    & $67.26 \pm 1.40$
    & $75.10 \pm 1.55$
    & $75.04 \pm 2.10$ \\
& $\pi_{\mathrm{marginal}}$
    & $33.68 \pm 2.38$
    & $20.30 \pm 2.17$
    & $57.02 \pm 2.31$
    & $62.50 \pm 2.60$
    & $73.72 \pm 0.94$ \\
\bottomrule
\end{tabular}
\caption{NV: Mean $\pm$ standard deviation of $\alpha_{\mathrm{val}}$ for the DeepGaLA mean and marginal approximations across different training set sizes.}
\label{nv_tab:1}
\end{table}

Finally, Figure \ref{nv_fig:3} shows the DeepGaLA predictions evaluated at the mode of $\pi_{\mathrm{mean}}$. In the low-data regime, this mode differs significantly from the ground truth. In contrast, as more training data are used, the mode of $\pi_{\mathrm{mean}}$ moves closer to the ground truth. Figure \ref{nv_fig:4} displays the absolute error of the DeepGaLA prediction with respect to the ground truth, together with the model uncertainty. In the low-data regime, the error and the uncertainty are of similar magnitude. In the high-data regime, this relationship is still observed, but at an overall smaller scale. Moreover, although the alignment is not perfect, the DeepGaLA uncertainty tends to concentrate in regions where the prediction error is large, in both the low- and high-data regimes. The remaining mismatch can be attributed to the fact that DeepGaLA treats only the last layer as stochastic. Using a fully Bayesian neural network could potentially improve the uncertainty estimates, but at a significantly higher computational cost.

\begin{figure}[ht]
\centering
\includegraphics[width=0.8\textwidth]{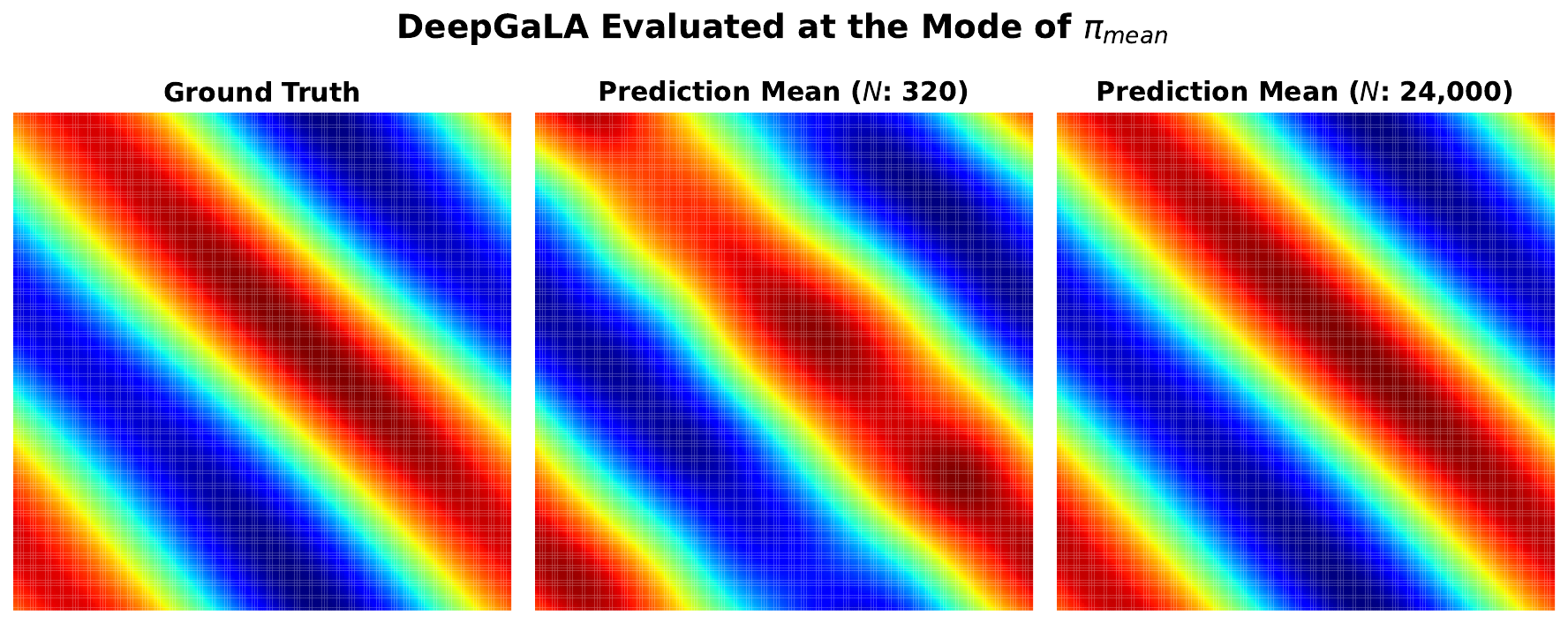}
\caption{NV: Prediction of the Deep GaLA evaluated at the mode of the $\pi_{\mathrm{mean}}$. 
}
\label{nv_fig:3}
\end{figure}

\begin{figure}[ht]
\centering
\includegraphics[width=0.8\textwidth]{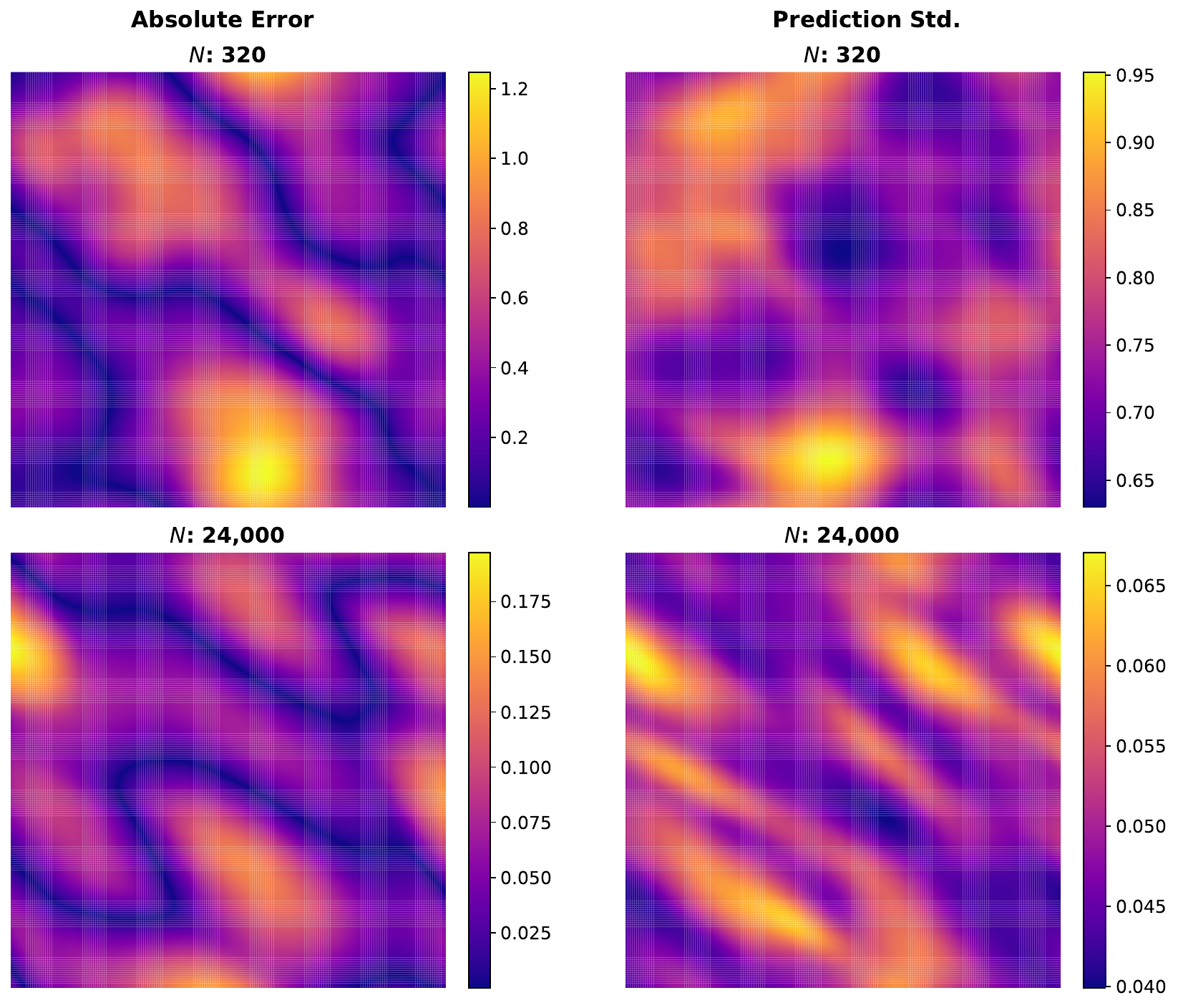}
\caption{NV: Absolute error and standard deviation of the DeepGaLA predictions in the low and high data regimes.}
\label{nv_fig:4}
\end{figure}

\section{Discussion and Conclusion}\label{sec:6}

Inverse problems in differential equations have significant computational challenges, particularly in a Bayesian setting where millions of evaluations of the forward model are required for the evaluation of the posterior. To mitigate this cost, we investigated the use of neural networks as surrogate models. This work provides two main contributions. First, we introduced a neural surrogate equipped with a Laplace approximation, called DeepGaLA, which offers an efficient way to approximate the forward model through a random neural surrogate. Second, we proposed using Delayed-Acceptance MCMC as a practical tool to assess the accuracy of any surrogate posterior approximations when the true posterior is unavailable or prohibitively expensive to compute.

DeepGaLA enables to approximate the posterior distribution through either the mean or marginal approximation. Our numerical experiments show that the mean approximation achieves accuracy comparable to the PIGP surrogate, and in some cases, DeepGaLA offers faster likelihood evaluations. In low-data regimes, the marginal approximation becomes particularly valuable, since it appropriately reflects the error in the
approximation of the neural network, thus avoiding being overconfident around the wrong solution. We have also demonstrated that using DA–MCMC as a posterior validator provides a rigorous and effective way to assess surrogate accuracy. This approach is especially useful when the ground-truth posterior is unavailable or computationally expensive. The empirical results agree with the theoretical properties of the validation metric derived in our framework.

A key advantage of neural-network surrogates over PIGPs is their ability to scale to nonlinear PDEs, whereas the PIGP framework is inherently limited to linear operators. Moreover, as the dimensionality of the parameter space increases, PIGP requires more training data to achieve a given level of accuracy. In such settings, obtaining  additional data can become challenging, and the evaluation speed is also adversely affected. In contrast, the evaluation speed of DeepGaLA depends primarily on the network architecture and is largely independent of both the size of the training dataset and the dimensionality of the parameter space.

\section*{Acknowledgements}
This work has made use of the resources provided by the Edinburgh Compute and Data Facility (ECDF) (http://www.ecdf.ed.ac.uk/).
ALT was partially supported by EPSRC grants EP/X01259X/1 and EP/Y028783/1. 
AV is supported by the "UNREAL: Unified Reasoning Layer for Trustworthy ML" project (EP/Y023838/1) selected by the ERC and funded by UKRI EPSRC.

\appendix

\section{Parameter optimization}
\label{appendix:parameters}

Both surrogates studied in this work, Physics Informed Gaussian process and DeepGaLA, have hyperparameters that must be selected carefully. On the one hand, the hyperparameters of a Gaussian process depend on the kernel used. For instance, the Matern 5/2 and Gaussian kernels have the parameters $\sigma_{dG}^{2}$ and $l$, which control the magnitude of the covariance and the length-scale at which the entries of the kernel are correlated. On the other hand, from Equation \eqref{hessian_eq}, we see that DeepGaLA has two hyperparameters, $\gamma$ and $\sigma_{dG}^{2}$, which encode prior beliefs over the final layer parameters and the noise of the fictional data. To determine these hyperparameters, we use the marginal log-likelihood in both cases. For the Gaussian process, we have:

\begin{equation*}
Z_{\text{GP}} = -\frac{1}{2} u(\Theta)^{T} K(\Theta, \Theta)^{-1} u(\Theta) 
- \frac{1}{2} \log |K(\Theta, \Theta)| 
- \frac{N}{2} \log (2\pi),
\end{equation*}

where $u(\Theta)$ is the vector of solutions in the training set $\Theta$. For DeepGaLA, we have:

\begin{equation*}
Z_{f_{W}^{a}} = -\frac{1}{2 \sigma_{dG}^{2}} \mathcal{L}(\mathcal{D}; \mathrm{W}_{\text{MAP}})-\frac{1}{2 } \mathrm{W}_{\text{MAP}}^{T} \gamma^{-2}  \mathrm{W}_{\text{MAP}}
- \frac{N}{2} \log (2\pi \sigma_{dG}^{2}) 
- \frac{1}{2} (\log |\Lambda| -\log |\gamma^{-2}|).
\end{equation*}

Following the work of \cite{Rasmussen2004, immer2021scalable}, we maximize the marginal likelihood:

\begin{equation*}
\arg \max_{\Xi} Z_{S},
\end{equation*}
where $\Xi$ is the set of hyperparameters to optimize and $Z_{S} = \{ Z_{\text{GP}}, Z_{f_{W}^{a}} \}$.

\section{Cost of Evaluation Analysis}\label{appendix:costeval}

Consider the architecture of the neural network $f_{a}^\mathrm{ENC}(\cdot;\mathrm{W})$ given by Equation \eqref{eq:nn_composition} and \eqref{eq:modiact}, and assume that the cost of evaluating $\sigma(\cdot)$ is one unit. Then, the cost of evaluating the encoders $V$ and $U$ is $2(d_n d_0 + d_n).$ The cost of evaluating Equation \eqref{eq:modiact} can be derived as follows. First, the cost of computing $\sigma \circ C_l(y)$ is $2(d_n d_l + d_n), \quad \text{for } l = 0, \dots, L-1.$ Next, the term $(\sigma \circ C_l(y) )\odot U(z)$ has a cost of $d_n$, while $(1 - \sigma \odot C_l(y)) \odot V(z)$ has a cost of $2d_n$. Therefore, the input layer has a cost of $2 d_n d_0 + 5 d_n$, while each hidden layer has a cost of $5d_n + 2 d_n^2.$ The final layer has a cost of $2 d_{L}d_n + d_L$. Taking into account that the surrogate approximates the problem in question in $d_{y}$ points. Then, the total cost of evaluating the mean of DeepGaLA is $d_{y}( 2 d_{L}d_n + d_L+ (L - 1)(5 d_n + 2 d_n^2) + 2 d_n  d_{0} + 5d_n + 4(d_n d_{0} + d_n)) =  d_{y}(2 d_{L}d_n + d_L + (L-1)(5d_{n} + 2d_{n}^{2}) + d_{n}(6 d_{0}+ 9)).$

Computing the variance of DeepGaLA requires evaluating the activations of the last hidden layer, which in turn requires evaluating the entire network. This can be efficiently obtained through a forward pass in PyTorch. Therefore, we assume that the cost consists of the cost of evaluating the mean plus an additional cost of $2 d_n^2 + 2 d_n$ for each $\Phi^{(L-1)}(z_{i};\mathrm{W}_{\text{MAP}})^{T}\Lambda \Phi^{(L-1)}(z_{j};\mathrm{W}_{\text{MAP}})$ for $i,j=1,...,d_{y}$. Therefore, the total cost of prediction with DeepGaLA, including uncertainty quantification, is $d_{y}(2 d_{L}d_n + d_L + (L-1)(5d_{n} + 2d_{n}^{2}) + d_{n}(6 d_{0}+ 9)) + 2d_{y}^{2}(d_n^2 + d_n)$. 

On the other hand, consider the cost of evaluating the PIGP, assuming $N$ training solutions. We assume that the cost of evaluating the kernel at a test point scales as $\mathcal{O}(N)$. Under this assumption, the cost of computing the predictive mean is $2 d_y (d_y + d_f + d_g) N + 3N.$  To compute the predictive variance of the PIGP, one must evaluate the second equation in \eqref{eq:pigp_pred}, which involves computing $K(\Theta, \Theta)^{-1}
\begin{bmatrix}
K_{uu}(\theta', \Theta) \\[2pt]
K_{ug}(\theta', \Theta) \\[2pt]
K_{uf}(\theta', \Theta)
\end{bmatrix}$, this to ensure numerical stability.
This operation incurs a computational cost of $\mathcal{O}\big((d_y + d_f + d_g)^3 N^3\big).$ Then the resulting matrix is multiplied by a matrix of size
$d_y \times N(d_y + d_g + d_f)$, which has a cost of
$2 d_y^2 N(d_y + d_g + d_f)$. The cost of forming the last matrix is $d_y^2$, and the subtraction of the matrices adds another $d_y^2$. Therefore, the cost of computing the variance is
$(d_y + d_f + d_g)^3 N^3
+ 2 d_y^2 N(d_y + d_g + d_f)
+ 2 d_y^2.$

\newpage
\bibliographystyle{abbrv}
\bibliography{references}  






\end{document}